\documentclass{article}

\usepackage{microtype}
\usepackage{graphicx}
\usepackage{subcaption}
\usepackage{booktabs} 

\usepackage{hyperref}

\usepackage[preprint]{icml2026}

\usepackage{amsmath}
\usepackage{amssymb}
\usepackage{mathtools}
\usepackage{amsthm}
\usepackage{makecell}
\usepackage{cancel}

\usepackage[capitalize,noabbrev]{cleveref}

\theoremstyle{plain}
\newtheorem{theorem}{Theorem}[section]

\newtheorem{lemma}[theorem]{Lemma}

\theoremstyle{definition}

\theoremstyle{remark}

\usepackage[textsize=tiny]{todonotes}

\icmltitlerunning{Incorruptible Neural Networks: Training Models that can Generalize to Large Internal Perturbations}

\begin{document}

\twocolumn[
  \icmltitle{Incorruptible Neural Networks: Training Models that can Generalize to Large Internal Perturbations}



  \icmlsetsymbol{equal}{*}

  \begin{icmlauthorlist}
    \icmlauthor{Philip Jacobson}{ca}
    \icmlauthor{Ben Feinberg}{abq}
    \icmlauthor{Suhas Kumar}{ca}
    \icmlauthor{Sapan Agarwal}{ca}
    \icmlauthor{T. Patrick Xiao}{equal,abq}
    \icmlauthor{Christopher Bennett}{equal,abq}
  \end{icmlauthorlist}

  \icmlaffiliation{ca}{Sandia National Laboratories, 7011 East Ave. Livermore, CA 94550, United States of America}
  \icmlaffiliation{abq}{Sandia National Laboratories, 1515 Eubank SE. Albuquerque, NM 87185, United States of America}

  \icmlcorrespondingauthor{Philip Jacobson}{pljacob@sandia.gov}

  \icmlkeywords{Machine Learning, ICML}

  \vskip 0.3in
]



\printAffiliationsAndNotice{\icmlEqualContribution}

\begin{abstract}
Flat regions of the neural network loss landscape have long been hypothesized to correlate with better generalization properties. A closely related but distinct problem is training models that are robust to internal perturbations to their weights, which may be an important need for future low-power hardware platforms.
In this paper, we explore the usage of two methods, sharpness-aware minimization (SAM) and random-weight perturbation (RWP), to find minima robust to a variety of random corruptions to weights. We consider the problem from two angles: generalization (how do we reduce the noise-robust generalization gap) and optimization (how do we maximize performance from optimizers when subject to strong perturbations). First, we establish, both theoretically and empirically, that an over-regularized RWP training objective is optimal for noise-robust generalization. For small-magnitude noise, we find that SAM's adversarial objective further improves performance over any RWP configuration, but performs poorly for large-magnitude noise. We link the cause of this to a vanishing-gradient effect, caused by unevenness in the loss landscape, affecting both SAM and RWP. Lastly, we demonstrate that dynamically adjusting the perturbation strength to match the evolution of the loss landscape improves optimizing for these perturbed objectives.
\end{abstract}

\section{Introduction}
Optimizing deep neural network models in order to locate flat minima in the loss landscape has long been a problem of interest to researchers, driven by the theory that flat minima correlate with better generalization on unseen data \cite{Keskar2016,DR17,jiang2019fantasticgeneralizationmeasures}. From this line of thought have emerged several modified methods for optimization that are explicitly designed to locate flat minima. The most prominent of these, sharpness-aware minimization (SAM), uses a one-step adversarial (i.e. \textit{worst-case}) perturbation along the direction of the gradient to ensure low loss over a finite volume of the loss landscape, and has been demonstrated to successfully improve generalization in a wide array of settings \cite{foret2021sharpnessaware}. A lesser-known alternative, known as random-weight perturbation (RWP), has likewise been studied as a potential mechanism for finding flat minima, but in contrast to SAM, has failed to show consistent generalization benefit \cite{bisla2022lowpassfilteringsgdrecovering,li2024revisiting}.

While flat minima are primarily studied to improve generalization, they are also relevant to finding minima that are robust to \textit{weight-space} perturbations. Although high-fidelity digital hardware has minimized this issue, the growing demand for compute has driven interest in analog neural network accelerators as energy-efficient alternatives to digital processors \cite{Sebastian2020}. Of particular concern to analog in-memory computing (AIMC) solutions are irreducible analog hardware errors, which induce accuracy-degrading effects. These errors arise from many sources, such as circuit noise, process variations, and component drift. This paper focuses particularly on errors that perturb the physical conductances of memory devices, which manifest numerically as random perturbations to the model's parameters. Previous analog computing literature has identified this problem and proposed solutions for specific hardware configurations, relying on empirically-measured device error profiles to induce robustness for an exact deployment scenario \cite{gokmen2019,Rasch2023}. However, these works all adopt the same assumption that the optimal training objective should exactly match the intended inference noise setting. We pose two natural questions in response: first, will optimizing toward the expected test-time noise distribution necessarily lead to the best-generalizing minimum? Secondly, we question whether or not random perturbations, in any form, are ideal for finding noise-robust minima, or if a more deliberate perturbation (as in the case of SAM) is in fact superior? 


We structure this paper as a study of the two key aspects underlying noise robustness: generalization, which we define as a model's ability to perform well on unseen data when exposed to weight noise, and optimization, specifically in the context of SAM and RWP. To understand noise-robust generalization, we derive a PAC-Bayes generalization bound that predicts the existence of tighter generalization bounds when training noise magnitude is \textit{larger} than test noise magnitude. Empirical experiments with RWP confirm that \textit{over-regularizing} the training objective consistently converges to better generalizing minima, consistent with the theoretical model. Extending our empirical studies to SAM, we observe that for small-strength noise, SAM finds minima with superior generalization to those of any RWP configuration. However, during training, large perturbations create a vanishing gradient effect, hindering the convergence of SAM, and to a lesser degree, RWP. We find that this effect can be mitigated through the use of dynamic perturbation schedules, which allow for training perturbations to better respect the evolving loss landscape.

In summary, our key contributions are as follows:

\begin{itemize}
    \item We derive a novel generalization bound predicting that over-regularized RWP converges to more noise-robust minima, and empirically show this to be true.
    \item We show that SAM converges to more noise-robust minima for small-strength noise, but suffers from a vanishing gradient effect for larger perturbations.
    \item We demonstrate that gradually ramping perturbation strength during training to match the evolution of the training trajectory effectively mitigates the vanishing-gradient effect, improving convergence of both methods.
    \item We validate our results using simulations of analog hardware.
\end{itemize}

\section{Related Work}
\subsection{Corruption-Robust Neural Networks}
Many previous works have studied the robustness of neural networks to input corruptions, either adversarial \cite{goodfellow2015explainingharnessingadversarialexamples,Mustafa_2019_ICCV,Yan2018} or random (common corruptions) \cite{rusak2020,fang2023,kar20223d,mintun2021,Guo_2023_CVPR,PRIME2022}. However, few works have previously considered the setting of random noise corruptions to model weights. In the realm of AIMC, several prior works have examined the resiliency of neural network inference accuracy to various sources of device noise \cite{yang2022,gokmen2019,kariyappa2021,Rasch2023,xiao2023}. The most comprehensive of these, Rasch et. al., applies regularization through injecting hardware noise emulating the target deployment platform during training. However, the methods these works propose are either application-specific or focus on hardware mitigations, and neglect to develop a general framework for understanding noise robustness.

\subsection{Sharpness-Aware Minimization}
SAM has recently emerged as a popular approach for finding flat, well-generalizing minima in the training landscape \cite{foret2021sharpnessaware}. In contrast to standard SGD-based optimization methods, SAM first perturbs the model using a gradient ascent step, approximating a local maximization, followed by a descent step from the perturbed point. Since its inception, an array of modified variants aiming to improve SAM's generalization ability \cite{RSAM,li2024friendly,kwon2021asam,kim2022fishersaminformationgeometry} or training efficiency \cite{du2022efficientsharpnessawareminimizationimproved,jiang2023adaptivepolicyemploysharpnessaware,liu2022efficientscalablesharpnessawareminimization,Mi2022,SAMPA} have been proposed. 

Several works have closely examined SAM's properties to better understand its empirically observed success. Andriushchenko et. al. studies the convergence of SAM through theoretically demonstrating the implicit biases of SAM when applied to a diagonal linear network \cite{andriushchenko2022understandingsharpnessawareminimization}. Wen et. al. more precisely demonstrates the measure of sharpness that the minibatch variation of SAM regularizes \cite{wen2023doessharpnessawareminimizationminimize}. Khanh et. al. provide a more comprehensive analysis of the convergence properties of SAM \cite{khanh2024}. Baek et. al. studies SAM's ability to induce robustness to label noise, demonstrating a positive effect from regularization of the network Jacobian \cite{baek2024samrobustlabelnoise}.

\subsection{Random Weight Perturbation}
Randomly perturbing weights during optimization has long been known as a straightforward approach to regularizing neural networks during training \cite{an1996}. Neelankantan et. al. demonstrate that adding noise to gradients during training improves generalization, achieving a similar effect to residual connections in deep networks \cite{neelakantan2015addinggradientnoiseimproves}. Zhou et. al. empirically show that weight noise can guide the optimization route out from spurious local minima \cite{zhou19d}. Bisla et. al. propose low-pass filtering the loss function (practically implemented through sampling weight noise from a Gaussian distribution) as a means to smoothing the loss landscape and hence improve generalization \cite{bisla2022lowpassfilteringsgdrecovering}. Li et. al. propose a modified form of RWP which introduces filter-wise perturbations based on the historical gradient \cite{li2024revisiting}. M\"{o}llenhoff et. al. establish a theoretical connection between SAM and RWP, in which SAM can be recovered through an optimal relaxation of the randomly perturbed objective \cite{möllenhoff2023samoptimalrelaxationbayes}. 

\section{Preliminaries}
Consider a training dataset $\mathcal{S} = \{(\mathbf{x_i},\mathbf{y_i})\}_{i=1}^n$ drawn i.i.d from a data distribution $\mathcal{D}$. Let $f(\mathbf{x},\mathbf{w})$ be a neural network model with trainable parameters $\mathbf{w} \in \mathbb{R}^k$. Given some loss function on individual samples $l(f(\mathbf{x_i},\mathbf{w}),\mathbf{y_i}) \in \mathbb{R}^+$, we define the empirical loss on the training dataset to be $L_{\mathcal{S}}(\mathbf{w}) = \frac{1}{n}\sum_{i=1}^n l(f(\mathbf{x_i},\mathbf{w}),\mathbf{y_i})$. Our goal is to train a model $f$ which minimizes the \textit{perturbed} distribution loss $L_{\mathcal{D}}(\mathbf{w}) = \mathbb{E}_{(\mathbf{x},\mathbf{y})\sim\mathcal{D},\mathbf{p}\sim\mathbb{Q}}[l(f(\mathbf{x},\mathbf{w + p}),\mathbf{y})]$, where $\mathbb{Q}$ is a known distribution of possible weight perturbations. Although in principle $\mathbb{Q}$ can be arbitrary, for simplicity's sake we restrict $\mathbb{Q}$ to be a zero-mean isotropic Gaussian scaled by the maximum magnitude element in a given weight filter $w$, defined as $\mathcal{N}(0,\max_j |w_j|\sigma_{test}^2\mathbb{I})$ with variance $\sigma_{test}^2$. We argue that the zero-mean Gaussian assumption faithfully captures the dynamics of analog weight errors. If the analog error distribution has a non-zero mean, it can be compensated for with a constant shift in the weights. Meanwhile, even if the error distribution is non-Gaussian, the weighted sum of inputs computed in each layer is nonetheless Gaussian by the Central Limit Theorem. We directly scale the noise variance by the weight magnitude to more closely emulate inference on analog in-memory computing processors, in which digital weights are mapped to the finite conductance range of memory devices. As a result, the noise applied to the weights is fixed to a given fraction of the weight magnitude, and cannot be mitigated through simple rescaling. 

\textbf{SAM} The modified SAM training loss can be expressed as follows:
\begin{equation}
\label{eq:SAM}
    L^{SAM}_{\mathcal{S}}(\mathbf{w}) = \max_{||\epsilon||_2^2 \leq \rho} L_{\mathcal{S}}(\mathbf{w} + \mathbf{\epsilon})  
\end{equation}

where $\rho$ is a scalar hyperparameter corresponding to the radius of the $l_2$ ball in which the maximization is performed. In practice, the inner maximization within the SAM objective is not tractable, and instead approximated using a single gradient ascent step of length $\rho$.

\textbf{RWP} The RWP training loss we adopt in this paper is described by the following:
\begin{equation}
    \label{eq:rwp}
    L^{RWP}_{\mathcal{S}}(\mathbf{w}) = \mathbb{E}_{\epsilon \sim \mathcal{N}(0,\max_j |w_j|\sigma_{train}^2\mathbb{I})} [L_{\mathcal{S}}(\mathbf{w} + \mathbf{\epsilon})]  
\end{equation}
where $\sigma_{train}^2$ is the variance of the perturbation distribution, left to our choosing. In our implementation, we sample one $\epsilon$ per minibatch, noting that the expectation is taken implicitly over the stochastic minibatches.

\textbf{Sharpness Measures} We define general sharpness as the difference in loss value between the original point on the optimization path and a given perturbed point. We use two varieties of this measure: \textit{ascent-direction} sharpness, in which the perturbed point lies along the gradient, and \textit{average-direction} sharpness, in which the perturbed point is sampled randomly. We also define \textit{gradient sharpness} to refer to the sharpness at the point of perturbation (where the update gradient is computed) for SAM/RWP. In the case of SAM, its gradient sharpness is equivalent to ascent-direction sharpness, whereas for RWP it is equivalent to average-direction sharpness. For the sake of practicality, we use the approximate measure of $m$-sharpness, in which the sharpness is calculated through averaging over minibatches of size $m$.



\textbf{Experimental Setting} We perform our experiments using the Cifar-100, Tiny-ImageNet and ImageNet-100 datasets \cite{krizhevsky2009learning,imagenet,Le2015TinyIV}; Cifar-100 results are presented in the main body, with additional results deferred to Appendix Sec. \ref{sec:perturbed_dynamics}, \ref{sec:tiny-imagenet}, \ref{sec:schedules}. All results in the main body are taken using a ResNet-18 backbone \cite{He2015}; additional experiments with more backbones are included in Appendix Sec. \ref{sec:backbones}. For each experimental result, we report two uncertainty measures: noise uncertainty (first number in tables), the standard deviation calculated across different samples of test-time noise, and weight uncertainty (second number in tables), the standard deviation calculated across different sets of model weights trained using a varying random seed. We average our results across 10 different weight-noise samples and 3 different trained model weights.

\section{Understanding Noise-Resilient Generalization}
\label{sec:rwp_vs_sam}
In this section, we study noise robustness from the generalization perspective, aiming to understand why certain minima generalize well to noise-robust solutions, while others do not. We establish, both theoretically and empirically, a set of properties to predict the generalization capabilities of a minimum based on its training objective.

\subsection{A Noisy Generalization Bound}
To understand how minima generalize when subject to Gaussian weight noise, we look to bound the model's generalization performance based on some measure of training sharpness. To do so, we consider the problem from a PAC-Bayes perspective \cite{McAllester1999}, and introduce the following generalization bound leveraging the isotropic average-direction loss:
\begin{theorem}
  \label{thm:mainbound}
  Assume $\Delta L_{\mathcal{D}} > 0$. For any small $\sigma_{train}$, $\sigma_{test}$ where $\sigma_{train} > \sigma_{test}$, the following holds with probability $1-\delta$:
  \begin{align}
  \label{eq:shortbound}
          \mathbb{E}_{\epsilon \sim \mathcal{N}(0,\sigma^2_{test})}[L_{\mathcal{D}}(w+\epsilon)] \leq \nonumber \\
          \mathbb{E}_{\epsilon \sim \mathcal{N}(0,\sigma^2_{train})}[L_{\mathcal{S}}(w+\epsilon)] + h\left(\frac{||w||^2}{\sigma_{train}^2}\right)
  \end{align}
  where $h: \mathbb{R}_+\rightarrow\mathbb{R}_+$ is a monotonically increasing function.
\end{theorem}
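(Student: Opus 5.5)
The plan is to follow the standard PAC-Bayes route used for SAM's generalization bound \cite{foret2021sharpnessaware}, adapted so that the posterior uses the training noise level while the quantity on the left uses the test noise level. The bound has three pieces: a PAC-Bayes step at noise level $\sigma_{train}$, a covering argument over the prior variance, and a comparison that converts the population term from $\sigma_{train}$ down to $\sigma_{test}$.

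\textbf{PAC-Bayes step.} Take the posterior $Q=\mathcal{N}(w,\sigma_{train}^2 I)$ and a data-independent prior $P=\mathcal{N}(0,\sigma_P^2 I)$. McAllester's bound \cite{McAllester1999}, for bounded loss, gives with probability $1-\delta$
\[
\mathbb{E}_{\epsilon\sim\mathcal{N}(0,\sigma_{train}^2)}[L_{\mathcal{D}}(w+\epsilon)] \le \mathbb{E}_{\epsilon\sim\mathcal{N}(0,\sigma_{train}^2)}[L_{\mathcal{S}}(w+\epsilon)] + \sqrt{\frac{\mathrm{KL}(Q\|P)+\log(n/\delta)}{2(n-1)}}.
\]
Here $L_{\mathcal{D}}$ means the clean population loss evaluated at the perturbed weights; the filter-wise scaling $\max_j|w_j|$ is absorbed into the variances or ignored, as the statement does. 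The KL term has the closed form
\[
\mathrm{KL}(Q\|P)=\tfrac12\Big[k\sigma_{train}^2/\sigma_P^2 + \|w\|^2/\sigma_P^2 - k + k\log(\sigma_P^2/\sigma_{train}^2)\Big].
\]

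\textbf{Choosing the prior variance.} The prior cannot depend on $w$, so I would use the usual trick from \cite{foret2021sharpnessaware}. Fix a countable grid of $\sigma_P^2$ values, apply a union bound with $\delta_j=6\delta/(\pi^2 j^2)$, and pick the grid point closest to $\sigma_{train}^2+\|w\|^2/k$. With that choice the KL term is controlled by $\tfrac{k}{2}\log\!\big(1+\|w\|^2/(k\sigma_{train}^2)\big)$ plus constants. This is monotone increasing in $\|w\|^2/\sigma_{train}^2$, and adding the union-bound logarithm and the square root preserves monotonicity. Define $h$ to be the resulting function.

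\textbf{Comparing noise levels.} I would show that $\mathbb{E}_{\sigma_{test}}[L_{\mathcal{D}}(w+\epsilon)]\le \mathbb{E}_{\sigma_{train}}[L_{\mathcal{D}}(w+\epsilon)]$. For small $\sigma$, expand to second order:
\[
\mathbb{E}_\sigma[L_{\mathcal{D}}(w+\epsilon)] \approx L_{\mathcal{D}}(w)+\tfrac{\sigma^2}{2}\mathrm{tr}\,\nabla^2 L_{\mathcal{D}}(w).
\]
This is increasing in $\sigma$ when the trace is positive. I read the hypothesis $\Delta L_{\mathcal{D}}>0$ as exactly this condition: the expected increase in population loss under noise is positive and grows with the variance. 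An equivalent route is the heat-equation view, where $\frac{d}{d\sigma^2}\mathbb{E}_\sigma[L]=\tfrac12\mathbb{E}_\sigma[\Delta L]$. Since $\sigma_{train}>\sigma_{test}$, chaining this monotonicity with the PAC-Bayes bound yields \eqref{eq:shortbound}.

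\textbf{Main obstacle.} I expect the noise-level comparison to be the hard step. Monotonicity in $\sigma$ is not true in general, which is why both the small-$\sigma$ qualifier and $\Delta L_{\mathcal{D}}>0$ are needed. The cleanest rigorous version is the heat-semigroup identity: assume $\Delta L_{\mathcal{D}}\ge0$ along the Gaussian-smoothed path, then integrate the derivative from $\sigma_{test}^2$ to $\sigma_{train}^2$. This avoids any Taylor remainder. If that assumption is too strong, I would fall back to the Taylor expansion with an $O(\sigma^3)$ remainder absorbed into $h$.
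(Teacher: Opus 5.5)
Your proposal is correct and matches the paper's proof. The paper also applies the PAC-Bayes bound of Foret et al.\ and Chatterji et al.\ at noise level $\sigma_{train}$; you re-derive it through the prior-variance grid and union bound instead of citing it. The paper then uses the same second-order Taylor lemma, $\mathbb{E}_{\sigma}[L_{\mathcal{D}}(w+\epsilon)]\approx L_{\mathcal{D}}(w)+\tfrac{\sigma^2}{2}\Delta L_{\mathcal{D}}(w)$, to lower the left-hand noise level from $\sigma_{train}$ to $\sigma_{test}$.

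Your heat-equation variant is a more rigorous form of that lemma, and it needs no extra assumption: under mild regularity, $\Delta L_{\mathcal{D}}(w)>0$ already gives $\mathbb{E}_{\sigma}[\Delta L_{\mathcal{D}}(w+\epsilon)]>0$ for all small $\sigma$. Your fallback of absorbing a Taylor remainder into $h$ is therefore unnecessary. It would also break the requirement that $h$ depend only on $\|w\|^2/\sigma_{train}^2$, since the remainder depends on higher derivatives of $L_{\mathcal{D}}$ at $w$.
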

For the full details of the proof, see Appendix Sec. \ref{sec:pac_bayes_proof}. The key observation we take away from this result is that the two terms determining the tightness of the bound are anti-correlated with respect to $\sigma_{train}$: larger $\sigma_{train}$ increases the expected training loss, while decreasing the $h\left(||w||^2/\sigma_{train}^2\right)$ term. Therefore, choosing $\sigma_{train} = \sigma_{test}$ does \textit{not} guarantee the tightest bound; if 
\begin{align*}
    |\mathbb{E}_{\epsilon \sim \mathcal{N}(0,\sigma^2_{train})}[L_{\mathcal{S}}(w+\epsilon)]-\mathbb{E}_{\epsilon \sim \mathcal{N}(0,\sigma^2_{test})}[L_{\mathcal{S}}(w+\epsilon)]|\\  
    < |h\left(||w||^2/\sigma_{train}^2\right)-h\left(||w||^2/\sigma_{test}^2\right)|,
\end{align*}
then a tighter bound is achieved with $\sigma_{train} > \sigma_{test}$. Furthermore, this directly calls into question the utility of minimizing $\mathbb{E}_{\epsilon \sim \mathcal{N}(0,\sigma^2_{test})}[L_{\mathcal{S}}(w+\epsilon)]$: if $w^*$ is the global minimizer of $\mathbb{E}_{\epsilon \sim \mathcal{N}(0,\sigma^2_{test})}[L_{\mathcal{S}}(w+\epsilon)]$, there can exist some $w^\dag$ with tighter generalization bound on $L_{\mathcal{D}}$ if $w^\dag$ better minimizes $\mathbb{E}_{\epsilon \sim \mathcal{N}(0,\sigma^2_{train})}[L_{\mathcal{S}}(w+\epsilon)]$. This result carries strong implications for designing the perturbed training objective, which we show in the following section.

\subsection{Over-regularized Noisy Training Objectives are Optimal}
\label{sec:selecting_training_perturbation}

\begin{figure}[ht]
    \begin{subfigure}{1.0\columnwidth}
      \centering
      \includegraphics[width=0.65\linewidth]{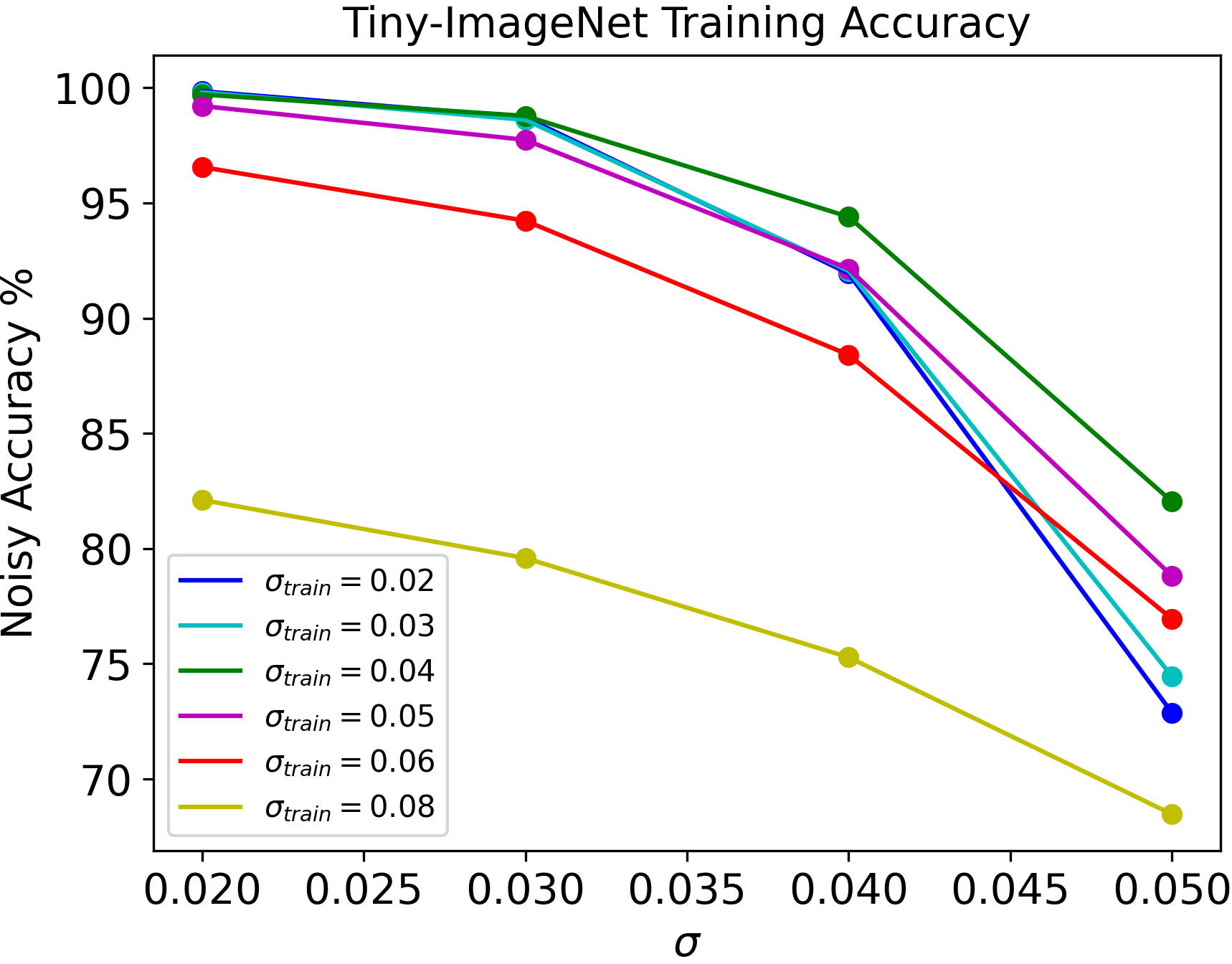}
      \caption{}
      \label{fig:tiny_imagenet_train_acc}
    \end{subfigure}%
    
    \begin{subfigure}{1.0\columnwidth}
      \centering
      \includegraphics[width=0.65\linewidth]{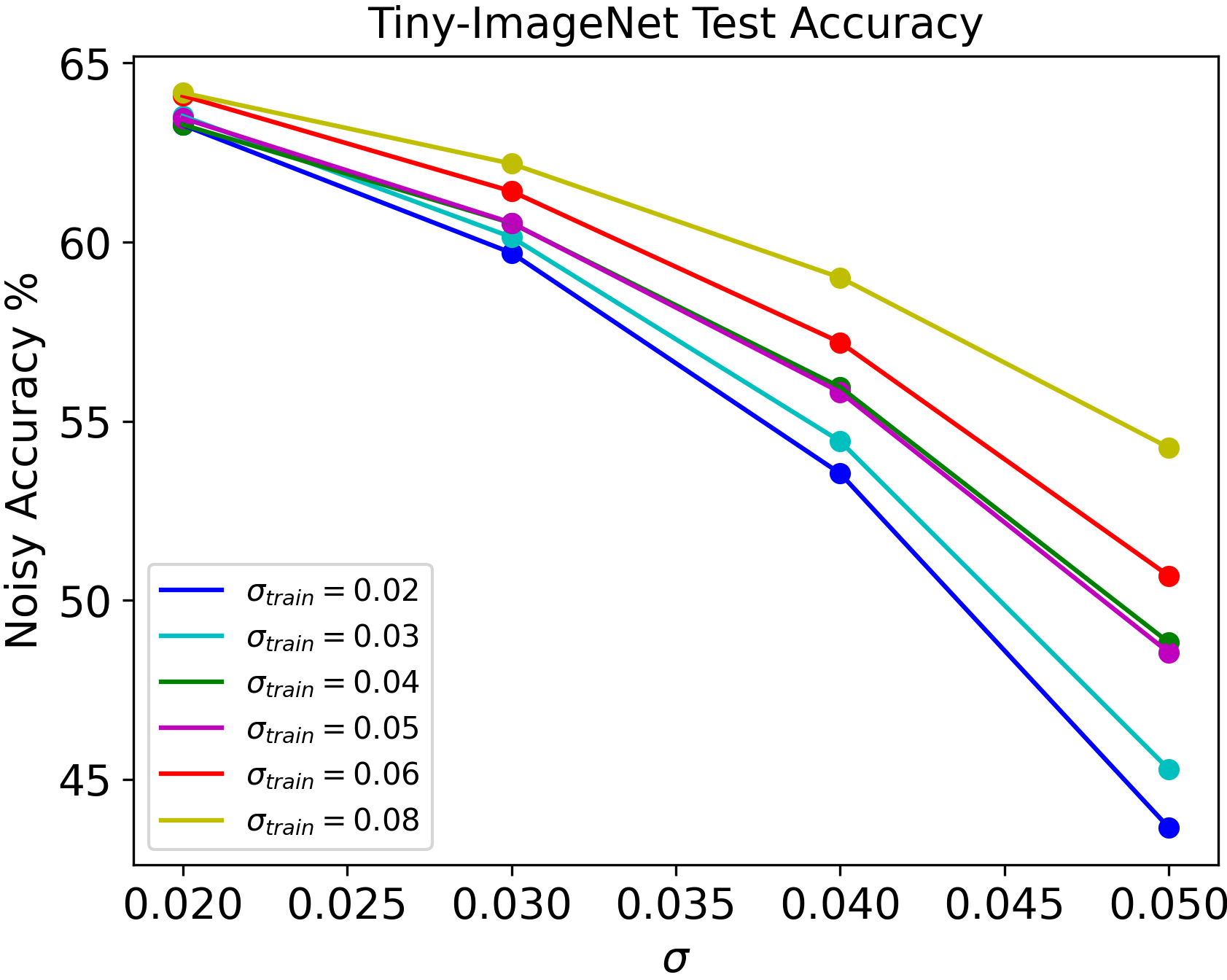}
      \caption{}
      \label{fig:tiny_imagenet_test_acc}
    \end{subfigure}
    \caption{Plot of noisy (a) training accuracy and (b) test accuracy as a function of the applied $\sigma$ for ResNet-18 trained on Tiny-ImageNet with RWP of varying $\sigma_{train}$. }
    \label{fig:tiny_imagenet_generalization}
\end{figure}

We start our empirical investigation of generalization using RWP, given its training objective that naturally aligns with the noise robustness objective. Our initial goal is to answer the question: given the objective of minimizing noisy test error with strength $\sigma_{test}$, what is the ideal training objective to minimize? Although the intuitive choice of matching training and generalization objectives (i.e. selecting $\sigma_{train} = \sigma_{test}$) has previously been accepted as ideal, Theorem \ref{thm:mainbound} provides evidence against this assertion. To test this hypothesis, we evaluate models trained using differing values of $\sigma_{train}$ on the validation dataset for selected values of $\sigma_{test}$, with our results shown in Tab. \ref{table:rwp_sigmas} for Cifar-100. We remark on two key patterns in these results: first, we see that \textit{under}-regularization (i.e. $\sigma_{train} < \sigma_{test}$) never outperforms $\sigma_{train} = \sigma_{test}$, consistent with our understanding from Theorem \ref{thm:mainbound} that asserts that the noisy test loss cannot be bounded by a noisy training loss with smaller $\sigma$. Second, we observe that for every $\sigma_{test}$ setting, the optimal performance is achieved when training with $\sigma_{train} > \sigma_{test}$, with the improvement in accuracy from over-regularization increasing as $\sigma_{test}$ increases. Again, this is consistent with our theoretical prediction that an over-regularized training objective can converge to a better-generalizing minimum than through simply matching noise strengths.

To isolate the effect of generalization from the effects of different training objectives on the optimization routine, we directly compare noisy test accuracy with noisy \textit{training} accuracy for a variety of $\sigma$ values, using minima found with differing $\sigma_{train}$ on Tiny-ImageNet (shown in Fig. \ref{fig:tiny_imagenet_generalization}). First, we observe that the minimum with lowest noisy training loss for strength $\sigma$ is generally found when the training objective is matched to that $\sigma$; in other words, over-regularization (and under-regularization) confer minimal benefit to optimizing for a particular noise objective. Meanwhile, when examining the plots, we clearly see that $\mathbb{E}_{\epsilon \sim \mathcal{N}(0,\sigma^2)}[L_{\mathcal{S}}(w+\epsilon)]$ is \textit{not} a good predictor of $\mathbb{E}_{\epsilon \sim \mathcal{N}(0,\sigma^2)}[L_{\mathcal{D}}(w+\epsilon)]$; most dramatically, $\sigma_{train} = 0.08$, while converging to the highest-loss minima across all the $\sigma$ settings, simultaneously is the best-generalizing model for all test $\sigma$. These observations provide clear evidence that our previous results arise specifically as a result of the generalization capabilities of different minima, as opposed to improvements to optimization.

\begin{table*}[ht]
  \caption{Comparison of RWP-trained ResNet-18 on Cifar-100 with varying perturbation strength at both training and test time. Staircase indicates boundary between $\sigma_{train} \leq \sigma_{test}$ and $\sigma_{train} > \sigma_{test}$.}
  \label{table:rwp_sigmas}
  \resizebox{\textwidth}{!}{\begin{tabular}{c|c|c|c|c|c}
    \hline
     & $\sigma_{test} = 0.0$ &  $\sigma_{test} = 0.02$  &   $\sigma_{test} = 0.03$ &  $\sigma_{test} = 0.05$ &  $\sigma_{test} = 0.07$\\
    \hline
    $\sigma_{train} = 0.0$ & $\mathbf{79.31} \pm 0.04$ & $77.24 \pm 0.51 \pm 0.23$ & $ 71.95 \pm 3.05 \pm 1.06 $ & $ 56.21 \pm 2.34 \pm 1.38$  & $ 47.09 \pm 2.54 \pm 3.64$\\
    \cline{2-2}
    $\sigma_{train} = 0.02$ & $78.86 \pm 0.41$ & $77.64 \pm 0.25 \pm 0.36$ & $75.77 \pm 0.43 \pm 0.33$ & $67.11 \pm 1.61 \pm 0.69$  & $ 50.12 \pm 2.59  \pm 2.89$\\
    \cline{3-3}
    $\sigma_{train} = 0.03$ & $78.96 \pm 0.28$ & $77.57 \pm 0.23 \pm 0.26$ & $75.82 \pm 0.47 \pm 0.30$ & $67.97 \pm 1.38 \pm 0.58$ & $ 52.57 \pm 2.43  \pm 0.99 $\\
    \cline{4-4}
    $\sigma_{train} = 0.05$ & $78.69 \pm 0.12$ &  $\mathbf{77.65} \pm 0.26 \pm 0.05$ & $\mathbf{76.26} \pm 0.47 \pm 0.11$ & $70.43 \pm 1.26 \pm 0.16$ & $ 56.55 \pm 3.29 \pm 0.35$\\
    \cline{5-5}
    $\sigma_{train} = 0.06$ & $78.04 \pm 0.39$ & $77.15 \pm 0.27 \pm 0.26$ & $75.97 \pm 0.42 \pm 0.20$ &  $\mathbf{71.04} \pm 0.91 \pm 0.15$ & $ 59.27 \pm 2.50  \pm  0.20$ \\
    $\sigma_{train} = 0.07$ & $77.17 \pm 0.34$ & $76.32 \pm 0.21 \pm 0.20$ & $75.18 \pm 0.37 \pm 0.23 $ & $ 70.51 \pm 1.04 \pm 0.25 $ & $ 59.39 \pm 2.64  \pm 0.10 $ \\
    \cline{6-6}
    $\sigma_{train} = 0.08$ & $77.05 \pm 0.32$ & $76.24 \pm 0.26 \pm 0.17$ & $75.15 \pm 0.45 \pm 0.18 $ & $70.96  \pm 1.20 \pm 0.45 $ & $\mathbf{61.36} \pm 2.85  \pm  0.90$ \\
    \hline
  \end{tabular}}
\end{table*}

\subsection{SAM Further Improves Small-noise Generalization}
\label{sec:sam_generalization}

\begin{figure}[ht]
  \centering
  \includegraphics[width=0.7\linewidth]{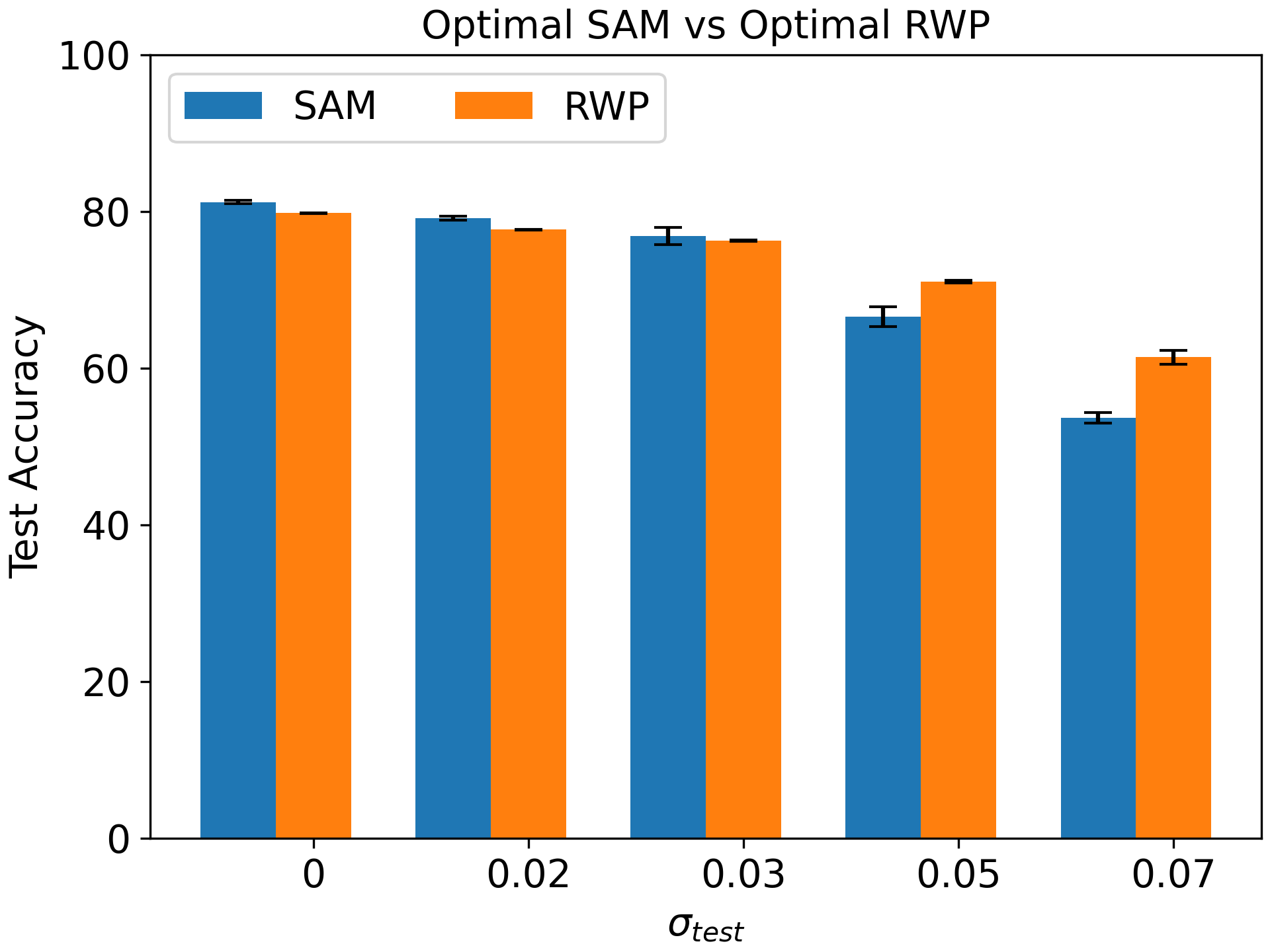}
  \caption{Comparison between optimal RWP and SAM across a variety of Cifar-100 noise settings. }
  \label{fig:sam_vs_rwp}
\end{figure}

Unlike the case of RWP, producing a meaningful generalization bound using the ascent-direction sharpness measure is not straightforward, and previous attempts are known to be insufficient in explaining SAM's generalization performance \cite{andriushchenko2022understandingsharpnessawareminimization}. Nonetheless, understanding the extent to which optimizing for SAM's objective can find noise-robust minima is still a critical question. To explore this, we evaluate SAM training with a variety of $\rho$ values as shown in Tab. \ref{table:sam_rhos} for Cifar-100. The first point we note is that SAM-trained models consistently outperform those trained with SGD on the perturbed accuracy metric, confirming that optimizing the ascent-direction objective does in fact confer benefits to noisy generalization. Directly-comparing the best performing models between SAM and RWP on Cifar-100 (see Fig. \ref{fig:sam_vs_rwp}), we see that for small $\sigma_{test}$, SAM produces better-generalizing minima than any of the corresponding RWP (or SGD) weights. As was done for RWP, we again compare noisy training/test accuracy on Tiny-ImageNet to quantify the generalization gap, shown in Tab. \ref{table:tiny_imagenet_generalization}. For the case of $\sigma=0.02$, we see that although SAM produces the best-generalizing model, the training minimum it converges to has a higher noisy training loss than all but the $\sigma=0.08$ RWP model. This demonstrates that SAM is not better suited toward optimizing the noisy training objective, but it in fact finds minima with even stronger generalization than those found by RWP.

Although SAM is clearly optimal for small $\sigma_{test}$, the generalization benefit is much reduced for greater noise strength, resulting in SAM underperforming RWP. Does this result directly stem from inherent generalization capabilities? In the case of RWP, we note that the best-generalizing value of $\sigma_{train}$ is positively correlated with $\sigma_{test}$. Intuitively, the same should hold true for SAM and $\rho$. However, a surprising trend emerges when ramping test-time noise: the optimal value of $\rho$ is consistent across all of the test-time noise settings. Notably, models trained on Cifar-100 with $\rho=0.8$, our most strongly perturbed SAM models, exhibit no performance advantage over those trained with SGD across any of the noise settings. We argue that this effect is not intrinsically tied to the generalization capabilities of the more-perturbed training objective, but instead the stability of the optimization procedure itself, which we show in the following section.

\begin{table*}
  \caption{Comparison of SAM-trained ResNet-18 on Cifar-100 with varying perturbation strength at both training and test time.}
  \label{table:sam_rhos}
  \resizebox{\textwidth}{!}{\begin{tabular}{c|c|c|c|c|c}
    \hline
     & $\sigma_{test} = 0.0$ &  $\sigma_{test} = 0.02$  &   $\sigma_{test} = 0.03$ &  $\sigma_{test} = 0.05$ &  $\sigma_{test} = 0.07$\\
    \hline
    $\rho = 0.0$ & $79.31 \pm 0.04$ & $77.24 \pm 0.51 \pm 0.23$ & $ 71.95 \pm 3.05 \pm 1.06 $ & $ 56.21 \pm 2.34 \pm 1.38$  & $ 47.09 \pm 2.54 \pm 3.64$  \\
    $\rho = 0.2$ & $\mathbf{81.15} \pm 0.21$ & $78.71 \pm 0.33 \pm 0.15$ & $75.27 \pm 1.89 \pm 1.71$ & $64.36 \pm 1.94 \pm 0.70$ &  $\mathbf{53.61} \pm 2.61 \pm 0.68$ \\
    $\rho = 0.3$ & $80.27 \pm 0.19$ & $\mathbf{79.12} \pm 0.23 \pm 0.28$ & $\mathbf{76.82} \pm 1.13 \pm 1.09$ & $\mathbf{66.51} \pm 2.57 \pm 1.27$ & $52.93 \pm 3.51 \pm 1.46$ \\
    $\rho = 0.5$ & $78.34 \pm 0.05$ & $77.31 \pm 0.25 \pm 0.23$ & $74.41 \pm 2.00 \pm 0.97$ & $63.27 \pm 2.57 \pm 1.81$ & $47.16 \pm 4.19 \pm 2.35$\\
    $\rho = 0.8$ & $72.34 \pm 0.37$ & $ 71.04 \pm 0.30 \pm 0.46$ & $ 66.98 \pm 1.67 \pm 2.44 $ &  $50.85 \pm 3.26 \pm 3.82$ &  $19.02 \pm 3.39\pm 2.54$ \\

    \hline
  \end{tabular}}
\end{table*}

\section{Optimization with Perturbed Training}
In Sec. \ref{sec:rwp_vs_sam}, we established a framework to understand the generalization of minima found from optimizing differing RWP/SAM training objectives. In this section, we examine the optimization procedure itself, and empirically identify key factors that determine its relative success.

\subsection{Observing the Evolution of Noise Robustness During Training}

\begin{figure*}[ht]
    \begin{subfigure}{0.33\textwidth}
      \centering
      \includegraphics[width=1.0\linewidth]{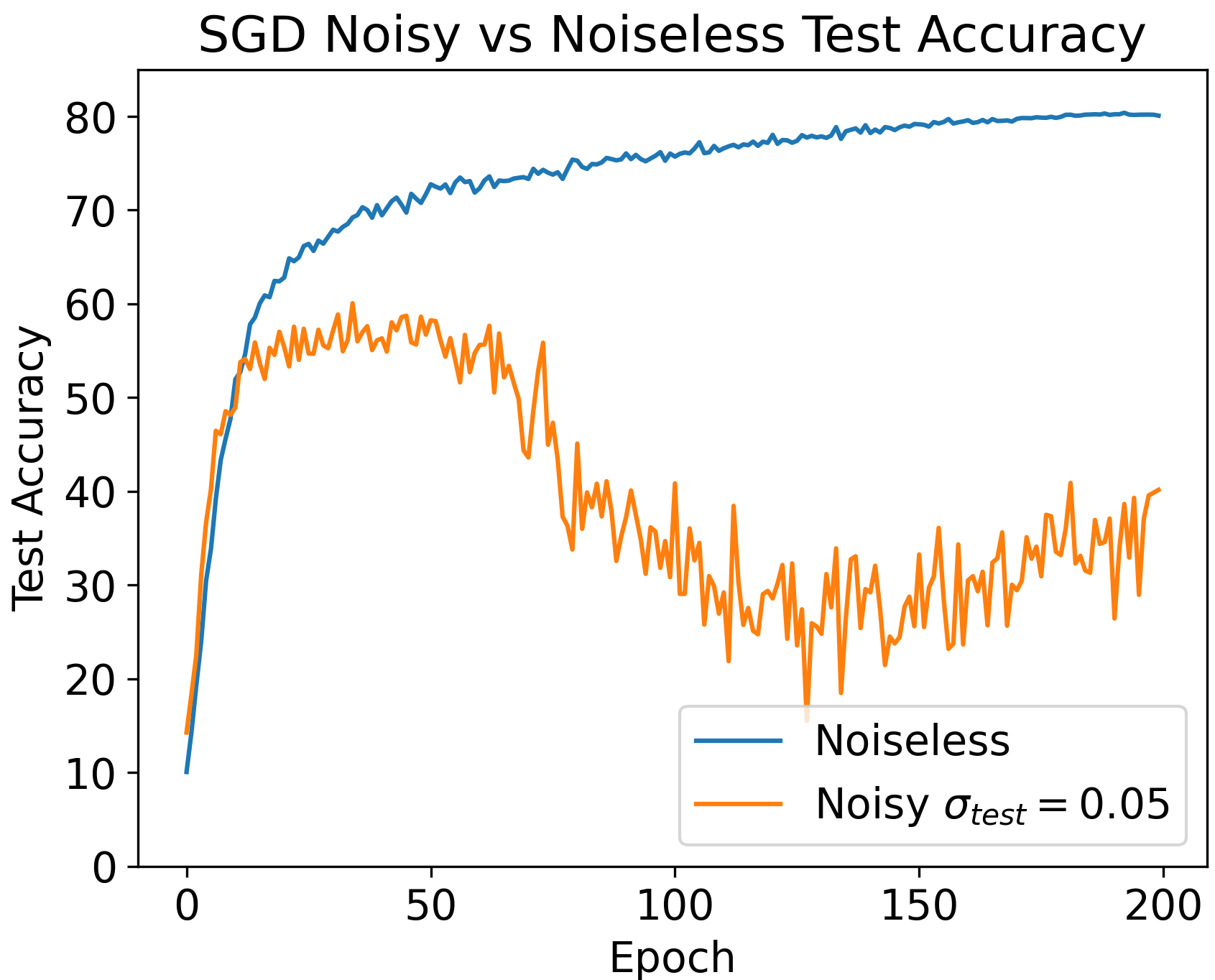}
      \caption{}
      \label{fig:noise_vs_noiseless}
    \end{subfigure}
    \begin{subfigure}{.33\textwidth}
      \centering
      \includegraphics[width=1.0\linewidth]{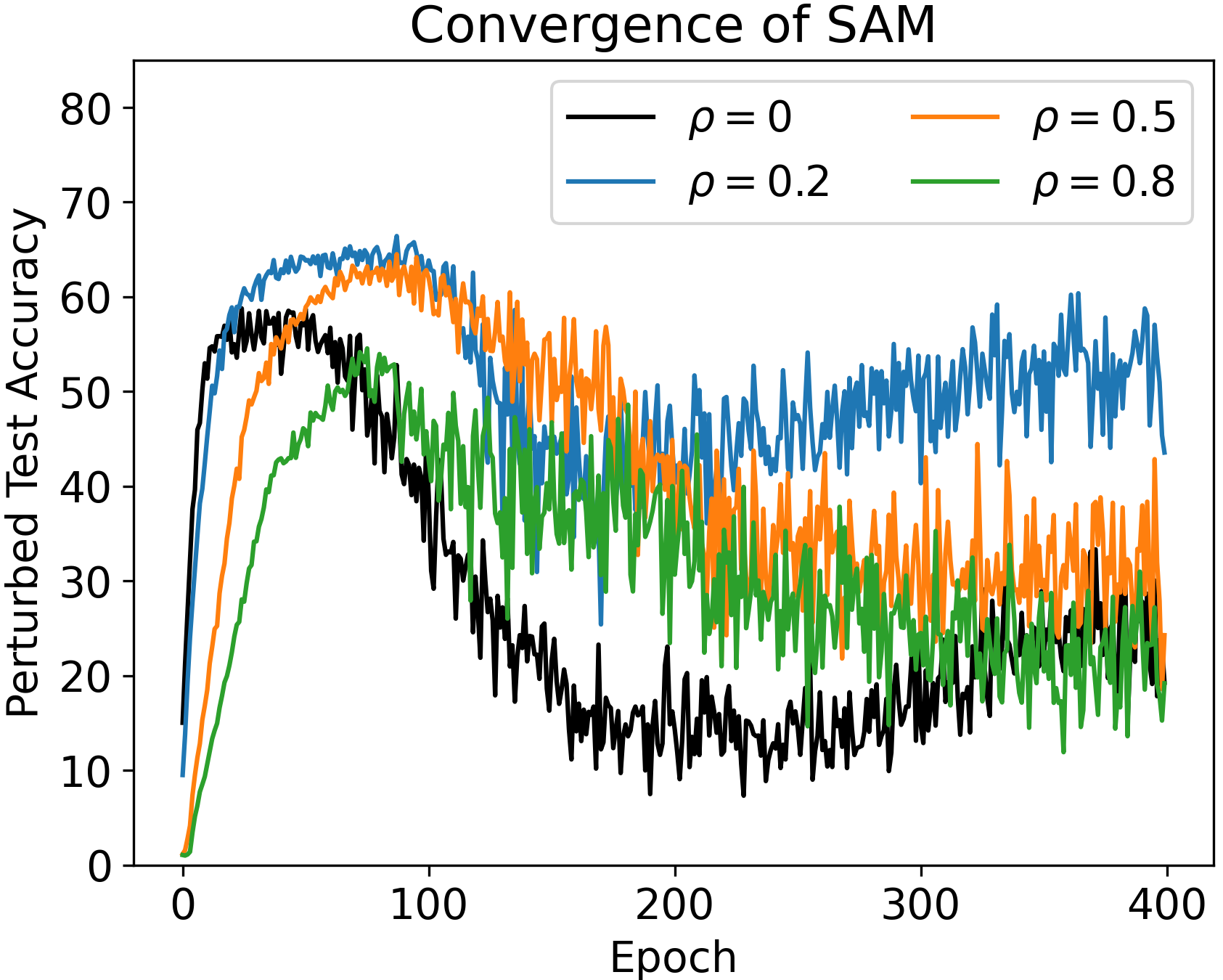}
      \caption{}
      \label{fig:sam_long}
    \end{subfigure}
    \begin{subfigure}{.33\textwidth}
      \centering
      \includegraphics[width=1.0\linewidth]{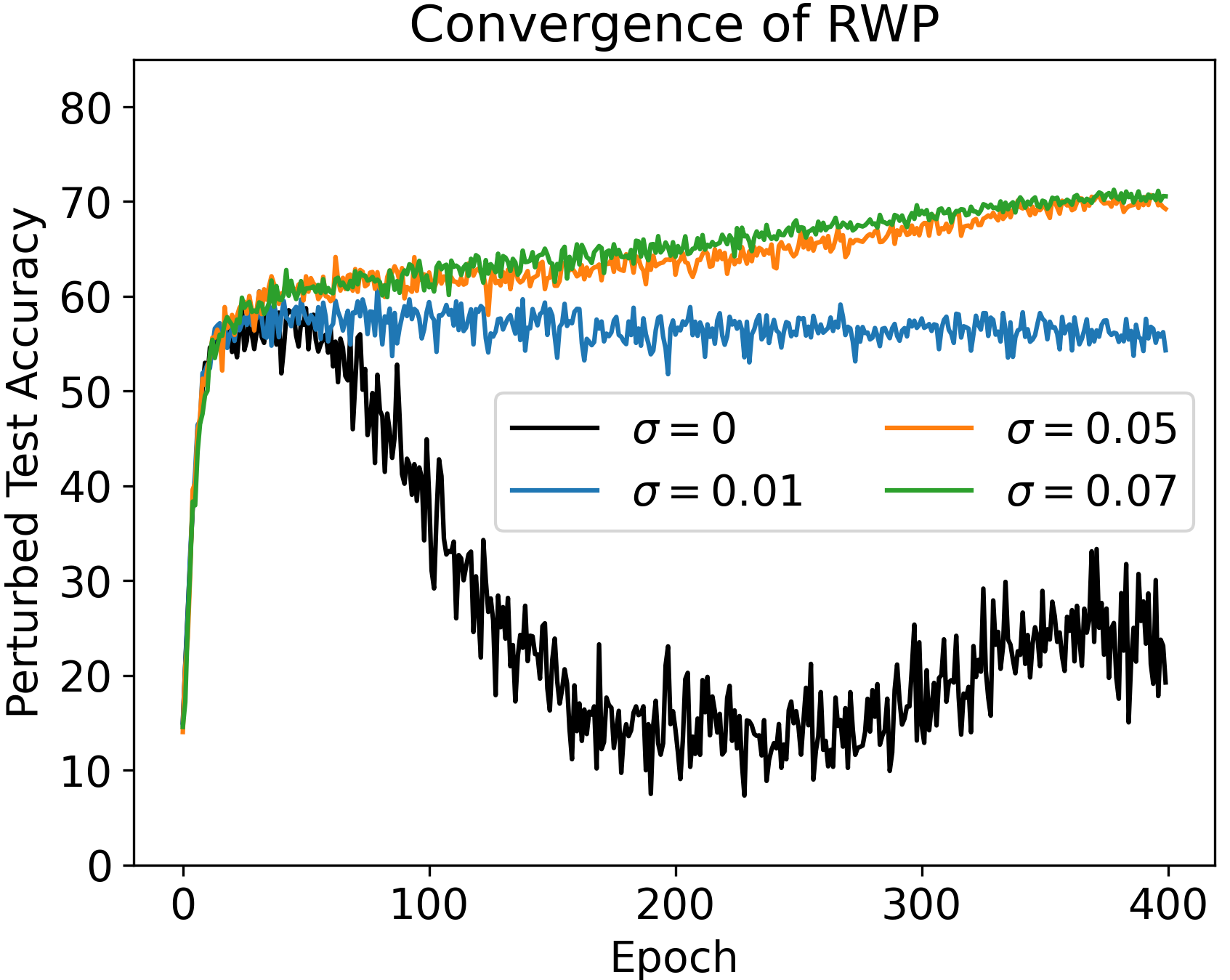}
      \caption{}
      \label{fig:rwp_long}
    \end{subfigure}
  \caption{(a) ResNet-18 test accuracy on Cifar-100 as a function of training epoch when both perturbations ($\sigma_{test}=0.05$) and no perturbations are applied. Training is conducted using SGD. (b) Comparison of perturbed test accuracy evolution for SAM with various $\rho$ values. (c) Comparison of perturbed test accuracy evolution for RWP with various $\sigma_{train}$ values.}
\end{figure*}

As we observe the training processes of the different optimizers, we note that the perturbed test loss does \textit{not} follow the same trends of convergence as the unperturbed loss. To illustrate this, we plot the test accuracy, both perturbed and unperturbed, of a ResNet-18 trained with SGD on Cifar-100 in Fig. \ref{fig:noise_vs_noiseless}. Whereas the noiseless test accuracy continually increases until convergence, the perturbed test accuracy peaks at an earlier epoch, before gradually declining as the optimization continues. Intuitively, this matches with the hypothesis that in early training iterations, the loss landscape is naturally flat, and as such the perturbed test accuracy improves over the early epochs. However, without proper regularization, training will naturally begin to overfit as a sharper minimum is approached.

Next, we compare this same convergence dynamic for both SAM and RWP, varying perturbation strengths $\rho$ and $\sigma$, respectively. We visualize this comparison in Fig. \ref{fig:sam_long} and Fig. \ref{fig:rwp_long}. In the models trained with SAM, we notice a similar dynamic to those trained with SGD: when trained for a sufficiently long schedule, the models undergo the same phenomenon of reaching a peak perturbed test accuracy, after which overfitting ensues. However, as $\rho$ is increased, two trends emerge: the epoch at which the peak perturbed accuracy is achieved at is pushed further back (i.e. the model can be trained closer to convergence before the peak is reached), and the perturbed accuracy at the peak is increased relative to that of SGD. After $\rho$ exceeds a critical value, these trends reverse: overfitting once again initiates at an earlier training epoch, and the peak accuracy is now reduced.

Meanwhile, the training dynamics observed in models trained with RWP stand in contrast to those trained with SAM and SGD. We consider two regimes: small $\sigma_{train}$, and large $\sigma_{train}$. In the former case, the optimization follows the same trend as in SGD and SAM: perturbed test accuracy peaks before convergence, after which further training leads to a decline in noise-resilience. However, in the latter case, the perturbed test accuracy does not peak, instead gradually increasing until the model converges. Even when the model is trained for a greater number of iterations beyond convergence, the perturbed test accuracy simply plateaus, and the model never enters an overfit regime. This stands in particular contrast to SAM, which \textit{cannot} prevent the convergence of the model to an overfit solution, regardless of the selected perturbation radius.

\subsection{Drawback of Large Perturbations: Flatness-Induced Vanishing Gradient}
\label{sec:vanishing_gradient}
From the results in Sec. \ref{sec:sam_generalization}, we have identified that, particularly in the case of SAM, large perturbations lead to convergence issues. First, we quantify this effect: we plot the $l_2$-norm of the update gradient $||\nabla L ||_2$ (i.e. the gradient at the perturbed point) as a function of training epoch (averaged over minibatches) for both techniques, shown in Fig. \ref{fig:grad_norm_sam_v_rwp} for Cifar-100. For fair comparison, we select multiple values for $\sigma$ and $\rho$, capturing the range of perturbation strengths evaluated in Tab. \ref{table:rwp_sigmas} and Tab. \ref{table:sam_rhos}, respectively. In early epochs, we see the norm of the gradient increase as the training path enters into a steep valley within the loss landscape. Later, as training begins to converge to a local minimum, the gradient norm decreases again. Relative to SGD, all of the perturbative training methods reduce the gradient norm across the entire optimization path. However, we notice a stark contrast: the SAM-trained models produce gradient norms that are \textit{significantly} smaller in magnitude than those of the RWP training. These small gradients result in a markedly degraded optimization, in which training fails to progress meaningfully beyond the initialized state.  

Next, we develop an understanding of why SAM gradients experience this drastic attenuation. To do so, we visualize the gradient sharpness during training to understand the loss topography at the perturbed point, shown in Fig. \ref{fig:sharpness_sam_v_rwp}. From this, we see that the sharpness encountered by the small SAM perturbations is much larger in magnitude than that encountered by the large RWP perturbations, a trend that holds true at every point within the loss landscape. From this, we draw a logical conclusion: sharper perturbations more-rapidly guide the optimization toward flatter regions of the loss landscape, which are simultaneously regions with smaller gradients. We visualize this comparison between SAM and RWP landscapes directly in Fig. \ref{fig:sam_vs_rwp_landscape}. This establishes a trade-off between finding flat vs. deep regions of the loss landscape; in the extreme case (e.g. SAM with large $\rho$), the training trajectory quickly enters a flat, small-gradient region of the landscape, causing training to stagnate before finding a sufficiently low-loss minimum.  We dub this effect the \textit{flatness-induced vanishing gradient}.

\begin{figure*}
    \begin{subfigure}{.33\textwidth}
      \centering
      \includegraphics[width=1.0\linewidth]{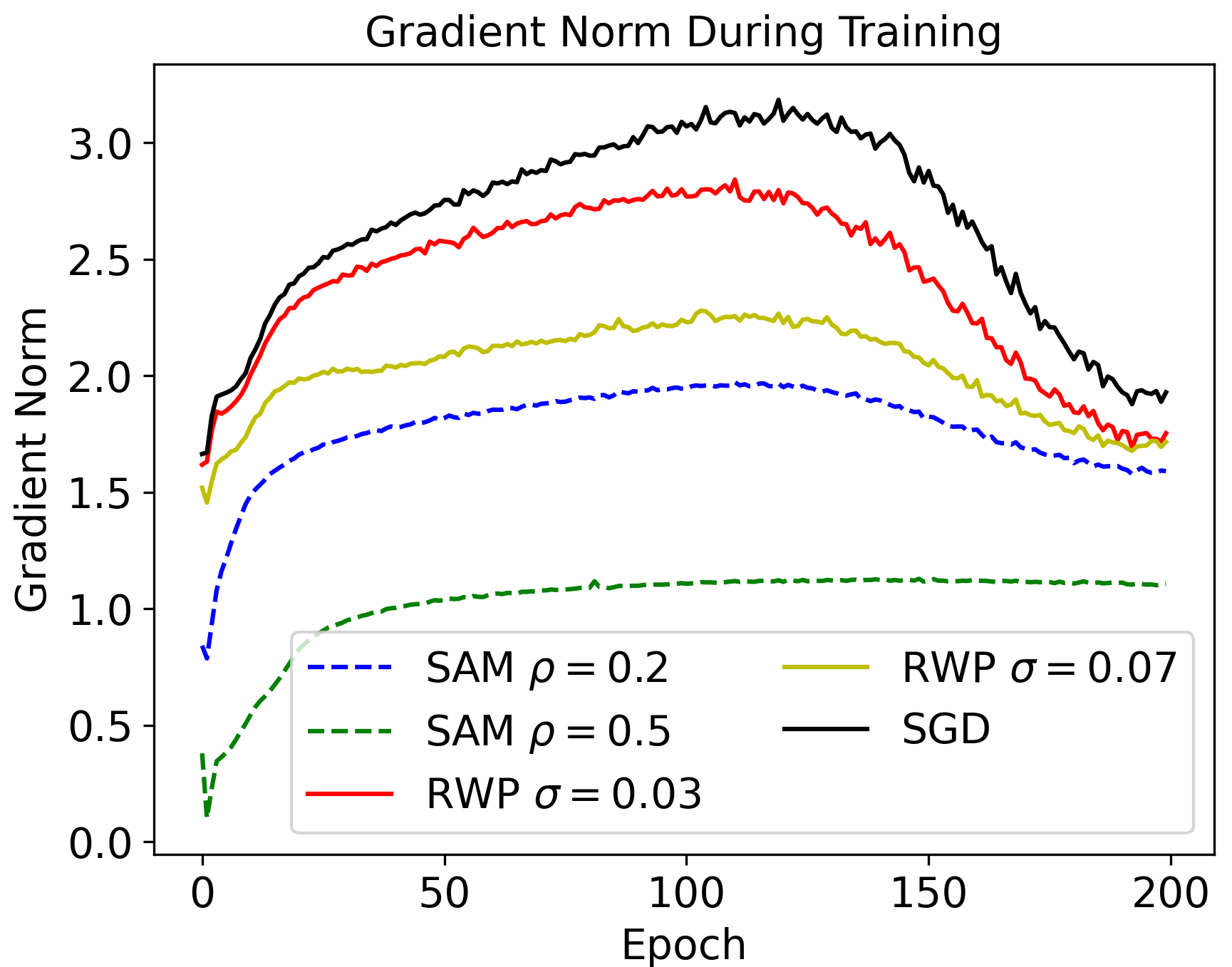}
      \caption{}
      \label{fig:grad_norm_sam_v_rwp}
    \end{subfigure}
    \begin{subfigure}{.33\textwidth}
        \centering
        \includegraphics[width=1.0\linewidth]{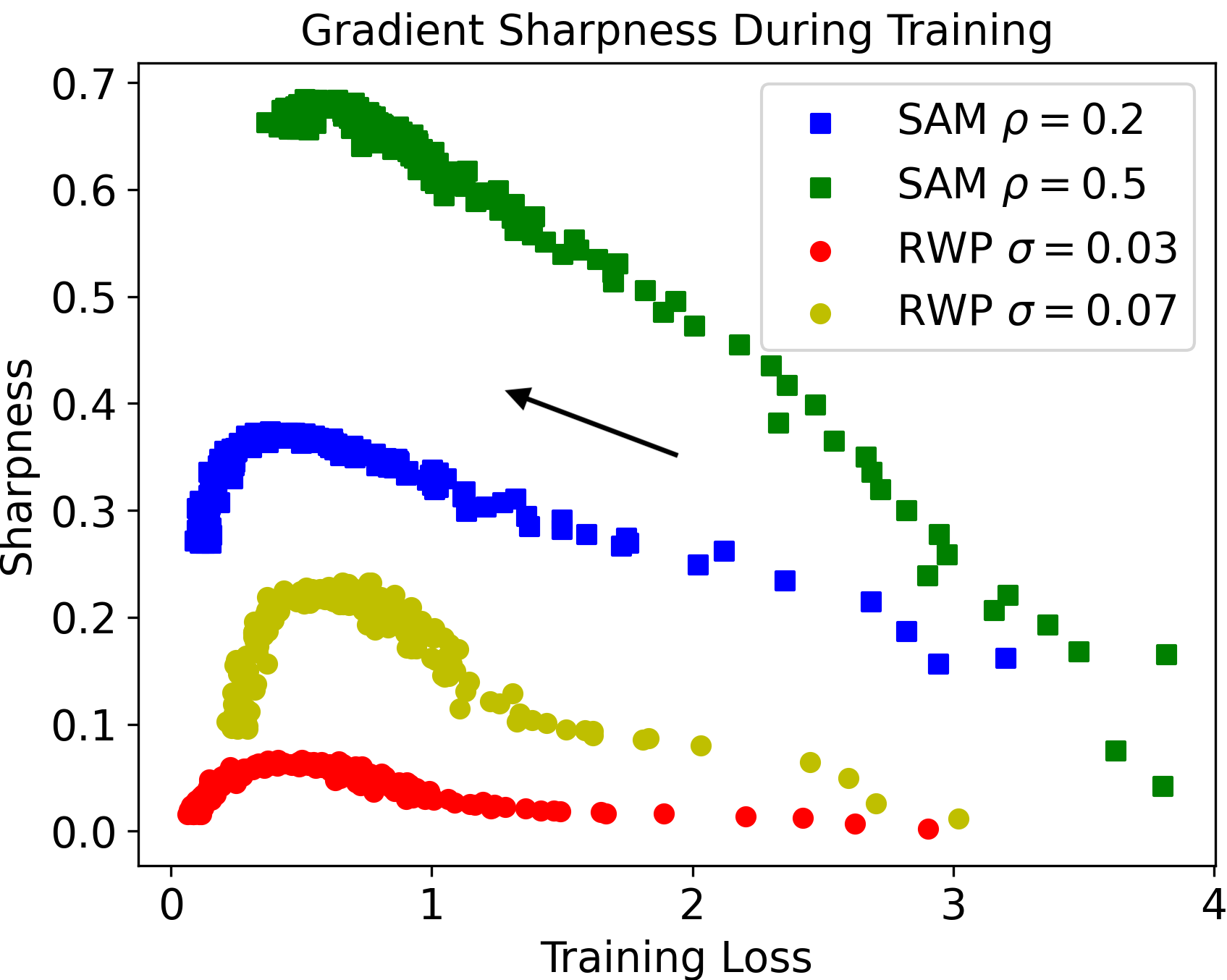}
        \caption{}
        \label{fig:sharpness_sam_v_rwp}
    \end{subfigure}%
    \begin{subfigure}{.33\textwidth}
      \centering
      \includegraphics[width=0.75\linewidth]{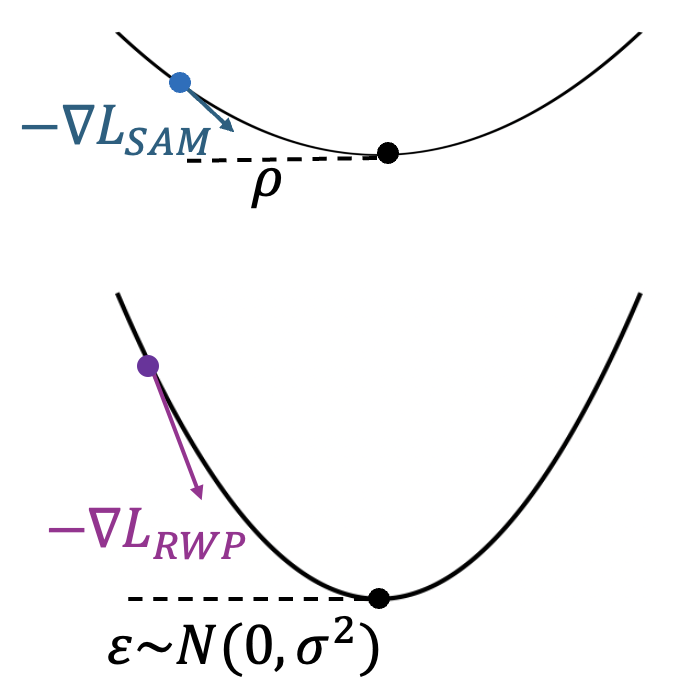}
      \caption{}
      \label{fig:sam_vs_rwp_landscape}
    \end{subfigure}
    \caption{(a) Plot of the update gradient norm $||\nabla L ||_2$ as a function of training epoch for a ResNet-18 trained on Cifar-100 using SGD, SAM, and RWP. (b) Plot of the gradient sharpness (corresponding to ascent-direction for SAM or average-direction for RWP) of both SAM and RWP as a function of training loss. (c) Schematic visualization of loss surfaces for large-$\rho$ SAM's (top) and large-$\sigma$ RWP's (bottom) training trajectories.}
    \label{fig:understanding_loss_landscape}
\end{figure*}

\subsection{Adjusting Perturbations to the Evolving Loss Landscape}

Based on our results in Sec. \ref{sec:vanishing_gradient}, we posit that the vanishing gradient effect is most detrimental in the early training epochs. We base this on two justifications: first, Fig. \ref{fig:grad_norm_sam_v_rwp} shows that the gradient magnitude is consistently the smallest in the early training epochs. Second, on account of the decaying learning rate, the largest weight updates occur in the earliest training epochs- hence, these early iterations play a disproportionately large role in the optimization trajectory, ultimately altering convergence in the later epochs. Thus, we hypothesize that the strength of perturbation tolerable to optimization should increase as training proceeds, suggesting that the optimal perturbation strength selected at the initial stage of training should not remain constant. Based on this insight, we explore the use of modified perturbation schedules for SAM and RWP, engineered to match the evolution of the dynamic loss landscape.

From our observations, we know that perturbation magnitude should continually increase throughout training, although the exact manner is not clear. As such, we explore two functional classes of perturbation schedules, linear and quadratic, each initialized at strength 0 and increasing to a terminal value over the course of a warm-up cycle. The terminal perturbation strength and warm-up cycle length are treated as hyperparameters that we optimize. We include a detailed description of the quadratic schedule for RWP in Algorithm \ref{alg:rwp-quadratic}.

\begin{table*}[ht]
  \caption{Comparison of our three proposed perturbation schedules for SAM/RWP on ResNet-18 trained on Cifar-100. The maximum perturbation strength/schedule length are determined through using a grid search. }
  \label{table:noise_schedule_comp}
  \resizebox{\textwidth}{!}{\begin{tabular}{c|c|c|c|c}
    \hline
     &  $\sigma_{test} = 0.02$  &   $\sigma_{test} = 0.03$ &  $\sigma_{test} = 0.05$ &  $\sigma_{test} = 0.07$\\
    \hline
    Constant-$\sigma$ RWP & $77.65 \pm 0.26 \pm 0.05$ & $76.26 \pm 0.47 \pm 0.11$ & $71.04 \pm 0.91 \pm 0.15$  & $61.36 \pm 2.85  \pm  0.90$\\
    Linear-$\sigma$ RWP & $ 77.95\pm 0.26 \pm 0.17$ & $ 76.61\pm 0.44 \pm 0.13$ & $ 71.85 \pm 0.90 \pm 1.23$  & $62.24 \pm 3.52 \pm 1.39$\\
    Quadratic-$\sigma$ RWP & $ \mathbf{78.26} \pm 0.20 \pm 0.13$ & $\mathbf{77.00} \pm 0.30 \pm 0.17$ & $ \mathbf{72.43} \pm 0.82 \pm 0.21$  & $\mathbf{65.27} \pm 2.05 \pm 0.64$\\
    \hline
    Constant-$\rho$ SAM & $79.12 \pm 0.23 \pm 0.28$ & $76.82 \pm 1.13 \pm 1.09$ & $66.51 \pm 2.57 \pm 1.27$  & $53.61 \pm 2.61 \pm 0.68$\\
    Linear-$\rho$ SAM & $ 79.40 \pm 0.21 \pm 0.19$ & $ \mathbf{77.61} \pm 0.51 \pm 0.42$ & $ 67.27 \pm 1.18 \pm 1.10$  & $58.37 \pm 2.42 \pm 0.75$\\
    Quadratic-$\rho$ SAM & $ \mathbf{79.49} \pm 0.34 \pm 0.14$ & $ 77.13 \pm 1.52 \pm 0.56$ & $ \mathbf{68.11} \pm 1.10 \pm 0.29$  & $\mathbf{60.24} \pm 1.98 \pm 0.83$\\
    \hline
  \end{tabular}}
\end{table*}

We perform our initial experiments on Cifar-100 using both schedule variations and compare across a variety of noise settings, shown in Tab. \ref{table:noise_schedule_comp}. From these results, we see that across all of the test-time noise settings, the ramped perturbation schedules outperform the constant-perturbations baselines of both SAM and RWP. Between the two schedule variations, the quadratic ramp schedule performs best across most of the scenarios. We also observe that the accuracy gain achieved by applying the dynamic perturbation schedule \textit{increases} as $\sigma_{test}$ (and hence $\sigma_{train}$ or $\rho$) is increased. This aligns with our understanding thus far: for larger $\sigma_{test}$, optimal generalization is achieved with even larger perturbations, but is also further inhibited by the vanishing gradient effect. With a dynamic schedule, larger perturbations are better tolerated, producing the most benefit in the large $\sigma_{test}$ scenario.

From our hyperparameter sweep (Tab.  \ref{table:noise_schedule_sam_rho}, \ref{table:noise_schedule_rwp_sigma}), we find that the optimal max perturbations are increased relative to the optimal for constant perturbation experiments. Notably, SAM's optimal $\rho$ increases significantly from 0.3 to 1.0, aligning better with the dynamic loss landscape and supporting our hypothesis of an increasing critical perturbation length. This demonstrates that greater over-regularization is achievable when introduced at the appropriate training stage, and opens a path toward perturbative training which is engineered around the evolution of the loss landscape.

To directly capture the effect ramping perturbation strength has on optimization specifically, we again calculate the noisy training accuracy at convergence for SAM-trained models on Tiny-ImageNet, both with and without the perturbation schedules, shown in Tab. \ref{tab:ramp_convergence}. We see that across all of the noise levels, introducing dynamic perturbations allows training to converge to minima with $\textit{lower}$ noisy training loss than that of training with constant perturbation strength. This confirms that dynamic perturbations directly improve optimization, independent of any possible improvement to the generalization capabilites of these minima.

\begin{algorithm}[t]
\caption{RWP with Quadratic Noise Warm-up}
\label{alg:rwp-quadratic}
\begin{algorithmic}[1]
  \REQUIRE Learning rate $\eta$, max noise $\sigma_{\max}^2$, warmup iterations $T^*$, total iterations $T$
  \STATE Initialize $w_1$
  \FOR{$t = 1,\dots,T$}
    \STATE $\sigma_t^2 \gets \sigma_{\max}^2\,\bigl(\tfrac{\min (t,T^*)}{T^*}\bigr)^{2}$
    \STATE Sample $\epsilon_t \sim \mathcal{N}\bigl(0,\;\max_j|w_{t,j}|\,\sigma_t^2 I\bigr)$
    \STATE $w_{t+1} \gets w_t - \eta\,\nabla L\bigl(w_t + \epsilon_t\bigr)$
  \ENDFOR
\end{algorithmic}
\end{algorithm}

\begin{table}[t]
    \caption{Comparison of noisy training accuracies (with various $\sigma$) on Tiny-ImagNet for SAM $\rho = 0.4$ with and without ramping perturbations.}
    \label{tab:ramp_convergence}
    \begin{tabular}{c|c|c}
    \hline
    $\sigma$ & Constant-$\rho$ SAM & Quadratic-$\rho$ SAM \\
     \hline
      $0.02$  & $89.96\pm3.51\pm4.21$ & $\mathbf{93.27}\pm0.41\pm0.88$\\
      \hline
      $0.03$  & $69.41\pm10.29\pm18.37$ & $\mathbf{82.21}\pm9.08\pm4.67$\\
      \hline
      $0.04$  & $36.21\pm14.40\pm20.03$ & $\mathbf{60.04}\pm2.30\pm1.83$\\
      \hline
      $0.05$  & $12.82\pm10.65\pm8.20$ & $\mathbf{51.64}\pm4.04\pm3.26$\\
      \hline
    \end{tabular}
\end{table}


\section{Validation through Analog Hardware Simulation}

\begin{table}[t]
  \caption{Comparison of Cifar-100 test accuracy using SGD, SAM and RWP-trained ResNet-18 models when performing simulated inference on analog accelerators.}
  \label{table:hardware_comp}
  \centering
   \resizebox{\columnwidth}{!}{\begin{tabular}{c|c|c}
    \hline
     &  RRAM &   SONOS \\
    \hline
    SGD & $65.61 \pm 2.68 \pm 0.51$ & $79.31 \pm 0.03 \pm 0.02$  \\
    \hline
    SAM $\rho = 0.2$ & $66.25 \pm 2.18 \pm 2.70$ & $80.15 \pm 0.03 \pm 0.24$  \\
    SAM $\rho = 0.3$ & $68.07 \pm 1.39 \pm 1.96$ & $80.28 \pm 0.02 \pm 0.20$  \\
    SAM $\rho = 0.4$ & $68.42 \pm 1.25 \pm 0.89$ & $80.07 \pm 0.03 \pm 0.22$  \\
    SAM + Quadratic Schedule & $66.16 \pm 1.52 \pm 2.58$ & $\mathbf{80.84} \pm 0.03 \pm 0.09$  \\
    \hline
    RWP $\sigma = 0.03$ & $71.52 \pm 2.10 \pm 3.14$ &$78.95 \pm 0.02 \pm 0.28$  \\
    RWP $\sigma = 0.05$ & $73.96 \pm 0.90 \pm 0.38$ &$78.69 \pm 0.02 \pm 0.12$ \\
    RWP $\sigma = 0.07$ & $72.46 \pm 1.10 \pm 0.84$ &$77.19 \pm 0.02 \pm 0.34$ \\
    RWP + Quadratic Schedule & $\mathbf{74.33} \pm 0.88 \pm 0.12$ &$78.89 \pm 0.03 \pm 0.17$ \\
    \hline
  \end{tabular}}
\end{table}

To demonstrate that our findings are relevant to our practical motivation, we perform simulations of model inference on analog hardware accelerators. To perform these experiments, we execute neural network inference using CrossSim \cite{CrossSim}, an open-source accuracy simulator for AIMC-based matrix multiplication operations. We select two well-known memory devices for our inference modeling: RRAM (resistive random-access memory) \cite{RRAM} and SONOS flash (silicon-oxide-
nitride-oxide-silicon) \cite{SONOS}. 
The random programming error of these devices exhibit a more complex dependence on weight value than the additive, state-independent errors used in our perturbation experiments.
Whereas the error of SONOS empirically follows a small-$\sigma$ distribution, RRAM produces significantly larger magnitude errors; to quantify this, we calculate the RMSE between the totality of the perturbed/unperturbed model weights, which we find to be $23.936 \pm 0.004$ for RRAM, and $2.371 \pm 0.0005$ for SONOS. We compare these results in Tab. \ref{table:hardware_comp}. From the table, we see that the general trends observed previously hold true: both SAM and RWP improve noise robustness relative to standard SGD on the simulated hardware. On the small-noise SONOS hardware, SAM outperforms RWP, while on the large-noise RRAM, RWP outperforms SAM. In both cases, a further performance improvement is achieved through the adoption of a quadratic perturbation schedule. 


\section{Conclusion}
In this paper, we conducted a comprehensive study to understand weight-noise robustness from the perspectives of both generalization and optimization. We proposed a novel generalization bound hypothesizing that over-regularized RWP produces better-generalizing minima, and empirically confirm this to be true. We found that SAM can find noise-robust solutions, but is hampered by a vanishing gradient effect when perturbation strength is large. We then demonstrated that through the use of a dynamic perturbation schedule, this effect can be mitigated, improving noise robustness of both SAM and RWP. Lastly, we demonstrate these techniques on simulations of analog hardware, illustrating the practical implications of our results. We hope that this extensive investigation of training neural networks robust to weight noise spurs future research of this under-explored problem.

\section*{Acknowledgements}
This work was supported by the Sandia Laboratory-Directed Research and Development (LDRD) Program. In addition, this material is based upon work supported by the U.S. Department of Energy, Office of Science, Basic Energy Sciences, as part of the Microelectronics Energy Efficiency Research Center for Advanced Technologies (MEERCAT), a Microelectronics Science Research Center (MSRC). This article has been authored by an employee of National Technology \& Engineering Solutions of Sandia, LLC under Contract No. DE-NA0003525 with the U.S. Department of Energy (DOE). The employee owns all right, title and interest in and to the article and is solely responsible for its contents. The United States Government retains and the publisher, by accepting the article for publication, acknowledges that the United States Government retains a non-exclusive, paid-up, irrevocable, world-wide license to publish or reproduce the published form of this article or allow others to do so, for United States Government purposes. The DOE will provide public access to these results of federally sponsored research in accordance with the DOE Public Access Plan https://www.energy.gov/downloads/doe-public-access-plan. SAND\#2026-17038O

\section*{Impact Statement}
This paper presents work whose goal is to advance the field of Machine
Learning. There are many potential societal consequences of our work, none
which we feel must be specifically highlighted here.

\bibliography{example_paper}

@inproceedings{foret2021sharpnessaware,
  title={Sharpness-aware Minimization for Efficiently Improving Generalization},
  author={Pierre Foret and Ariel Kleiner and Hossein Mobahi and Behnam Neyshabur},
  booktitle={International Conference on Learning Representations},
  year={2021},
}

@inproceedings{RSAM,
 author = {Liu, Yong and Mai, Siqi and Cheng, Minhao and Chen, Xiangning and Hsieh, Cho-Jui and You, Yang},
 booktitle = {Advances in Neural Information Processing Systems},
 editor = {S. Koyejo and S. Mohamed and A. Agarwal and D. Belgrave and K. Cho and A. Oh},
 pages = {24543--24556},
 title = {Random Sharpness-Aware Minimization},
 volume = {35},
 year = {2022}
}

@inproceedings{li2024friendly,
  title={Friendly Sharpness-Aware Minimization},
  author={Li, Tao and Zhou, Pan and He, Zhengbao and Cheng, Xinwen and Huang, Xiaolin},
  booktitle={Proceedings of the IEEE/CVF Conference on Computer Vision and Pattern Recognition (CVPR)},
  year={2024}
}

@article{kwon2021asam,
  title={ASAM: Adaptive Sharpness-Aware Minimization for Scale-Invariant Learning of Deep Neural Networks},
  author={Kwon, Jungmin and Kim, Jeongseop and Park, Hyunseo and Choi, In Kwon},
  journal={arXiv preprint arXiv:2102.11600},
  year={2021}
}

@inproceedings{kim2022fishersaminformationgeometry,
      title={Fisher SAM: Information Geometry and Sharpness Aware Minimisation}, 
      author={Minyoung Kim and Da Li and Shell Xu Hu and Timothy M. Hospedales},
      year={2022},
      booktitle={Proceedings of Machine Learning Research},
}

@inproceedings{du2022efficientsharpnessawareminimizationimproved,
      title={Efficient Sharpness-aware Minimization for Improved Training of Neural Networks}, 
      author={Jiawei Du and Hanshu Yan and Jiashi Feng and Joey Tianyi Zhou and Liangli Zhen and Rick Siow Mong Goh and Vincent Y. F. Tan},
      year={2022},
      booktitle={International Conference on Learning Representations},
}

@inproceedings{jiang2023adaptivepolicyemploysharpnessaware,
      title={An Adaptive Policy to Employ Sharpness-Aware Minimization}, 
      author={Weisen Jiang and Hansi Yang and Yu Zhang and James Kwok},
      year={2023},
      booktitle={International Conference on Learning Representations},
}

@inproceedings{liu2022efficientscalablesharpnessawareminimization,
      title={Towards Efficient and Scalable Sharpness-Aware Minimization}, 
      author={Yong Liu and Siqi Mai and Xiangning Chen and Cho-Jui Hsieh and Yang You},
      year={2022},
      booktitle={Proceedings of the IEEE/CVF Conference on Computer Vision and Pattern Recognition (CVPR)},
}

@inproceedings{Mi2022,
 author = {Mi, Peng and Shen, Li and Ren, Tianhe and Zhou, Yiyi and Sun, Xiaoshuai and Ji, Rongrong and Tao, Dacheng},
 booktitle = {Advances in Neural Information Processing Systems},
 editor = {S. Koyejo and S. Mohamed and A. Agarwal and D. Belgrave and K. Cho and A. Oh},
 pages = {30950--30962},
 title = {Make Sharpness-Aware Minimization Stronger: A Sparsified Perturbation Approach},
 volume = {35},
 year = {2022}
}

@inproceedings{andriushchenko2022understandingsharpnessawareminimization,
      title={Towards Understanding Sharpness-Aware Minimization}, 
      author={Maksym Andriushchenko and Nicolas Flammarion},
      year={2022},
      booktitle={International Conference on Marchine Learning (ICML)},
}

@inproceedings{wen2023doessharpnessawareminimizationminimize,
      title={How Does Sharpness-Aware Minimization Minimize Sharpness?}, 
      author={Kaiyue Wen and Tengyu Ma and Zhiyuan Li},
      year={2023},
      booktitle={International Conference on Learning Representations},
}

@inproceedings{khanh2024,
 author = {Khanh, Pham Duy and Luong, Hoang-Chau and Mordukhovich, Boris S. and Tran, Dat Ba},
 booktitle = {Advances in Neural Information Processing Systems},
 editor = {A. Globerson and L. Mackey and D. Belgrave and A. Fan and U. Paquet and J. Tomczak and C. Zhang},
 pages = {13149--13182},
 title = {Fundamental Convergence Analysis of Sharpness-Aware Minimization},
 volume = {37},
 year = {2024}
}

@inproceedings{baek2024samrobustlabelnoise,
      title={Why is SAM Robust to Label Noise?}, 
      author={Christina Baek and Zico Kolter and Aditi Raghunathan},
      year={2024},
      booktitle={International Conference on Learning Representations},
}

@inproceedings{SAMPA,
 author = {Xie, Wanyun and Pethick, Thomas and Cevher, Volkan},
 booktitle = {Advances in Neural Information Processing Systems},
 editor = {A. Globerson and L. Mackey and D. Belgrave and A. Fan and U. Paquet and J. Tomczak and C. Zhang},
 pages = {51333--51357},
 title = {SAMPa: Sharpness-aware Minimization Parallelized},
 volume = {37},
 year = {2024}
}

@ARTICLE{an1996,

  author={An, Guozhong},

  journal={Neural Computation}, 

  title={The Effects of Adding Noise During Backpropagation Training on a Generalization Performance}, 

  year={1996},

  volume={8},

  number={3},

  pages={643-674},

  doi={10.1162/neco.1996.8.3.643}}

@misc{neelakantan2015addinggradientnoiseimproves,
      title={Adding Gradient Noise Improves Learning for Very Deep Networks}, 
      author={Arvind Neelakantan and Luke Vilnis and Quoc V. Le and Ilya Sutskever and Lukasz Kaiser and Karol Kurach and James Martens},
      year={2015},
      eprint={1511.06807},
      archivePrefix={arXiv},
      primaryClass={stat.ML},
}

@InProceedings{zhou19d,
  title = 	 {Toward Understanding the Importance of Noise in Training Neural Networks},
  author =       {Zhou, Mo and Liu, Tianyi and Li, Yan and Lin, Dachao and Zhou, Enlu and Zhao, Tuo},
  booktitle = 	 {Proceedings of the 36th International Conference on Machine Learning},
  pages = 	 {7594--7602},
  year = 	 {2019},
  editor = 	 {Chaudhuri, Kamalika and Salakhutdinov, Ruslan},
  volume = 	 {97},
  series = 	 {Proceedings of Machine Learning Research},
  month = 	 {09--15 Jun},
  publisher =    {PMLR}
}

@misc{bisla2022lowpassfilteringsgdrecovering,
      title={Low-Pass Filtering SGD for Recovering Flat Optima in the Deep Learning Optimization Landscape}, 
      author={Devansh Bisla and Jing Wang and Anna Choromanska},
      year={2022},
      eprint={2201.08025},
      archivePrefix={arXiv},
      primaryClass={cs.LG},
}

@inproceedings{möllenhoff2023samoptimalrelaxationbayes,
      title={SAM as an Optimal Relaxation of Bayes}, 
      author={Thomas Möllenhoff and Mohammad Emtiyaz Khan},
      year={2023},
      booktitle={International Conference on Learning Representations},
}

@article{li2024revisiting,
  title={Revisiting Random Weight Perturbation for Efficiently Improving Generalization},
  author={Li, Tao and Tao, Qinghua and Yan, Weihao and  Lei, Zehao and Wu, Yingwen and Fang, Kun and He, Mingzhen and Huang, Xiaolin},
journal={Transactions on Machine Learning Research (TMLR)},
  year={2024}
}

@InProceedings{rusak2020,
author="Rusak, Evgenia
and Schott, Lukas
and Zimmermann, Roland S.
and Bitterwolf, Julian
and Bringmann, Oliver
and Bethge, Matthias
and Brendel, Wieland",
editor="Vedaldi, Andrea
and Bischof, Horst
and Brox, Thomas
and Frahm, Jan-Michael",
title="A Simple Way to Make Neural Networks Robust Against Diverse Image Corruptions",
booktitle="Computer Vision -- ECCV 2020",
year="2020",
publisher="Springer International Publishing",
address="Cham",
pages="53--69",
isbn="978-3-030-58580-8"
}

@INPROCEEDINGS{fang2023,

  author={Fang, Xiuwen and Ye, Mang and Yang, Xiyuan},

  booktitle={2023 IEEE/CVF International Conference on Computer Vision (ICCV)}, 

  title={Robust Heterogeneous Federated Learning under Data Corruption}, 

  year={2023},

  volume={},

  number={},

  pages={4997-5007},

}

@inproceedings{kar20223d,
  title={3D Common Corruptions and Data Augmentation},
  author={Kar, O{\u{g}}uzhan Fatih and Yeo, Teresa and Atanov, Andrei and Zamir, Amir},
  booktitle={Proceedings of the IEEE/CVF Conference on Computer Vision and Pattern Recognition},
  pages={18963--18974},
  year={2022}
}

@inproceedings{mintun2021,
 author = {Mintun, Eric and Kirillov, Alexander and Xie, Saining},
 booktitle = {Advances in Neural Information Processing Systems},
 editor = {M. Ranzato and A. Beygelzimer and Y. Dauphin and P.S. Liang and J. Wortman Vaughan},
 pages = {3571--3583},
 title = {On Interaction Between Augmentations and Corruptions in Natural Corruption Robustness},
 volume = {34},
 year = {2021}
}

@InProceedings{Guo_2023_CVPR,
    author    = {Guo, Yong and Stutz, David and Schiele, Bernt},
    title     = {Improving Robustness of Vision Transformers by Reducing Sensitivity To Patch Corruptions},
    booktitle = {Proceedings of the IEEE/CVF Conference on Computer Vision and Pattern Recognition (CVPR)},
    month     = {June},
    year      = {2023},
    pages     = {4108-4118}
}

@inproceedings{PRIME2022,
    title = {PRIME: A Few Primitives Can Boost Robustness to Common Corruptions}, 
    author = {Apostolos Modas and Rahul Rade and Guillermo {Ortiz-Jim\'enez} and Seyed-Mohsen {Moosavi-Dezfooli} and Pascal Frossard},
    year = {2022},
    booktitle = {European Conference on Computer Vision (ECCV)}
}

@inproceedings{goodfellow2015explainingharnessingadversarialexamples,
      title={Explaining and Harnessing Adversarial Examples}, 
      author={Ian J. Goodfellow and Jonathon Shlens and Christian Szegedy},
      year={2015},
      booktitle={International Conference on Learning Representations},
}

@InProceedings{Mustafa_2019_ICCV,
author = {Mustafa, Aamir and Khan, Salman and Hayat, Munawar and Goecke, Roland and Shen, Jianbing and Shao, Ling},
title = {Adversarial Defense by Restricting the Hidden Space of Deep Neural Networks},
booktitle = {Proceedings of the IEEE/CVF International Conference on Computer Vision (ICCV)},
month = {October},
year = {2019}
}

@inproceedings{Yan2018,
 author = {Yan, Ziang and Guo, Yiwen and Zhang, Changshui},
 booktitle = {Advances in Neural Information Processing Systems},
 editor = {S. Bengio and H. Wallach and H. Larochelle and K. Grauman and N. Cesa-Bianchi and R. Garnett},
 pages = {},
 title = {Deep Defense: Training DNNs with Improved Adversarial Robustness},
 volume = {31},
 year = {2018}
}

@article{yang2022,
author = {Yang, Xiaoxuan and Wu, Changming and Li, Mo and Chen, Yiran},
title = {Tolerating Noise Effects in Processing-in-Memory Systems for Neural Networks: A Hardware–Software Codesign Perspective},
journal = {Advanced Intelligent Systems},
volume = {4},
number = {8},
pages = {2200029},
doi = {https://doi.org/10.1002/aisy.202200029},
eprint = {https://advanced.onlinelibrary.wiley.com/doi/pdf/10.1002/aisy.202200029},
year = {2022}
}

@INPROCEEDINGS{gokmen2019,

  author={Gokmen, Tayfun and Rasch, Malte J. and Haensch, Wilfried},

  booktitle={2019 IEEE International Electron Devices Meeting (IEDM)}, 

  title={The marriage of training and inference for scaled deep learning analog hardware}, 

  year={2019},

  volume={},

  number={},

  pages={22.3.1-22.3.4},

  doi={10.1109/IEDM19573.2019.8993573}}

@ARTICLE{kariyappa2021,

  author={Kariyappa, Sanjay and Tsai, Hsinyu and Spoon, Katie and Ambrogio, Stefano and Narayanan, Pritish and Mackin, Charles and Chen, An and Qureshi, Moinuddin and Burr, Geoffrey W.},

  journal={IEEE Transactions on Electron Devices}, 

  title={Noise-Resilient DNN: Tolerating Noise in PCM-Based AI Accelerators via Noise-Aware Training}, 

  year={2021},

  volume={68},

  number={9},

  pages={4356-4362},

  doi={10.1109/TED.2021.3089987}}

@article{Rasch2023,
  author = {Malte J. Rasch and Charles Mackin and Manuel Le Gallo and An Chen and Andrea Fasoli and Frédéric Odermatt and Ning Li and S. R. Nandakumar and Pritish Narayanan and Hsinyu Tsai and Geoffrey W. Burr and Abu Sebastian and Vijay Narayanan},
  title = {Hardware-aware training for large-scale and diverse deep learning inference workloads using in-memory computing-based accelerators},
  journal = {Nature Communications},
  volume = {14},
  number = {1},
  pages = {5282},
  year = {2023},
  month = {August},
  doi = {10.1038/s41467-023-40770-4},
  issn = {2041-1723}
}

@ARTICLE{xiao2023,

  author={Xiao, T. Patrick and Feinberg, Ben and Bennett, Christopher H. and Prabhakar, Venkatraman and Saxena, Prashant and Agrawal, Vineet and Agarwal, Sapan and Marinella, Matthew J.},

  journal={IEEE Circuits and Systems Magazine}, 

  title={On the Accuracy of Analog Neural Network Inference Accelerators}, 

  year={2022},

  volume={22},

  number={4},

  pages={26-48},

  doi={10.1109/MCAS.2022.3214409}}

@inproceedings{visualloss,
  title={Visualizing the Loss Landscape of Neural Nets},
  author={Li, Hao and Xu, Zheng and Taylor, Gavin and Studer, Christoph and Goldstein, Tom},
  booktitle={Neural Information Processing Systems},
  year={2018}
}

@article{Keskar2016,
	author = {Nitish Shirish Keskar and Dheevatsa Mudigere and Jorge Nocedal and Mikhail Smelyanskiy and Ping Tak Peter Tang},
	title = {On Large-Batch Training for Deep Learning: Generalization Gap and Sharp Minima},
	journal = {arXiv preprint arXiv:1609.04836},
	year = {2016}
}

@inproceedings{DR17,
        title = {Computing Nonvacuous Generalization Bounds for Deep (Stochastic) Neural Networks with Many More Parameters than Training Data},
       author = {Gintare Karolina Dziugaite and Daniel M. Roy},
         year = {2017},
    booktitle = {Proceedings of the 33rd Annual Conference on Uncertainty in Artificial Intelligence (UAI)},
archivePrefix = {arXiv},
       eprint = {1703.11008},
}

@inproceedings{jiang2019fantasticgeneralizationmeasures,
      title={Fantastic Generalization Measures and Where to Find Them}, 
      author={Yiding Jiang and Behnam Neyshabur and Hossein Mobahi and Dilip Krishnan and Samy Bengio},
      year={2019},
      booktitle = {International Conference on Learning Representations},
}

@article{Sebastian2020,
  author = {Abu Sebastian and Manuel Le Gallo and Riduan Khaddam-Aljameh and Evangelos Eleftheriou},
  title = {Memory devices and applications for in-memory computing},
  journal = {Nature Nanotechnology},
  volume = {15},
  number = {7},
  pages = {529--544},
  year = {2020},
  month = {July},
  doi = {10.1038/s41565-020-0655-z},
  issn = {1748-3395}
}

@article{He2015,
	author = {Kaiming He and Xiangyu Zhang and Shaoqing Ren and Jian Sun},
	title = {Deep Residual Learning for Image Recognition},
	journal = {arXiv preprint arXiv:1512.03385},
	year = {2015}
}

@Techreport{krizhevsky2009learning,
 author = {Krizhevsky, Alex and Hinton, Geoffrey},
 address = {Toronto, Ontario},
 institution = {University of Toronto},
 number = {0},
 publisher = {Technical report, University of Toronto},
 title = {Learning multiple layers of features from tiny images},
 year = {2009},
 title_with_no_special_chars = {Learning multiple layers of features from tiny images},
}

@misc{CrossSim,
  author={Ben Feinberg and T. Patrick Xiao and Curtis J. Brinker and Christopher H. Bennett and Matthew J. Marinella and Sapan Agarwal},
  title={CrossSim: accuracy simulation of analog in-memory computing},
  url = {https://github.com/sandialabs/cross-sim},
  year={2022}
}

@INPROCEEDINGS{RRAM,

  author={Milo, Valerio and Anzalone, Francesco and Zambelli, Cristian and Pérez, Eduardo and Mahadevaiah, Mamathamba K. and Ossorio, Oscar G. and Olivo, Piero and Wenger, Christian and Ielmini, Daniele},

  booktitle={2021 IEEE International Reliability Physics Symposium (IRPS)}, 

  title={Optimized programming algorithms for multilevel RRAM in hardware neural networks}, 

  year={2021},

  volume={},

  number={},

  pages={1-6},

  doi={10.1109/IRPS46558.2021.9405119}}

@ARTICLE{SONOS,

  author={Xiao, T. Patrick and Feinberg, Ben and Bennett, Christopher H. and Agrawal, Vineet and Saxena, Prashant and Prabhakar, Venkatraman and Ramkumar, Krishnaswamy and Medu, Harsha and Raghavan, Vijay and Chettuvetty, Ramesh and Agarwal, Sapan and Marinella, Matthew J.},

  journal={IEEE Transactions on Circuits and Systems I: Regular Papers}, 

  title={An Accurate, Error-Tolerant, and Energy-Efficient Neural Network Inference Engine Based on SONOS Analog Memory}, 

  year={2022},

  volume={69},

  number={4},

  pages={1480-1493},

  doi={10.1109/TCSI.2021.3134313}}

@INPROCEEDINGS{imagenet,

  author={Deng, Jia and Dong, Wei and Socher, Richard and Li, Li-Jia and Kai Li and Li Fei-Fei},

  booktitle={2009 IEEE Conference on Computer Vision and Pattern Recognition}, 

  title={ImageNet: A large-scale hierarchical image database}, 

  year={2009},

  volume={},

  number={},

  pages={248-255}}

@misc{Le2015TinyIV,
  title={Tiny ImageNet Visual Recognition Challenge},
  author={Ya Le and Xuan S. Yang},
  year={2015},
}

@article{DPRN,
  title={Deep Pyramidal Residual Networks},
  author={Han, Dongyoon and Kim, Jiwhan and Kim, Junmo},
  journal={IEEE CVPR},
  year={2017}
}

@article{chatterji2020,
      title={The intriguing role of module criticality in the generalization of deep networks}, 
      author={Niladri S. Chatterji and Behnam Neyshabur and Hanie Sedghi},
      year={2020},
      journal={International Conference on Learning Representations},
}

@inproceedings{McAllester1999,
    author = {McAllester, David A.},
    title = {PAC-Bayesian model averaging},
    year = {1999},
    isbn = {1581131674},
    publisher = {Association for Computing Machinery},
    address = {New York, NY, USA},
    doi = {10.1145/307400.307435},
    booktitle = {Proceedings of the Twelfth Annual Conference on Computational Learning Theory},
    pages = {164–170},
    numpages = {7},
    location = {Santa Cruz, California, USA},
    series = {COLT '99}
    }
\bibliographystyle{icml2026}

\newpage
\appendix
\onecolumn

\section{Proof of Theorem 4.1}
\label{sec:pac_bayes_proof}

First, we prove the following between relation between $\mathbb{E}_{\epsilon \sim \mathcal{N}(0,\sigma^2_{test})}[L_{\mathcal{D}}(w+\epsilon)]$ and $\mathbb{E}_{\epsilon \sim \mathcal{N}(0,\sigma^2_{train})}[L_{\mathcal{D}}(w+\epsilon)]$:

\begin{lemma}
  \label{lem:increasing_sigma}
  Assume $\sigma_{train}$,  $\sigma_{test}$ small and $\Delta L_{\mathcal{D}} > 0$. Then if $\sigma_{train} > \sigma_{test}$, we have that 
  \begin{equation*}
      \mathbb{E}_{\epsilon \sim \mathcal{N}(0,\sigma^2_{test})}[L_{\mathcal{D}}(w+\epsilon)] < \mathbb{E}_{\epsilon \sim \mathcal{N}(0,\sigma^2_{train})}[L_{\mathcal{D}}(w+\epsilon)]
  \end{equation*}
\end{lemma}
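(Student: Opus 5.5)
The plan is a second-order Taylor expansion of the Gaussian-smoothed population loss in the noise scale. Throughout, $\Delta L_{\mathcal{D}}$ denotes the Laplacian (trace of the Hessian) of $L_{\mathcal{D}}$ at $w$. The assumption $\Delta L_{\mathcal{D}}>0$ is read as a positive Laplacian at $w$, and the small-$\sigma$ hypothesis as licence to keep terms only up to order $\sigma^2$, with a remainder controlled uniformly (e.g.\ by assuming $L_{\mathcal{D}}$ is $C^3$ or has bounded fourth derivatives near $w$).

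\textbf{Step 1: expand the loss around $w$.} For $\epsilon\sim\mathcal{N}(0,\sigma^2 \mathbb{I})$,
\begin{equation*}
L_{\mathcal{D}}(w+\epsilon) = L_{\mathcal{D}}(w) + \nabla L_{\mathcal{D}}(w)^\top \epsilon + \tfrac12 \epsilon^\top \nabla^2 L_{\mathcal{D}}(w)\epsilon + R(\epsilon).
\end{equation*}

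\textbf{Step 2: take expectations.} The linear term vanishes since the noise has zero mean. The quadratic term gives $\tfrac{\sigma^2}{2}\operatorname{tr}\nabla^2 L_{\mathcal{D}}(w) = \tfrac{\sigma^2}{2}\Delta L_{\mathcal{D}}(w)$ by isotropy. By Gaussian symmetry the odd moments vanish, so the cubic part of the remainder has zero expectation and $\mathbb{E}[R(\epsilon)] = O(\sigma^4)$. Writing $F(\sigma) := \mathbb{E}_{\epsilon\sim\mathcal{N}(0,\sigma^2)}[L_{\mathcal{D}}(w+\epsilon)]$, this yields
\begin{equation*}
F(\sigma) = L_{\mathcal{D}}(w) + \tfrac{\sigma^2}{2}\Delta L_{\mathcal{D}}(w) + O(\sigma^4).
\end{equation*}

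\textbf{Step 3: compare the two scales.} Subtracting,
\begin{equation*}
F(\sigma_{train})-F(\sigma_{test}) = \tfrac12(\sigma_{train}^2-\sigma_{test}^2)\Delta L_{\mathcal{D}}(w) + O(\sigma_{train}^4).
\end{equation*}
The leading term is strictly positive because $\sigma_{train}>\sigma_{test}$ and $\Delta L_{\mathcal{D}}>0$. Smallness of the noise scales is what lets this term dominate the remainder, which gives the strict inequality.

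\textbf{Alternative route.} The heat-equation identity $\partial_t \mathbb{E}[L_{\mathcal{D}}(w+\sqrt{t}\,Z)] = \tfrac12 \mathbb{E}[\Delta L_{\mathcal{D}}(w+\sqrt{t}\,Z)]$ gives a cleaner argument with no Taylor remainder to control. If the Laplacian stays positive on the relevant neighbourhood (continuity plus small $\sigma$, or reading the hypothesis as positivity of the smoothed Laplacian), then $F$ is strictly increasing in $t=\sigma^2$, and integrating from $\sigma_{test}^2$ to $\sigma_{train}^2$ gives the result directly. I would present this version and mention the Taylor version as the intuition.

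\textbf{Main obstacle.} The difficulty is making ``small'' rigorous: the $O(\sigma^4)$ remainder must be dominated by $\tfrac12(\sigma_{train}^2-\sigma_{test}^2)\Delta L_{\mathcal{D}}$. This can fail when $\sigma_{train}\approx\sigma_{test}$ unless the remainder is handled carefully, which is exactly why I prefer the monotonicity argument. A secondary point is the weight-scaled variance $\max_j|w_j|\sigma^2$. This only rescales both variances by the same positive constant $c=\max_j|w_j|$, so the comparison between $\sigma_{train}$ and $\sigma_{test}$ is preserved.
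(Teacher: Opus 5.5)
Your Steps 1--3 are the paper's proof. The paper expands $L_{\mathcal{D}}(w+\epsilon)$ to second order, drops the linear term by zero mean, and evaluates the quadratic term as $\frac{1}{2}\sigma^2\,\mathrm{tr}(\nabla^2 L_{\mathcal{D}}(w))=\frac12\sigma^2\Delta L_{\mathcal{D}}(w)$. It then treats the truncated expansion as an exact identity, with no remainder at all, and reads off strict monotonicity in $\sigma$.

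The route you say you would actually present is genuinely different. It rests on the identity $\frac{d}{dt}\mathbb{E}[L_{\mathcal{D}}(w+\sqrt{t}Z)]=\frac12\mathbb{E}[\Delta L_{\mathcal{D}}(w+\sqrt{t}Z)]$. Because this is exact rather than an approximation, strictness holds for every pair $\sigma_{test}<\sigma_{train}$ wherever the smoothed Laplacian is positive. That includes the near-equal case you correctly identify as the weak point of the naive $O(\sigma_{train}^4)$ bound. The Taylor route can be rescued the same way: view the smoothed loss as a smooth function of $t=\sigma^2$ and apply the mean value theorem, which gives a remainder difference of order $(\sigma_{train}^2-\sigma_{test}^2)\,\sigma_{train}^2$.

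The cost of the heat-equation route is that it needs a hypothesis on $\Delta L_{\mathcal{D}}$ along the whole Gaussian cloud, not just at the point $w$. Gaussian noise has unbounded support, so continuity near $w$ is not enough to recover the paper's pointwise assumption. You also need a global growth bound on $\Delta L_{\mathcal{D}}$; dominated convergence then gives $\mathbb{E}[\Delta L_{\mathcal{D}}(w+\sqrt{t}Z)]\to\Delta L_{\mathcal{D}}(w)>0$ as $t\to 0$.

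The same caveat applies to your Taylor remainder. Derivative bounds ``near $w$'' do not control a Gaussian expectation. Also, $C^3$ alone only gives $O(\sigma^3)$; the $O(\sigma^4)$ claim needs fourth-order control.

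Your side remark on the weight-scaled variance is not needed, since the lemma as stated is isotropic. It is also only exact if the maximum is taken globally. The paper scales per filter, so the covariance is $\sigma^2 C$ with $C$ diagonal but not a multiple of the identity. The quadratic term then becomes $\frac{\sigma^2}{2}\mathrm{tr}(C\nabla^2 L_{\mathcal{D}}(w))$, and what you need is positivity of this weighted trace, not of $\Delta L_{\mathcal{D}}$.
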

We remark that the assumption of $\Delta L_{\mathcal{D}} > 0$ is expected to hold at convergence: even if optimization converges to a saddle point (as is the case generally), we expect a majority of Hessian eigenvalues to be positive (i.e. increasing in more directions than decreasing).
\begin{proof}
     Noting that the $\sigma$ are small, we use a second-order Taylor expansion to express $L_{\mathcal{D}}(w+\epsilon)$:

    \begin{align*}
        L_{\mathcal{D}}(w+\epsilon) \approx L_{\mathcal{D}}(w)+\nabla L(w)\epsilon + \frac{1}{2}\epsilon^T\nabla^2L(w)\epsilon
    \end{align*}
    Taking the expectation of both sides w.r.t $\epsilon$, we have:
    \begin{align*}
        \mathbb{E}_{\epsilon \sim \mathcal{N}(0,\sigma^2)}[L_{\mathcal{D}}(w+\epsilon)] = L_{\mathcal{D}}(w)+\cancel{\mathbb{E}_{\epsilon \sim \mathcal{N}(0,\sigma^2)}[\nabla L_{\mathcal{D}}(w)\epsilon]} + \mathbb{E}_{\epsilon \sim \mathcal{N}(0,\sigma^2)}\left[\frac{1}{2}\epsilon^T\nabla^2L_{\mathcal{D}}(w)\epsilon\right] \\
        = L_{\mathcal{D}}(w)+ \mathbb{E}_{\epsilon \sim \mathcal{N}(0,\sigma^2)}\left[\frac{1}{2}Tr(\epsilon^T\nabla^2L_{\mathcal{D}}(w)\epsilon)\right] \\
        = L_{\mathcal{D}}(w)+ \mathbb{E}_{\epsilon \sim \mathcal{N}(0,\sigma^2)}\left[\frac{1}{2}Tr(\epsilon\epsilon^T\nabla^2L_{\mathcal{D}}(w))\right] \\
        = L_{\mathcal{D}}(w) + \frac{1}{2}\sigma^2 Tr(\nabla^2 L_{\mathcal{D}}(w)) \\
        = L_{\mathcal{D}}(w) + \frac{1}{2}\sigma^2\Delta L_{\mathcal{D}}(w)
    \end{align*}
    Because $\Delta L_{\mathcal{D}} > 0$, $\mathbb{E}_{\epsilon \sim \mathcal{N}(0,\sigma^2)}[L_{\mathcal{D}}(w+\epsilon)]$ strictly increases as $\sigma$ increases, hence proving the lemma.
\end{proof}

\begin{theorem}
    \label{thm:full_bound}
    Assume $\Delta L_{\mathcal{D}} > 0$. For any small $\sigma_{train}$, $\sigma_{test}$ where $\sigma_{train} > \sigma_{test}$, the following holds with probability $1-\delta$:
    \begin{align}
          \label{eq:new_bound}
          \mathbb{E}_{\epsilon \sim \mathcal{N}(0,\sigma^2_{test})}[L_{\mathcal{D}}(w+\epsilon)] \leq \nonumber \\
          \mathbb{E}_{\epsilon \sim \mathcal{N}(0,\sigma^2_{train})}[L_{\mathcal{S}}(w+\epsilon)] + \sqrt{\frac{\frac{1}{4}k\log\left( 1 + \frac{||w||^2}{k\sigma_{train}^2} \right)+\frac{1}{4}+\log\frac{n}{\delta}+2\log(6n+3k)}{n-1}}
    \end{align}
    where $n=|\mathcal{S}|$ and $k$ is the number of model parameters.
\end{theorem}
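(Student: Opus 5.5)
The plan is to chain Lemma \ref{lem:increasing_sigma} with a standard PAC-Bayes bound in the form used for SAM by Foret et al., which itself comes from McAllester's bound with a union bound over a grid of prior variances.

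\textbf{Step 1: compare test and train noise on the population loss.} Since $\sigma_{train} > \sigma_{test}$ are small and $\Delta L_{\mathcal{D}} > 0$, Lemma \ref{lem:increasing_sigma} gives
\[
\mathbb{E}_{\epsilon \sim \mathcal{N}(0,\sigma^2_{test})}[L_{\mathcal{D}}(w+\epsilon)] < \mathbb{E}_{\epsilon \sim \mathcal{N}(0,\sigma^2_{train})}[L_{\mathcal{D}}(w+\epsilon)].
\]
It therefore suffices to bound the right-hand side, the population loss at the training noise level, by the empirical loss at the same noise level plus the stated complexity term.

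\textbf{Step 2: apply PAC-Bayes.} Take the posterior $Q=\mathcal{N}(w,\sigma_{train}^2 I)$ and a prior $P=\mathcal{N}(0,\sigma_P^2 I)$. McAllester's bound states that with probability $1-\delta$,
\[
\mathbb{E}_Q L_{\mathcal{D}} \le \mathbb{E}_Q L_{\mathcal{S}} + \sqrt{\frac{KL(Q\|P)+\log\frac{n}{\delta}}{2(n-1)}}.
\]
For isotropic Gaussians,
\[
KL(Q\|P)=\tfrac12\Bigl[\tfrac{k\sigma_{train}^2+\|w\|^2}{\sigma_P^2}-k+k\log\tfrac{\sigma_P^2}{\sigma_{train}^2}\Bigr].
\]
Ideally one would set $\sigma_P^2=\sigma_{train}^2+\|w\|^2/k$. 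This gives $KL=\tfrac k2\log(1+\|w\|^2/(k\sigma_{train}^2))$, which becomes the $\tfrac14 k\log(\cdot)$ term after the factor $2$ in the denominator is absorbed.

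\textbf{Step 3: the main obstacle, a data-independent prior.} The ideal $\sigma_P$ depends on $w$, so it cannot be used directly. I would follow Foret et al.: fix a geometric grid of prior variances $\sigma_{P,j}^2=c\exp((1-j)/k)$ and apply the bound to each with confidence $\delta_j=6\delta/(\pi^2 j^2)$. A union bound makes all of them hold simultaneously. Then pick the $j$ whose $\sigma_{P,j}^2$ lies just above $\sigma_{train}^2+\|w\|^2/k$.

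Rounding up to that grid point multiplies the variance ratio by at most $e^{1/k}$, which adds $\tfrac12$ to the KL, i.e. the $\tfrac14$ term. The $\log(1/\delta_j)$ cost is $\log\frac{\pi^2 j^2}{6}\le 2\log j+\text{const}$. Because the loss is bounded, nontrivial cases have $\|w\|^2$ at most polynomial in $n$, which bounds the index, $j\le 1+k\log(\cdot)$. This leads to the $2\log(6n+3k)$ term.

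The trivial case, where $\|w\|^2$ is too large for the grid, makes the right-hand side exceed the loss bound, so the inequality holds vacuously. The delicate part is getting these constants exactly; I would follow Foret et al.'s Theorem 1 appendix line by line.

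\textbf{Step 4: conclude.} Combine Step 1 with the bound from Steps 2–3 to obtain \eqref{eq:new_bound}. Since the square-root term is monotone in $\|w\|^2/\sigma_{train}^2$ (with $k,n,\delta$ fixed), this also yields Theorem \ref{thm:mainbound}.
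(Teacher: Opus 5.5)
Your proposal is correct and matches the paper's proof. The paper also uses Lemma~\ref{lem:increasing_sigma} to replace the $\sigma_{test}$-perturbed population loss by the $\sigma_{train}$-perturbed one, then applies the Foret et al./Chatterji et al.\ PAC-Bayes bound at $\sigma=\sigma_{train}$; it simply cites that bound rather than re-deriving it as in your Steps 2--3. One small correction to your sketch: the nontrivial regime is $\|w\|^2\le k\sigma_{train}^2\,(e^{4n/k}-1)$, exponential in $n/k$ rather than polynomial in $n$, and capping the prior grid at $c=\sigma_{train}^2(1+e^{4n/k})$ is what gives $j\le 1+k+4n$ and hence the $2\log(6n+3k)$ term.
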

\begin{proof}
    We begin with a perturbed version of the PAC Bayes generalization bound, as derived in \cite{foret2021sharpnessaware,chatterji2020}:
    \begin{align}
          \mathbb{E}_{\epsilon \sim \mathcal{N}(0,\sigma^2)}[L_{\mathcal{D}}(w+\epsilon)] \leq \nonumber \\
          \mathbb{E}_{\epsilon \sim \mathcal{N}(0,\sigma^2)}[L_{\mathcal{S}}(w+\epsilon)] + \sqrt{\frac{\frac{1}{4}k\log\left( 1 + \frac{||w||^2}{k\sigma^2} \right)+\frac{1}{4}+\log\frac{n}{\delta}+2\log(6n+3k)}{n-1}}
    \end{align}
    From Lemma \ref{lem:increasing_sigma}, we know that   
    \begin{equation*}
      \mathbb{E}_{\epsilon \sim \mathcal{N}(0,\sigma^2_{1})}[L_{\mathcal{D}}(w+\epsilon)] < \mathbb{E}_{\epsilon \sim \mathcal{N}(0,\sigma^2_{2})}[L_{\mathcal{D}}(w+\epsilon)]
  \end{equation*}
  if $\sigma_2 > \sigma_1$, meaning that any upper bound of $\mathbb{E}_{\epsilon \sim \mathcal{N}(0,\sigma^2_{2})}[L_{\mathcal{D}}(w+\epsilon)]$ must likewise upper bound $\mathbb{E}_{\epsilon \sim \mathcal{N}(0,\sigma^2_{1})}[L_{\mathcal{D}}(w+\epsilon)]$. Thus, by replacing the bound on $\mathbb{E}_{\epsilon \sim \mathcal{N}(0,\sigma^2_{test})}[L_{\mathcal{D}}(w+\epsilon)]$ with that of $\sigma_{train}$, we arrive at Eq. \ref{eq:new_bound}.
\end{proof}

\section{Implementation Details}
All Cifar-100/Tiny-ImageNet models are trained for 200 epochs using SGD as the base optimizer with batch size 128. We use an initial learning rate of $0.05$ with a cosine learning rate schedule. We use a weight decay of $5*10^{-4}$ and momentum of $0.9$. We apply the standard data augmentations of random cropping, flipping, and normalization. In addition, we train using label smoothing of 0.1. For the ImageNet-100 models, we train for 100 epochs using an initial learning rate of $1$ and weight decay of $1*10^{-4}$. In all experiments, early stopping is applied to select the training epoch which achieves the highest perturbed test accuracy.

\section{Visualizing Test Error vs Noise Strength}
Our proof of Theorem \ref{thm:full_bound} relies on the assumption that test error strictly increases as $\sigma_{test}$ is increased. Although we expect this to hold true in the case that $\sigma_{train}$, $\sigma_{test}$ are small, it is not immediately clear that our experimental settings satisfy this condition. To empirically confirm the validity of this assumption, we plot the test error as a function of $\sigma_{test}$ for several models trained with RWP in Fig. \ref{fig:acc_vs_sigma}. From these results, we see that test error consistently increases over the range of $\sigma_{test}$ values we evaluate.

\begin{figure*}
    \begin{subfigure}{0.33\textwidth}
      \centering
      \includegraphics[width=1.0\linewidth]{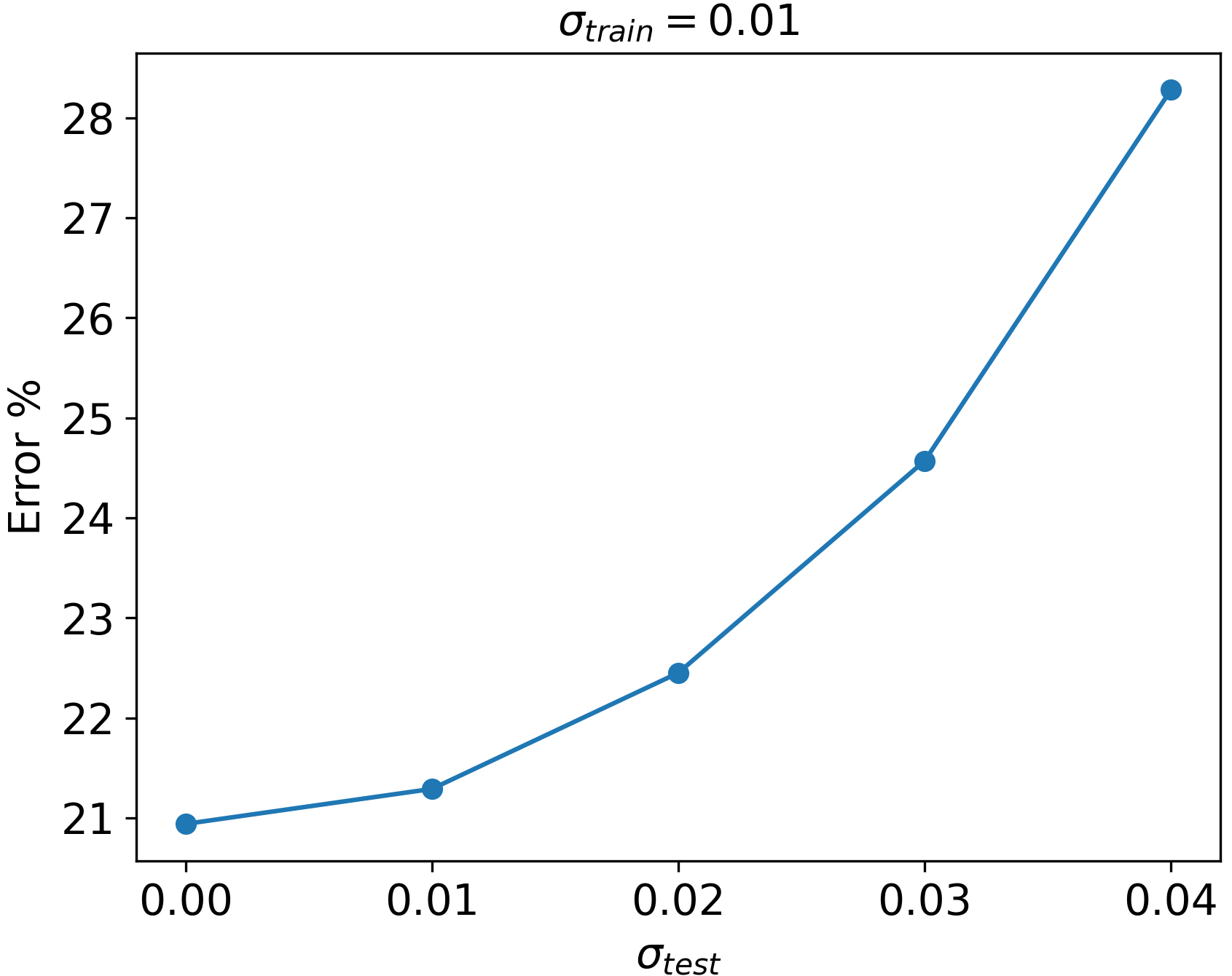}
      \caption{}
    \end{subfigure}
    \begin{subfigure}{.33\textwidth}
      \centering
      \includegraphics[width=1.0\linewidth]{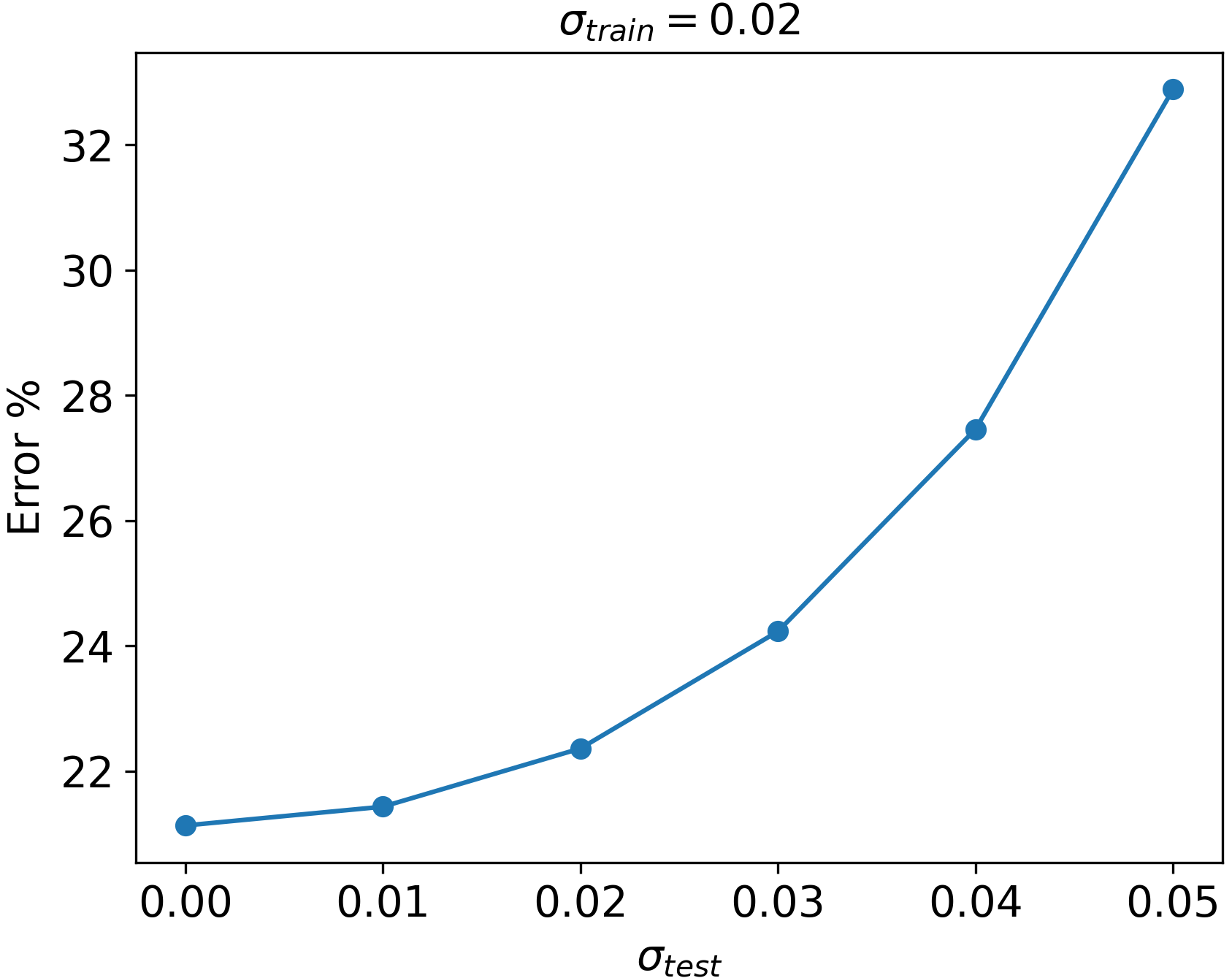}
      \caption{}
    \end{subfigure}
    \begin{subfigure}{.33\textwidth}
      \centering
      \includegraphics[width=1.0\linewidth]{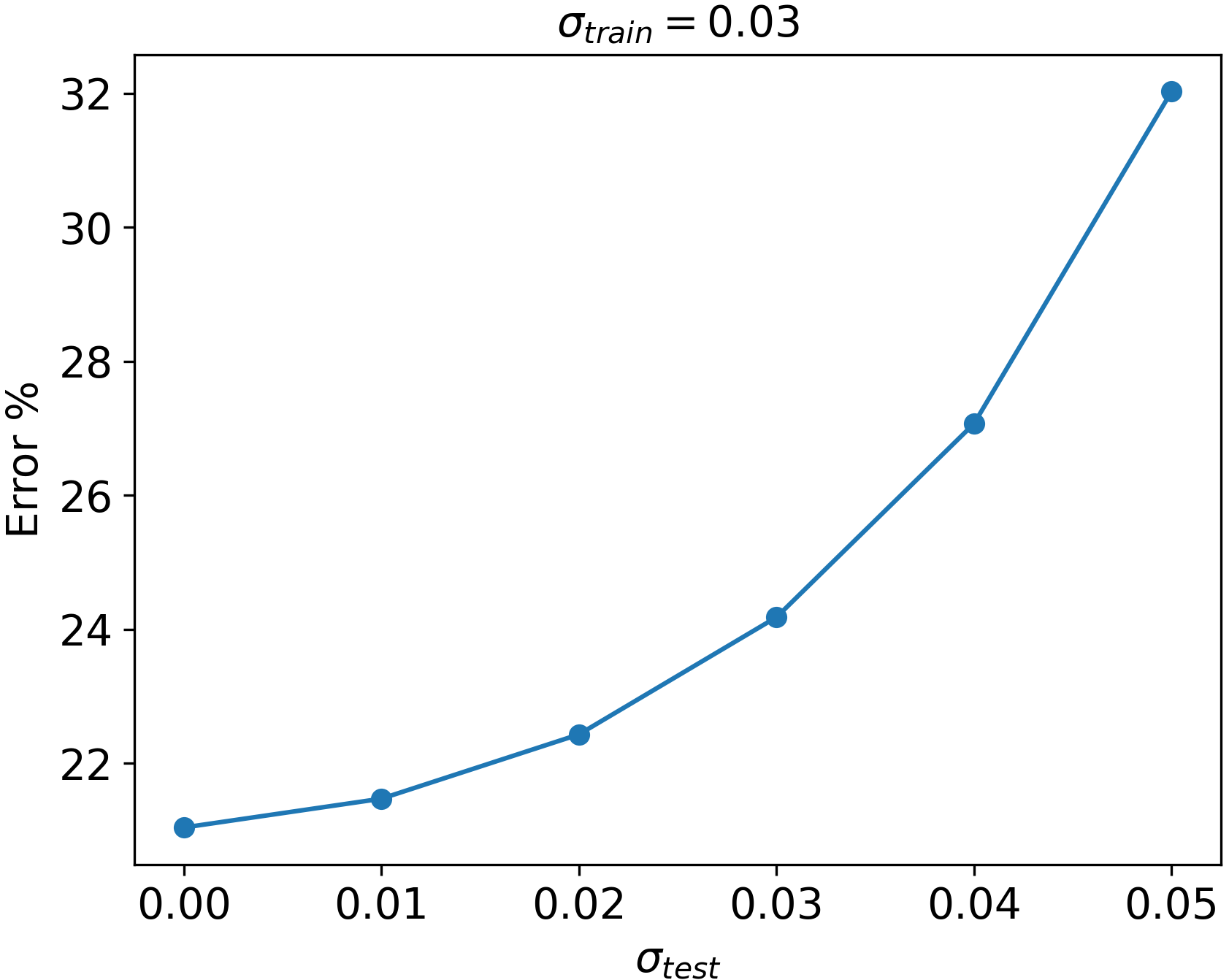}
      \caption{}
    \end{subfigure}
  \caption{Plot of noisy test accuracy as a function of $\sigma_{test}$ for RWP trained with (a) $\sigma_{train} = 0.01$, (b) $\sigma_{train} = 0.02$, and (c) $\sigma_{train} = 0.03$. }
  \label{fig:acc_vs_sigma}
\end{figure*}

\section{Detailing the Dynamics of Perturbed Training}
\label{sec:perturbed_dynamics}
\subsection{Understanding SAM's Overfitting}
\label{sec:sam_overfit}
To empirically investigate the evolution of SAM's noise robustness during training, we plot the cosine similarity between the original loss gradient $\nabla L(\mathbf{w})$ and the perturbed gradient $\nabla L(\mathbf{w}+\mathbf{\epsilon})$ for both SAM and RWP, shown in Fig. \ref{fig:cos_sim}. From this, we see that for RWP, the cosine similarity between the gradients gradually diverges as training progresses and the optimization path enters a deeper loss valley with more rapidly-changing gradients. As such, the gradients are, on average, nearly orthogonal by the end of training. On the other hand, in the SAM-trained models, the cosine similarity drops sharply after the early training iterations, but thereafter roughly plateaus for the rest of the training schedule. As a result, even as the training converges to a sharper minimum, the SAM gradient still contains a \textit{significant} component along the direction of the original gradient. Hence, further training will continue to push the model toward a minimum more characteristic of the one found by SGD, although at an attenuated rate. For the strongly perturbed SAM using $\rho=1$, the gradients do diverge at later training epochs; however the strong perturbations in early iterations prevent the optimization from progressing meaningfully, as will be shown in the following section.

\begin{figure}
    \begin{subfigure}{.5\textwidth}
      \centering
      \includegraphics[width=1.0\linewidth]{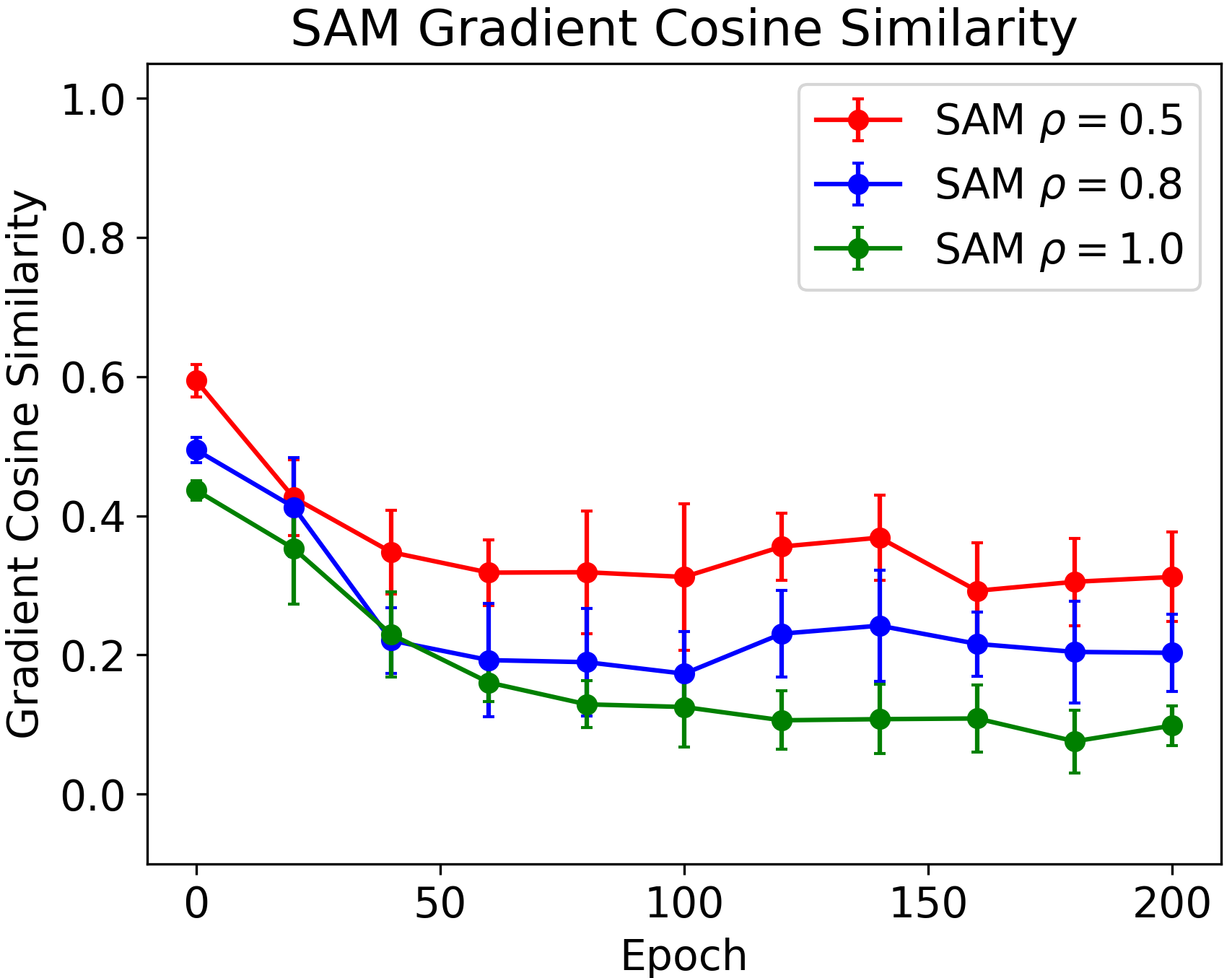}
      \caption{}
      \label{fig:sam_cos_sim}
    \end{subfigure}%
    \begin{subfigure}{.5\textwidth}
      \centering
      \includegraphics[width=1.0\linewidth]{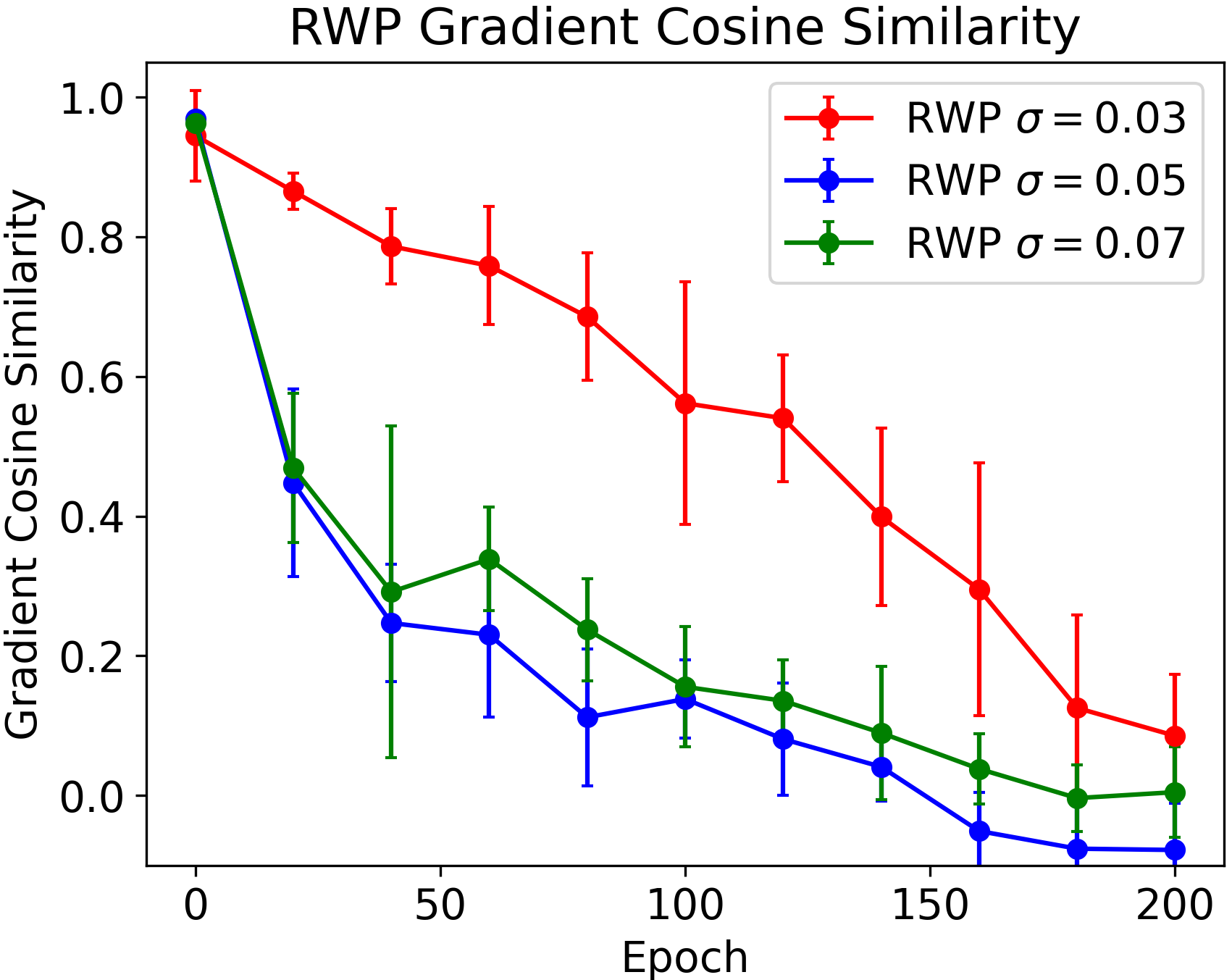}
      \caption{}
      \label{fig:rwp_cos_sim}
    \end{subfigure}
    \caption{Plot of cosine similarity between perturbed/unperturbed gradients for various (a) SAM and (b) RWP configurations on a ResNet-18 as a function of training epoch.}
    \label{fig:cos_sim}
\end{figure}

\subsection{Charting SAM/RWP Performance}
As a corollary to our understanding of noisy generalization, we also attempt to capture the shape of the test loss minimum through visualizing the depth (accuracy in unperturbed setting) vs. width (accuracy loss when perturbations are applied) of the minimum. To do so, we plot $\sigma_{test}=0.02$ \& $\sigma_{test}=0.07$ test accuracy vs. unperturbed test accuracy for a variety of SGD, SAM and RWP models trained using varying $\sigma_{train}$ \& $\rho$ in Fig. \ref{fig:performance_comp}. For $\sigma_{test}=0.02$, we observe that SAM and RWP follow a similar linear trend, but with the best SAM minima capable of achieving both lower perturbed and unperturbed accuracy. This implies that SAM's improvement to noisy generalization largely arises from the same mechanism that improves SAM's noiseless generalization, as these lower-loss minima simultaneously exhibit lower perturbed loss. In the case of $\sigma_{test}=0.07$, we again observe a roughly linear trend with respect to SAM minima, with the solution exhibiting the highest perturbed accuracy likewise achieving the highest unperturbed accuracy. This consistent trend confirms that SAM \textit{cannot} effectively increase noisy generalization at the cost of the noiseless generalization (i.e. trading a deep \& narrow minimum for a shallow \& wide one). On the other hand, we see that in the case of RWP, both deep-but-sharp and flat-but-shallow (the right and left regions respectively) perform non-ideally, with a modest region in the middle representing the optimal trade-off.

\begin{figure*}
    \begin{subfigure}{.5\textwidth}
      \centering
      \includegraphics[width=1.0\linewidth]{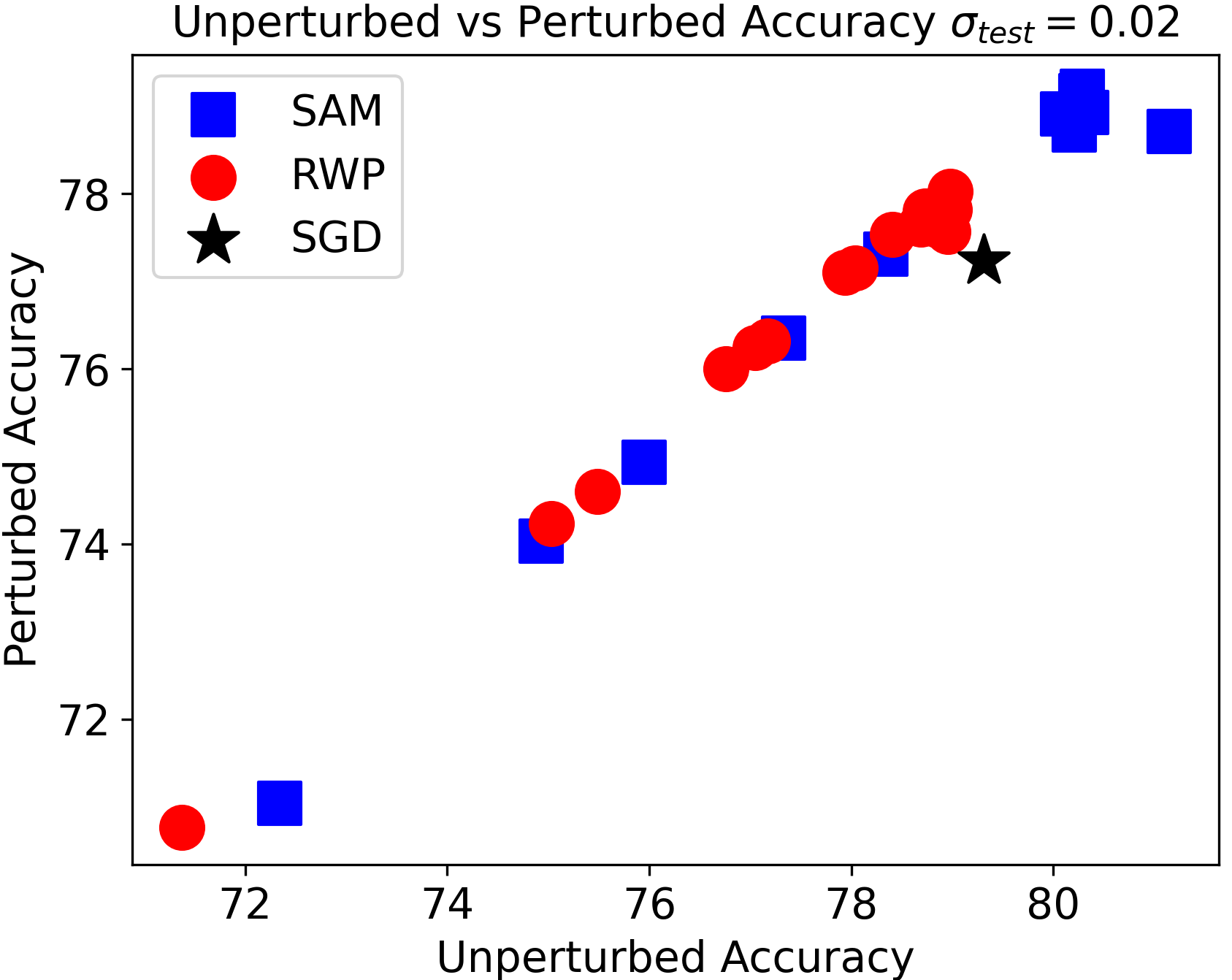}
      \caption{}
      \label{fig:perturb_vs_unperturb_07}
    \end{subfigure}%
    \begin{subfigure}{.5\textwidth}
      \centering
      \includegraphics[width=1.0\linewidth]{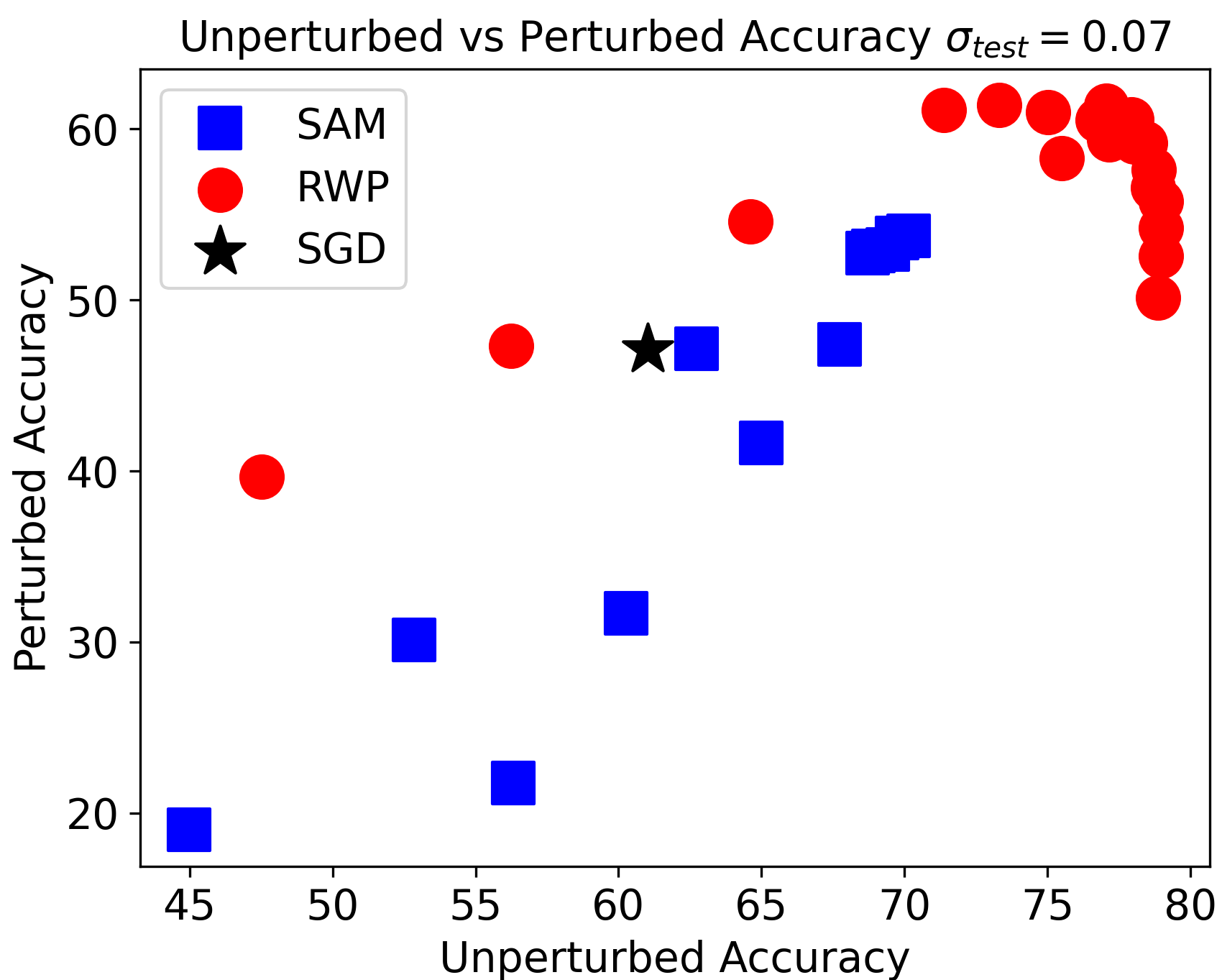}
      \caption{}
      \label{fig:unperturbed_vs_perturbed_02}
    \end{subfigure}
    \caption{(a) Scatter plot of unperturbed test accuracy vs (a) $\sigma_{test}=0.02$ and (b) $\sigma_{test}=0.07$ perturbed test accuracy for varying perturbation sizes of SAM, RWP and SGD. }
    \label{fig:performance_comp}
\end{figure*}


\subsection{Visualizing the Loss Landscape}
\label{sec:flatness_all}
To visually analyze the loss minima found through perturbative training, we produce plots visualizing the Cifar-100 loss landscape (both training and test) of several different minima (using the method proposed in \cite{visualloss}), shown in Fig. \ref{fig:contour_comparison}. In the top row of the figure, (e.g. Figs. \ref{fig:rwp_2e-2_contour},  \ref{fig:rwp_5e-2_contour}, \ref{fig:rwp_7e-2_contour}), we visualize models trained with RWP with increasing $\sigma_{train}$. As expected, the model trained using the larger value of $\sigma_{train}$ produces a visually much flatter minimum, confirming the straightforward correlation we expect. This same pattern holds true for the test landscape minima of RWP, plotted in Figs. \ref{fig:rwp_2e-2_test_contour}, \ref{fig:rwp_5e-2_test_contour}, \ref{fig:rwp_7e-2_test_contour}. Comparing SAM's training minima amongst each other, we see that $\rho=0.3$ appears the flattest, in agreement with our quantitative results. However, comparing the SAM minima against RWP minima breaks the straightforward correlation between training/test loss flatness. For example, the RWP $\sigma_{train}=0.05$ training minimum (Fig. \ref{fig:rwp_5e-2_contour}) is flatter than all of the SAM minima (Fig. \ref{fig:sam_02_contour},\ref{fig:sam_03_contour},\ref{fig:sam_05_contour}); however the test landscape minima for SAM $\rho=0.2$ and $\rho=0.3$ (Fig. \ref{fig:sam_02_test_contour}, \ref{fig:sam_03_test_contour}) are both flatter than the corresponding test minimum for RWP (Fig. \ref{fig:rwp_5e-2_test_contour}). This agrees with our previous results which show that training loss flatness does not correlate with strong noisy generalization, hence the existence of flat training minima which generalize worse than correspondingly sharper training minima.

\begin{figure}
    \begin{subfigure}{.33\textwidth}
      \centering
      \includegraphics[width=1.0\linewidth]{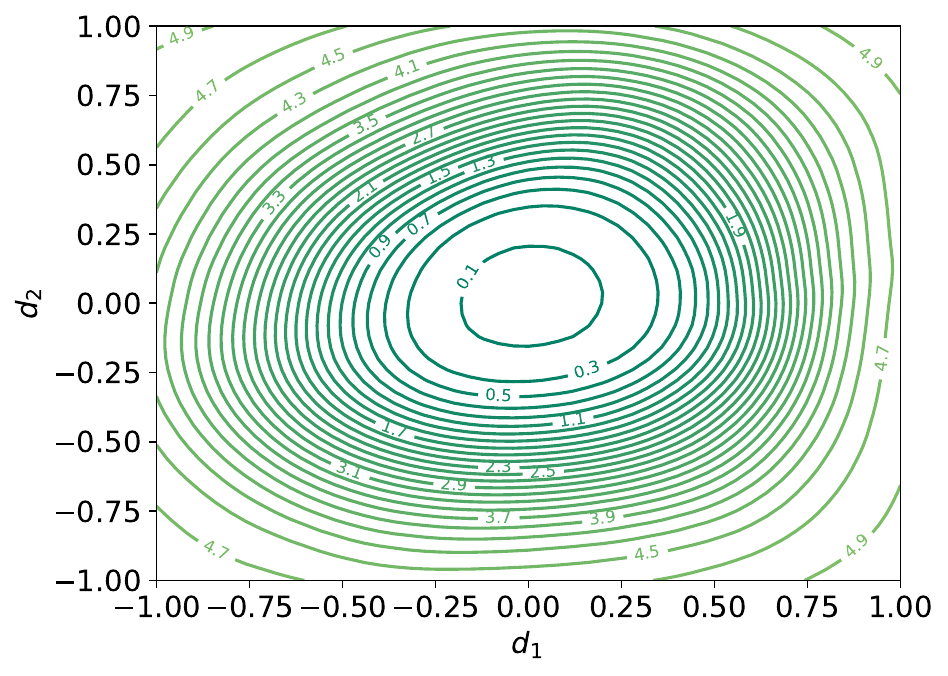}
      \caption{}
      \label{fig:rwp_2e-2_contour}
    \end{subfigure}%
    \begin{subfigure}{.33\textwidth}
      \centering
      \includegraphics[width=1.0\linewidth]{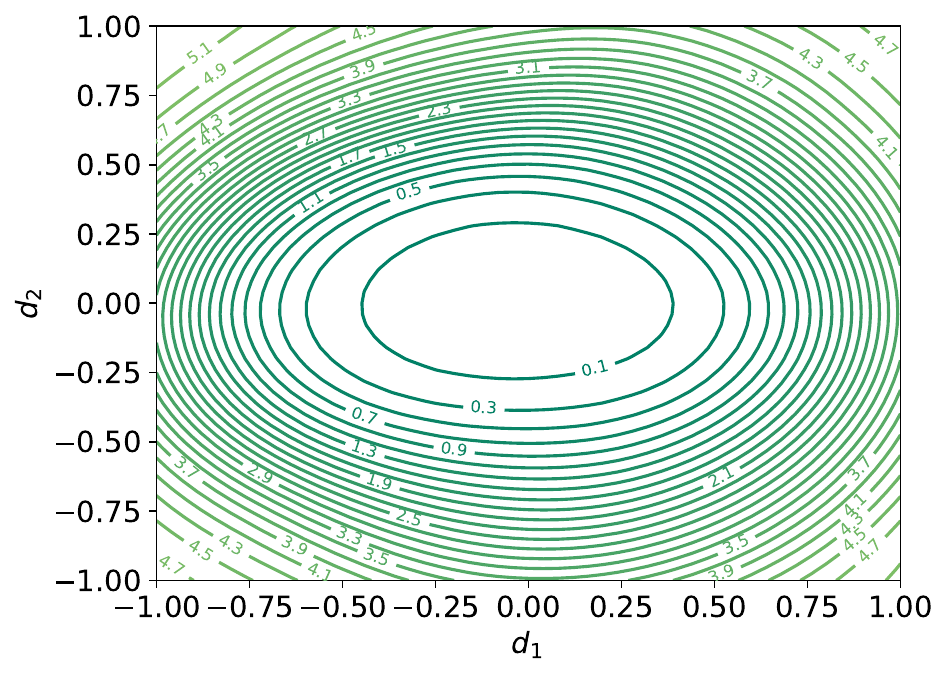}
      \caption{}
      \label{fig:rwp_5e-2_contour}
    \end{subfigure}
    \begin{subfigure}{.33\textwidth}
      \centering
      \includegraphics[width=1.0\linewidth]{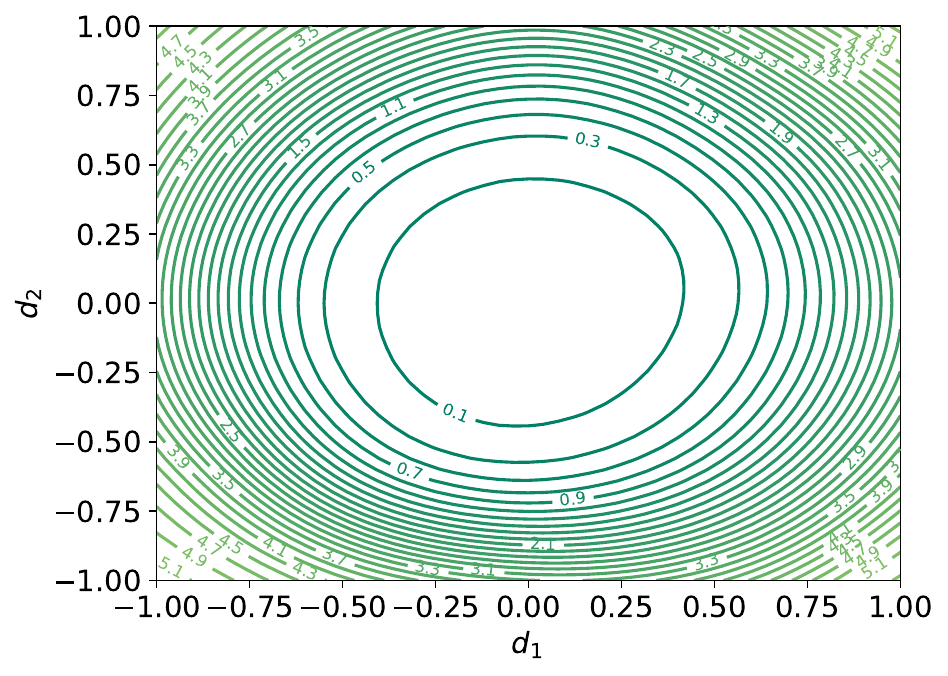}
      \caption{}
      \label{fig:rwp_7e-2_contour}
    \end{subfigure}
        \begin{subfigure}{.33\textwidth}
      \centering
      \includegraphics[width=1.0\linewidth]{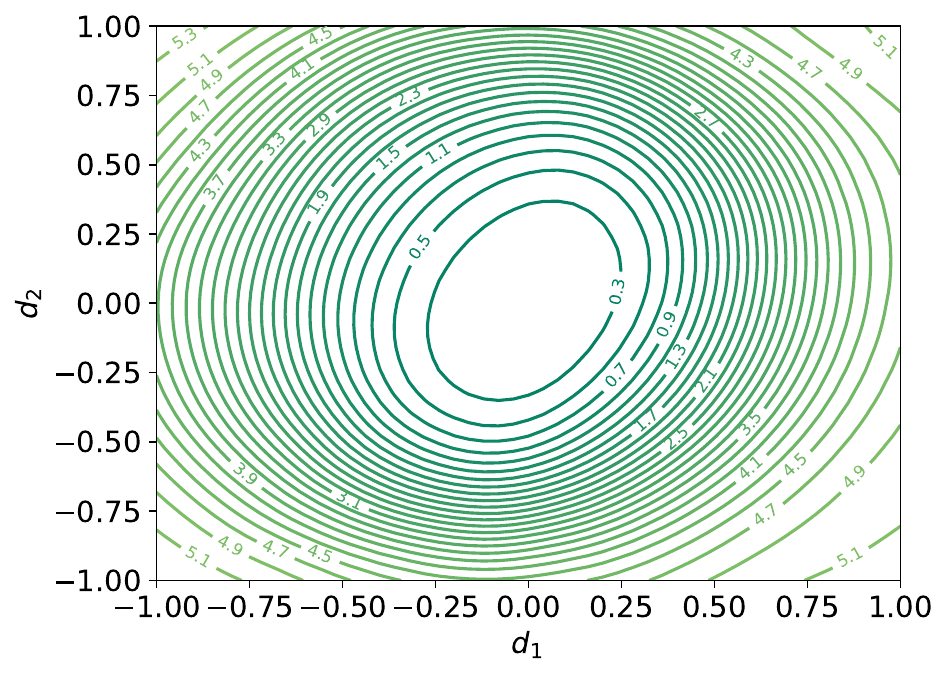}
      \caption{}
      \label{fig:sam_02_contour}
    \end{subfigure}%
    \begin{subfigure}{.33\textwidth}
      \centering
      \includegraphics[width=1.0\linewidth]{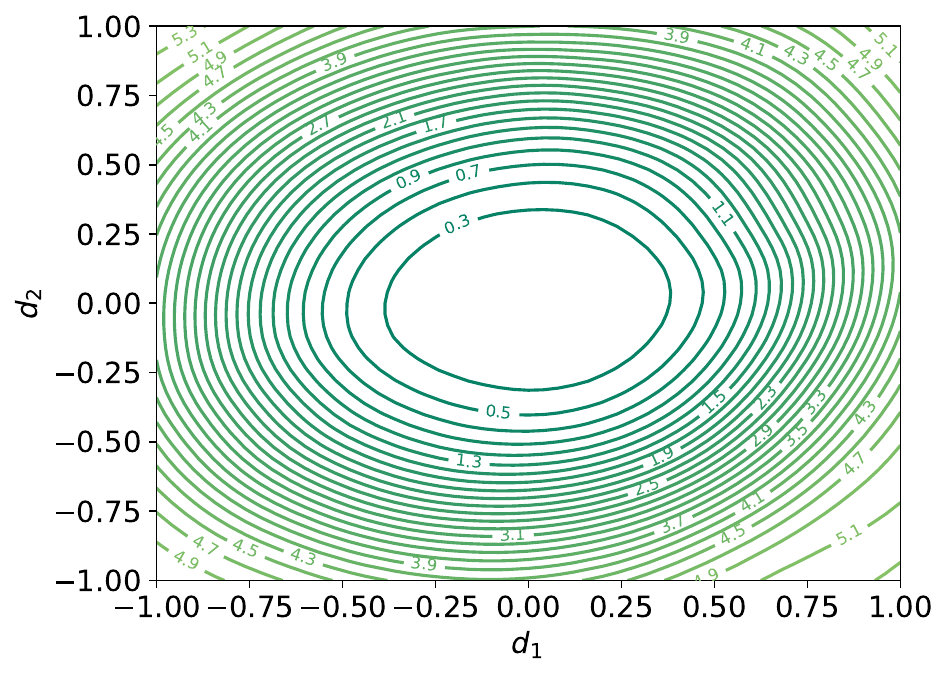}
      \caption{}
      \label{fig:sam_03_contour}
    \end{subfigure}
    \begin{subfigure}{.33\textwidth}
      \centering
      \includegraphics[width=1.0\linewidth]{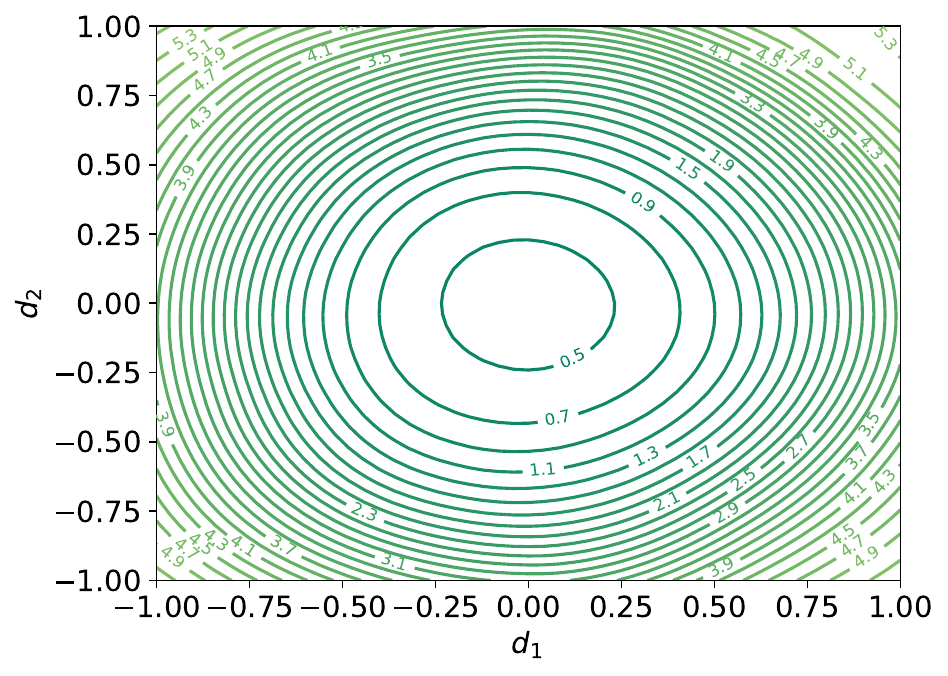}
      \caption{}
      \label{fig:sam_05_contour}
    \end{subfigure}
    \begin{subfigure}{.33\textwidth}
      \centering
      \includegraphics[width=1.0\linewidth]{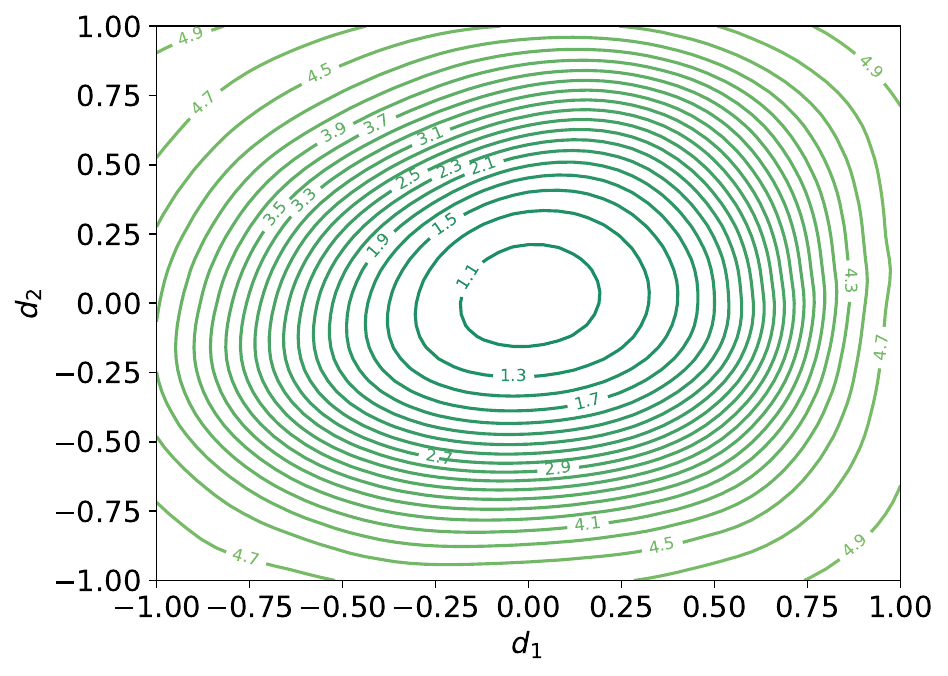}
      \caption{}
      \label{fig:rwp_2e-2_test_contour}
    \end{subfigure}%
    \begin{subfigure}{.33\textwidth}
      \centering
      \includegraphics[width=1.0\linewidth]{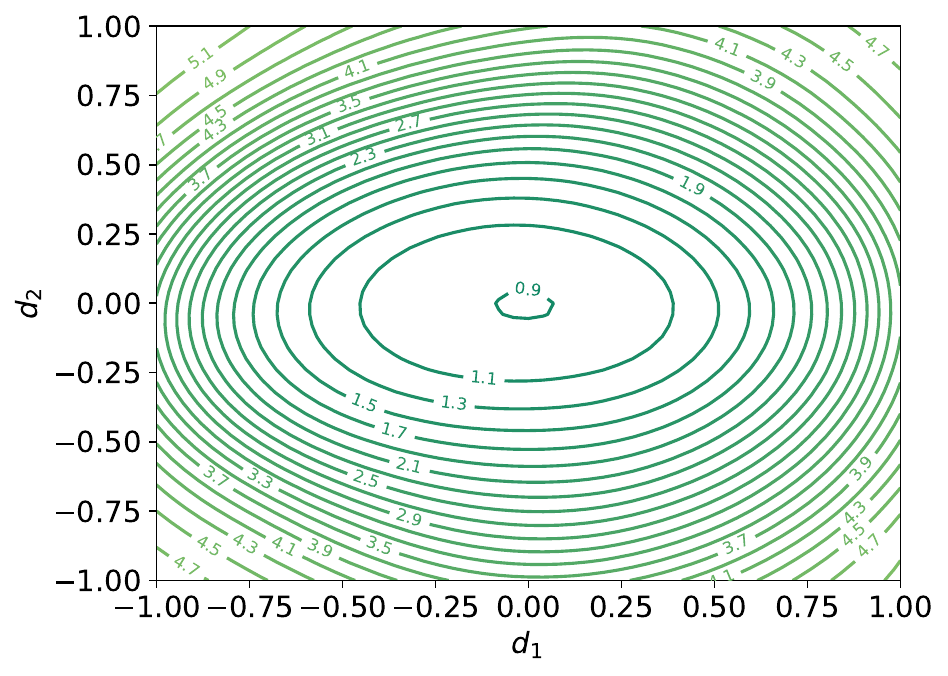}
      \caption{}
      \label{fig:rwp_5e-2_test_contour}
    \end{subfigure}
    \begin{subfigure}{.33\textwidth}
      \centering
      \includegraphics[width=1.0\linewidth]{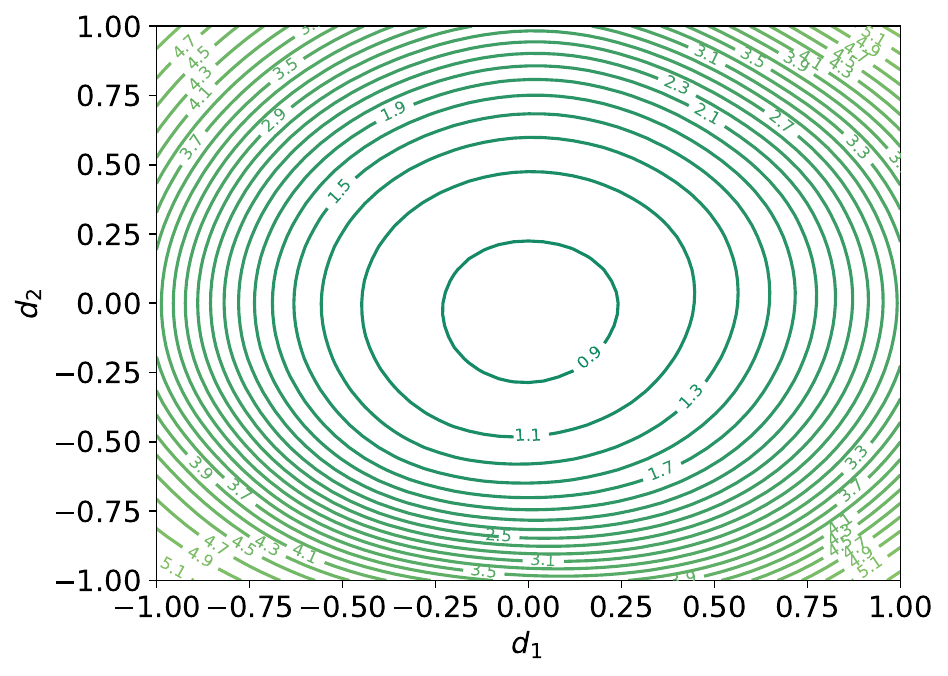}
      \caption{}
      \label{fig:rwp_7e-2_test_contour}
    \end{subfigure}
        \begin{subfigure}{.33\textwidth}
      \centering
      \includegraphics[width=1.0\linewidth]{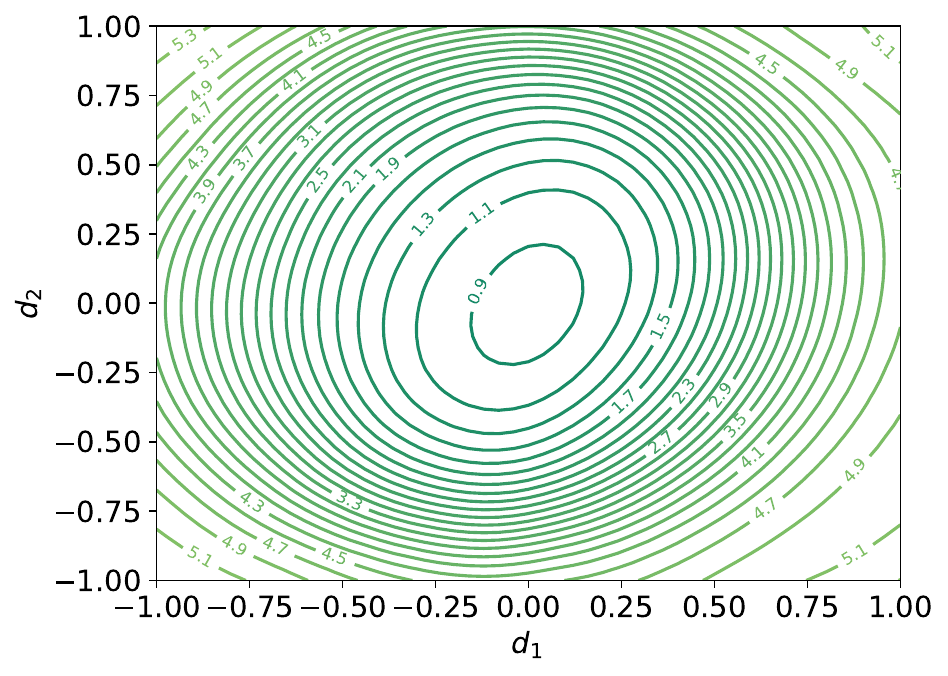}
      \caption{}
      \label{fig:sam_02_test_contour}
    \end{subfigure}%
    \begin{subfigure}{.33\textwidth}
      \centering
      \includegraphics[width=1.0\linewidth]{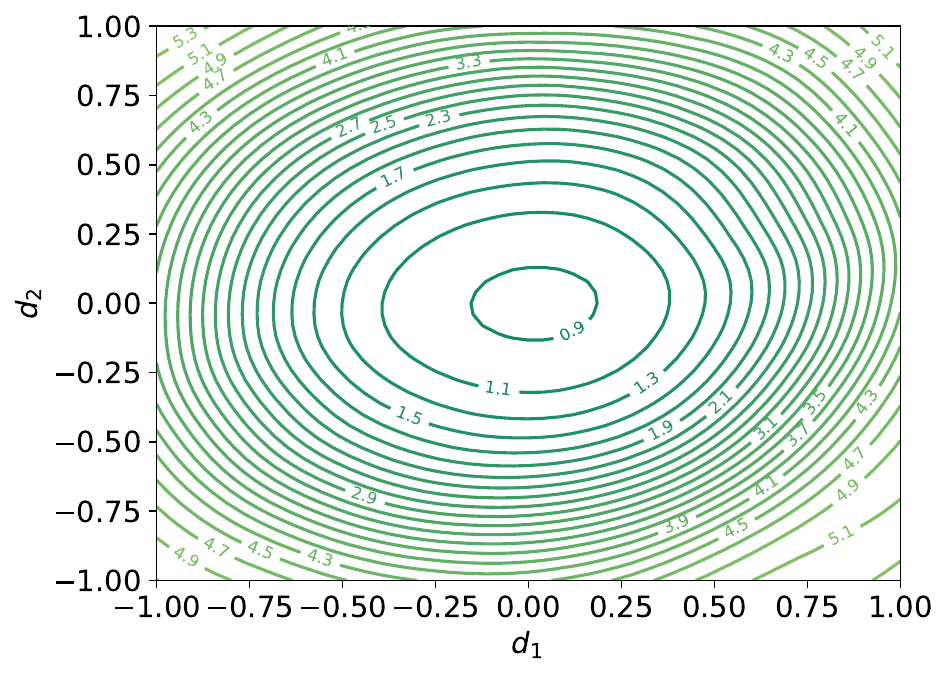}
      \caption{}
      \label{fig:sam_03_test_contour}
    \end{subfigure}
    \begin{subfigure}{.33\textwidth}
      \centering
      \includegraphics[width=1.0\linewidth]{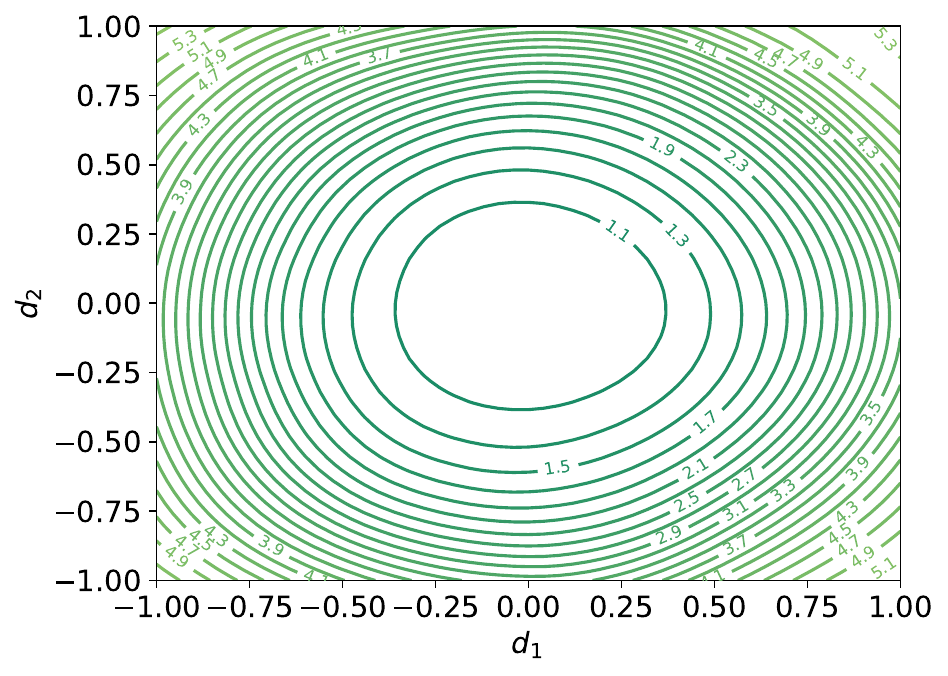}
      \caption{}
      \label{fig:sam_05_test_contour}
    \end{subfigure}
  \caption{2D visualization along filter-normalized directions of the (unperturbed) Cifar-100 loss landscape of minima found using  RWP $\sigma_{train} = 0.02$, $\sigma_{train} = 0.05$, $\sigma_{train} = 0.07$ ((a)-(c) for training landscape visualization, (g)-(i) for test landscape visualization), and  SAM $\rho=0.2$, $\rho=0.3$, $\rho=0.5$ ((d)-(f) for training landscape visualization, (j)-(l) for test landscape visualization).}
  \label{fig:contour_comparison}
\end{figure}

\subsection{Quantifying Attenuated Training}
To confirm the effect of the flatness-induced vanishing gradient during training, we plot the total distance traveled in the parameter space as training on Cifar-100 unfolds $\sum_{t=1}^N ||w_{t+1}-w_t||$ (where $t$ corresponds to epoch number), shown in Fig. \ref{fig:total_distance}. Compared to SGD, we see that all of the perturbed training trajectories display both a smaller gradient norm and smaller distance traveled. For the case of SAM, we see a dramatically smaller distance traveled as compared to the RWP models, demonstrating that the diminished gradient magnitude indeed attenuates training.

\begin{figure}
      \centering
      \includegraphics[scale=0.5]{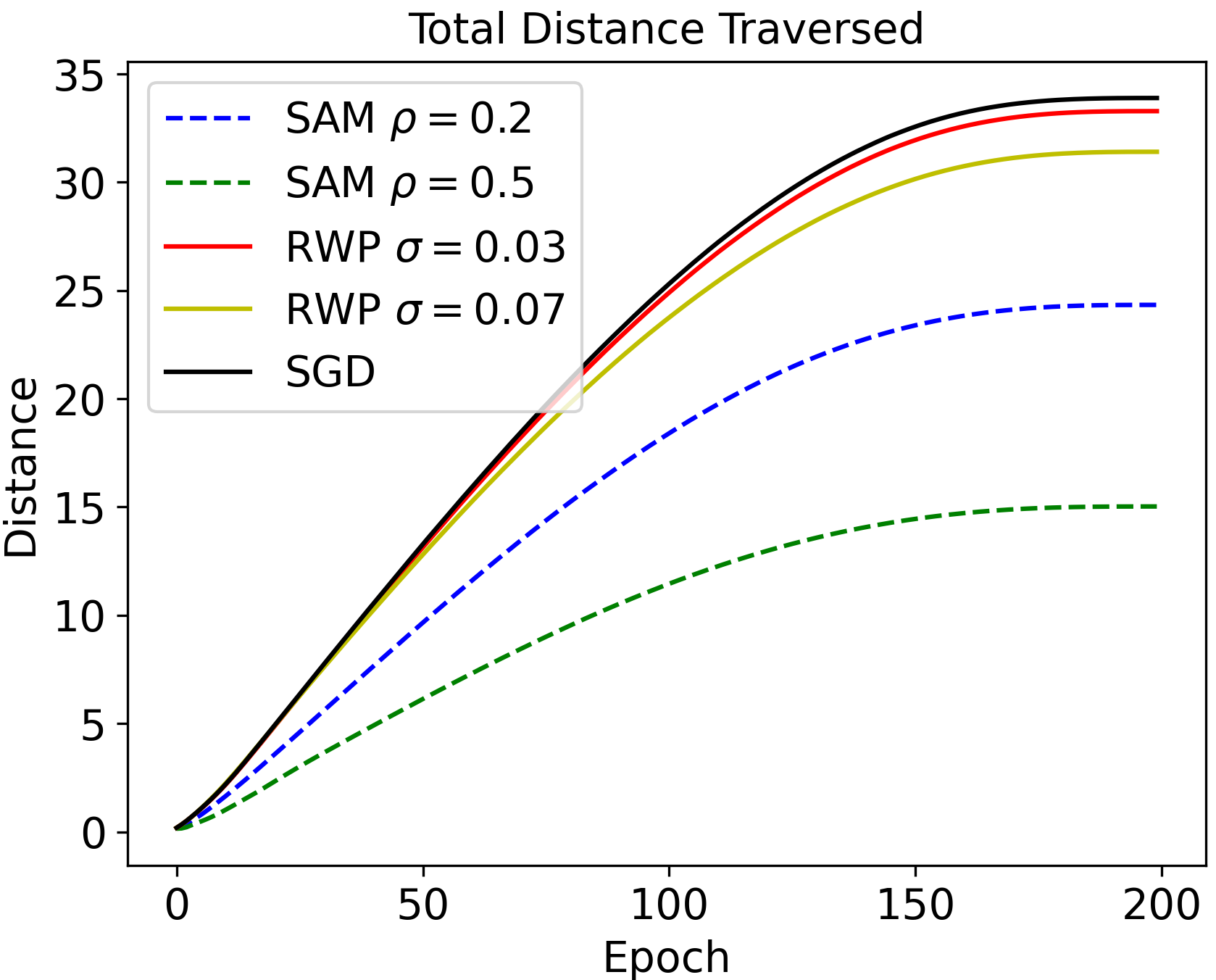}
    \caption{Plot of the total distance traversed during training, $\sum_{i=1}^N ||w_{t+1}-w_t||$}
    \label{fig:total_distance}
\end{figure}

\subsection{Comparing Training and Test Sharpness During Training}
As an additional perspective from which to evaluate noisy generalization, we also directly track the evolution of sharpness during optimization. In Fig. \ref{fig:test_sharpness} and Fig. \ref{fig:avg_sharpness_comp}, we plot average-direction training sharpness as a function of training loss, average direction test sharpness as a function of test loss, and test sharpness as a function of training sharpness ($\sigma = 0.02$ and $\sigma = 0.07$, respectively) for several SAM and RWP training trajectories. From the figures, we highlight a few observations: first, we note that a mininum's test sharpness is almost always greater than its training sharpness, made especially clear in Fig. \ref{fig:sharpness_training_vs_test_02} and Fig. \ref{fig:sharpness_training_vs_test_07}. Second, we observe that, particularly toward the later training epochs, the evolution of training and test sharpness are in fact anti-correlated (i.e. test sharpness \textit{increases} as training sharpness \textit{decreases}). Lastly, we see strong evidence against the direct correlation of training and test sharpness (and by extension generalization). For example, we see in the plot of training sharpness in Fig. \ref{fig:sharpness_07} that SAM exhibits virtually the same degree of flatness as SGD over its entire trajectory. However, when plotting the test sharpness of the same trajectories, we see that in the earlier epochs SAM traverses a test loss basin that is significantly less sharp than that of SGD; it is within this basin that SAM (with early stopping) finds weights with improved noise-resilience.

\begin{figure*}
    \begin{subfigure}{.33\textwidth}
        \centering
        \includegraphics[width=1.0\linewidth]{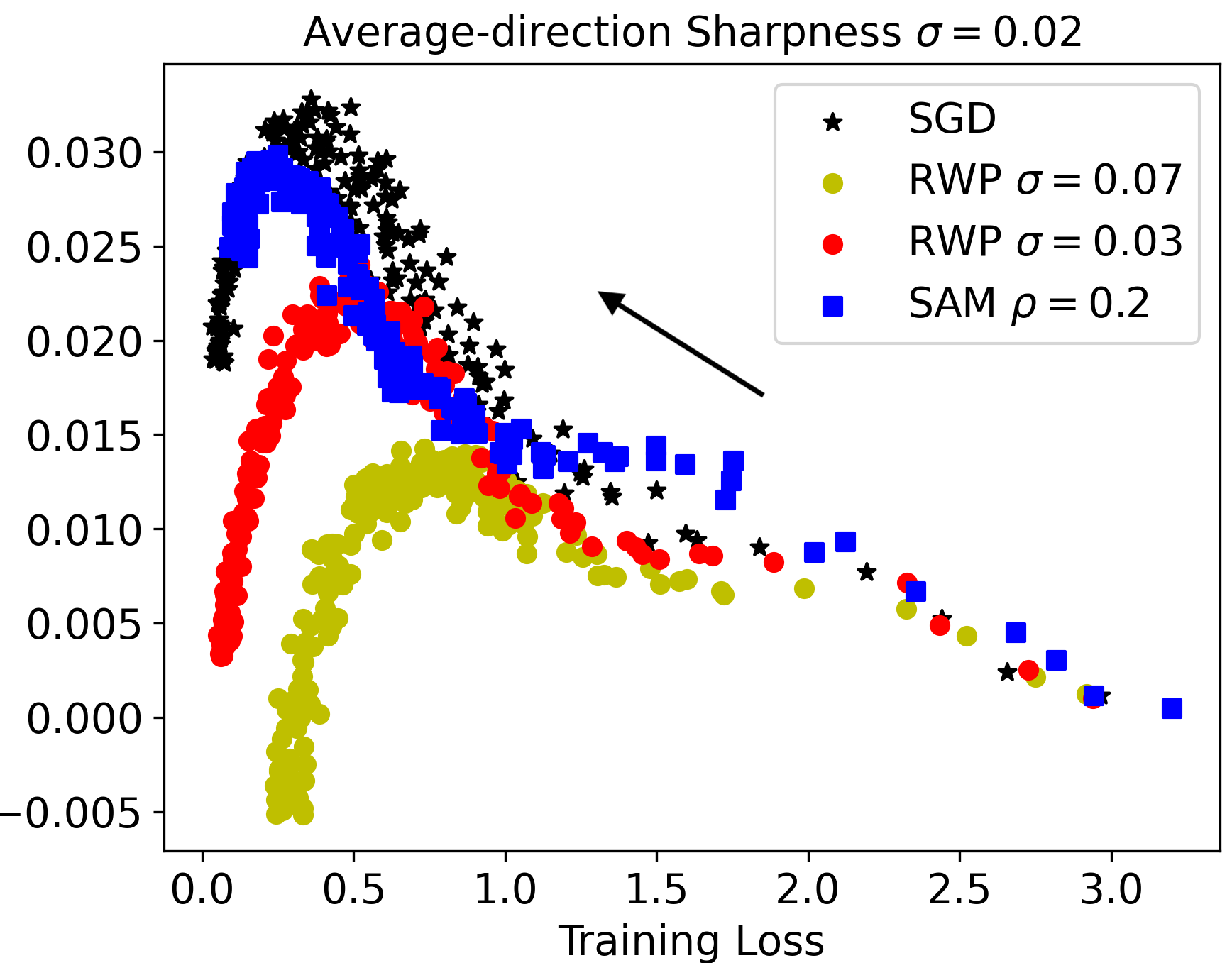}
        \caption{}
        \label{fig:sharpness_02}
    \end{subfigure}%
    \begin{subfigure}{.33\textwidth}
      \centering
      \includegraphics[width=1.0\linewidth]{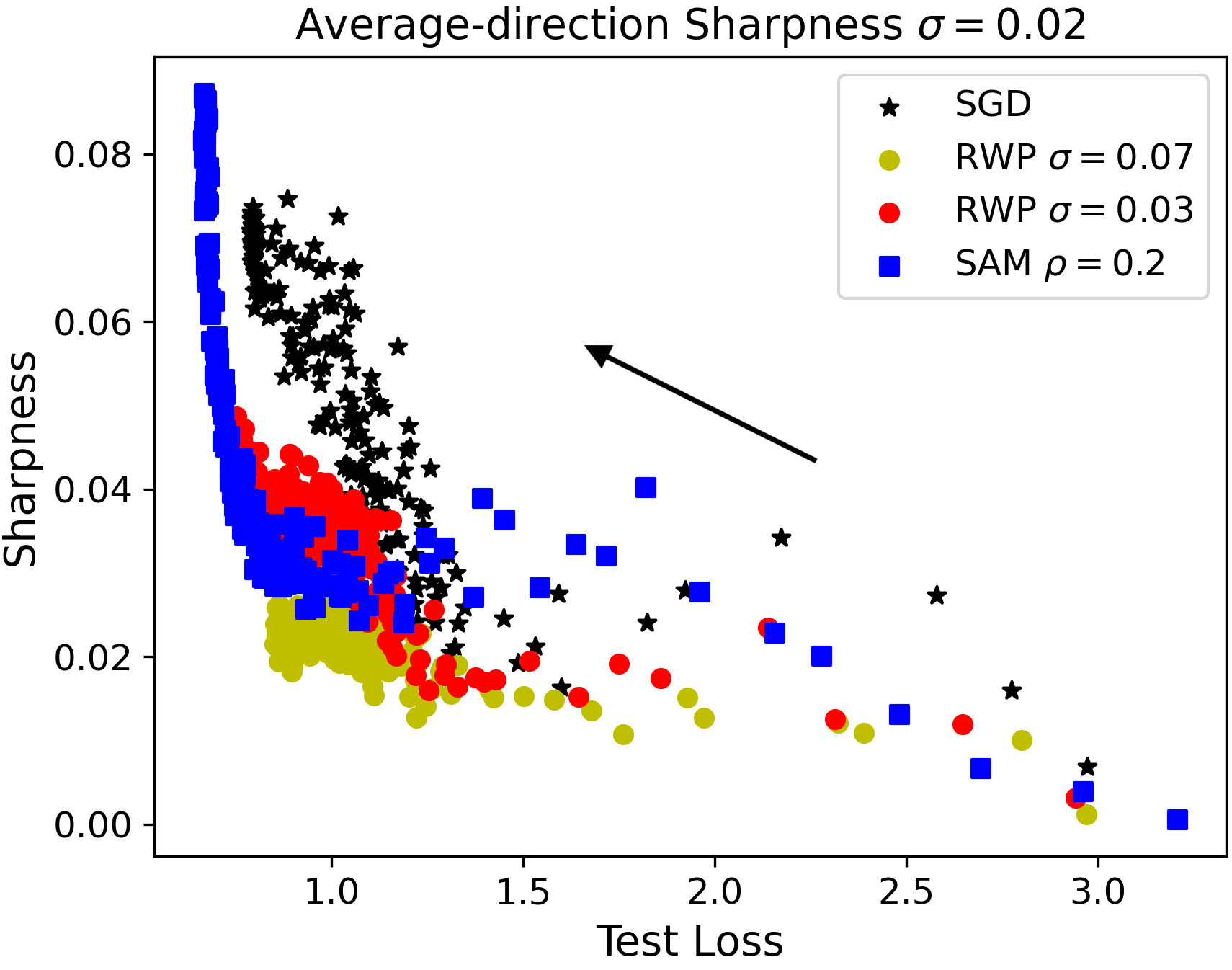}
      \caption{}
      \label{fig:test_sharpness_02}
    \end{subfigure}
    \begin{subfigure}{.33\textwidth}
      \centering
      \includegraphics[width=1.0\linewidth]{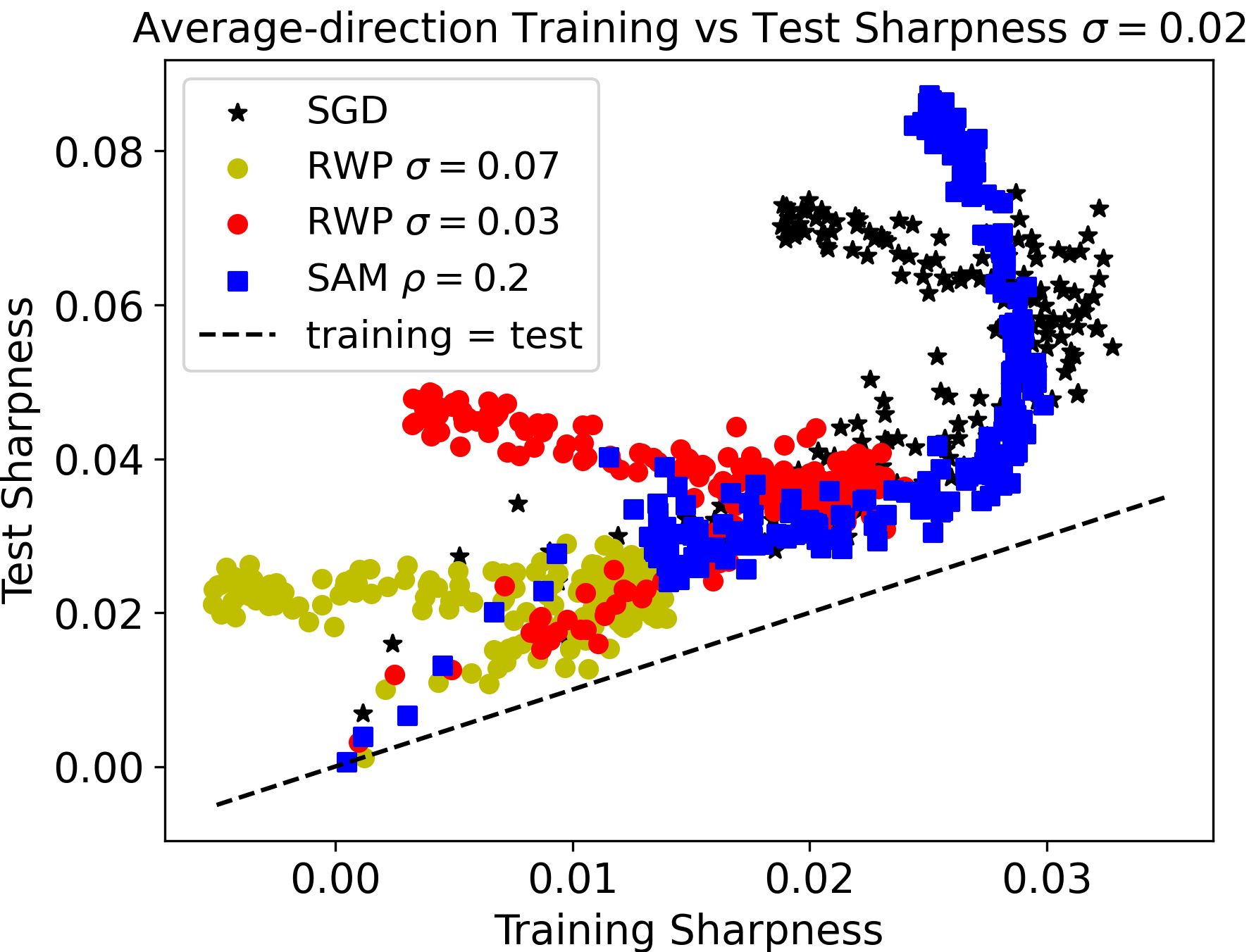}
      \caption{}
      \label{fig:sharpness_training_vs_test_02}
    \end{subfigure}
    \caption{ (a) Plot of ResNet-18 Cifar-100 average-direction sharpness for $\sigma = 0.02$. as a function of training loss. (b) Plot of average-direction sharpness as a function of \textit{test} loss for $\sigma=0.02$. (c) Plot of \textit{test} sharpness as a function of \textit{training} for $\sigma=0.02$.}
    \label{fig:test_sharpness}
\end{figure*}

\begin{figure}
        \begin{subfigure}{.33\textwidth}
        \centering
        \includegraphics[width=1.0\linewidth]{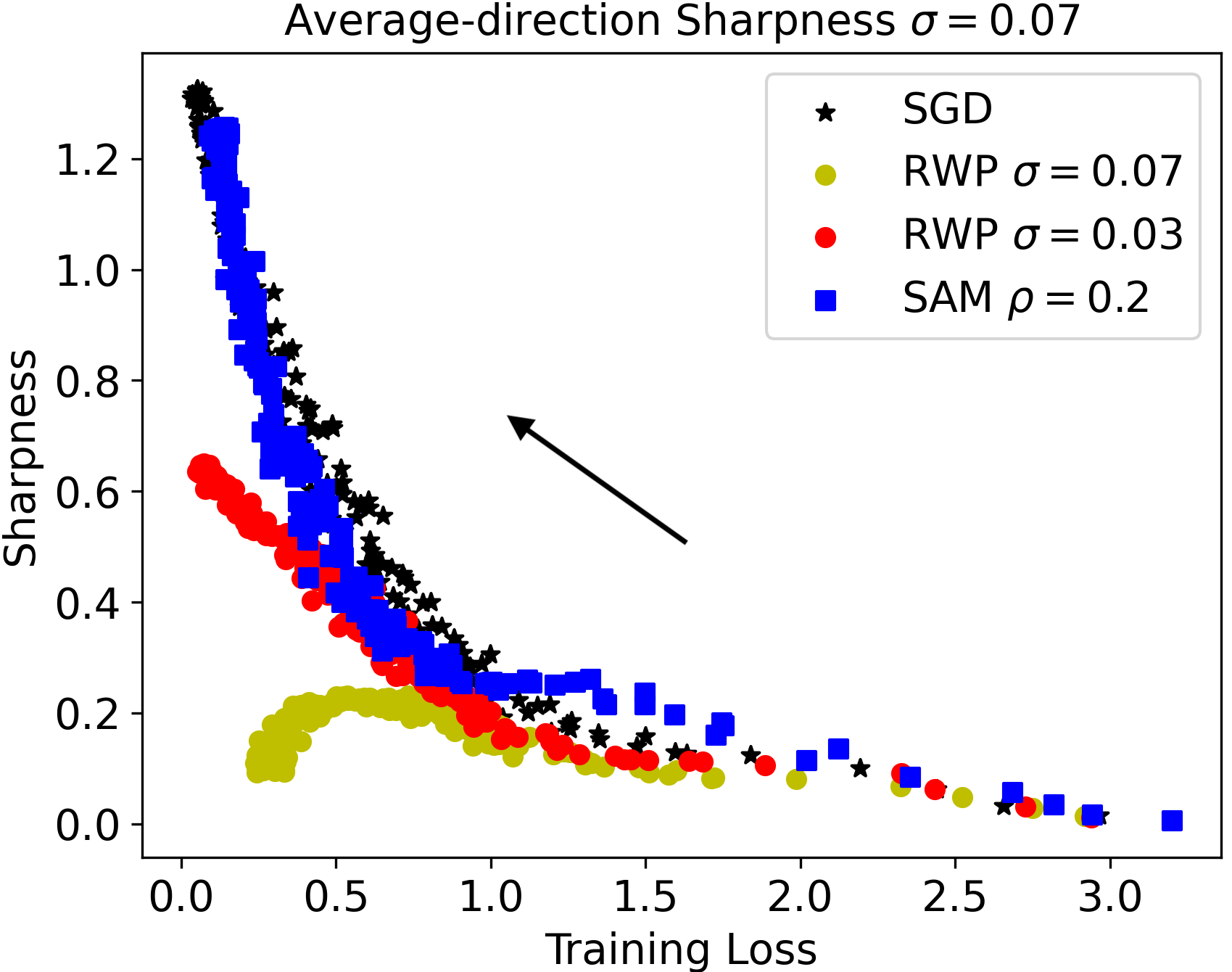}
        \caption{}
        \label{fig:sharpness_07}
    \end{subfigure}
    \begin{subfigure}{.33\textwidth}
      \centering
      \includegraphics[width=1.0\linewidth]{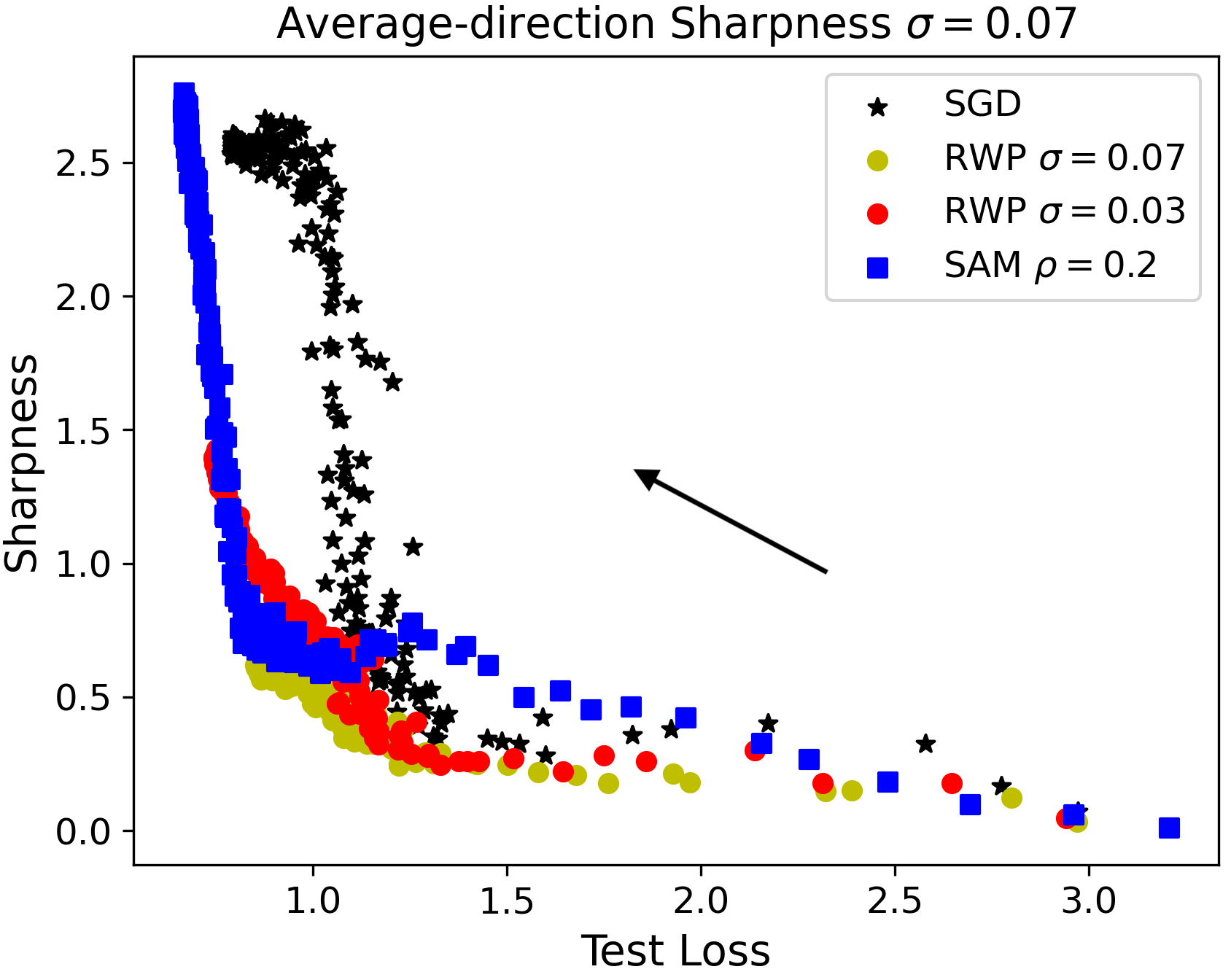}
      \caption{}
      \label{fig:test_sharpness_07}
    \end{subfigure}
    \begin{subfigure}{.33\textwidth}
      \centering
      \includegraphics[width=1.0\linewidth]{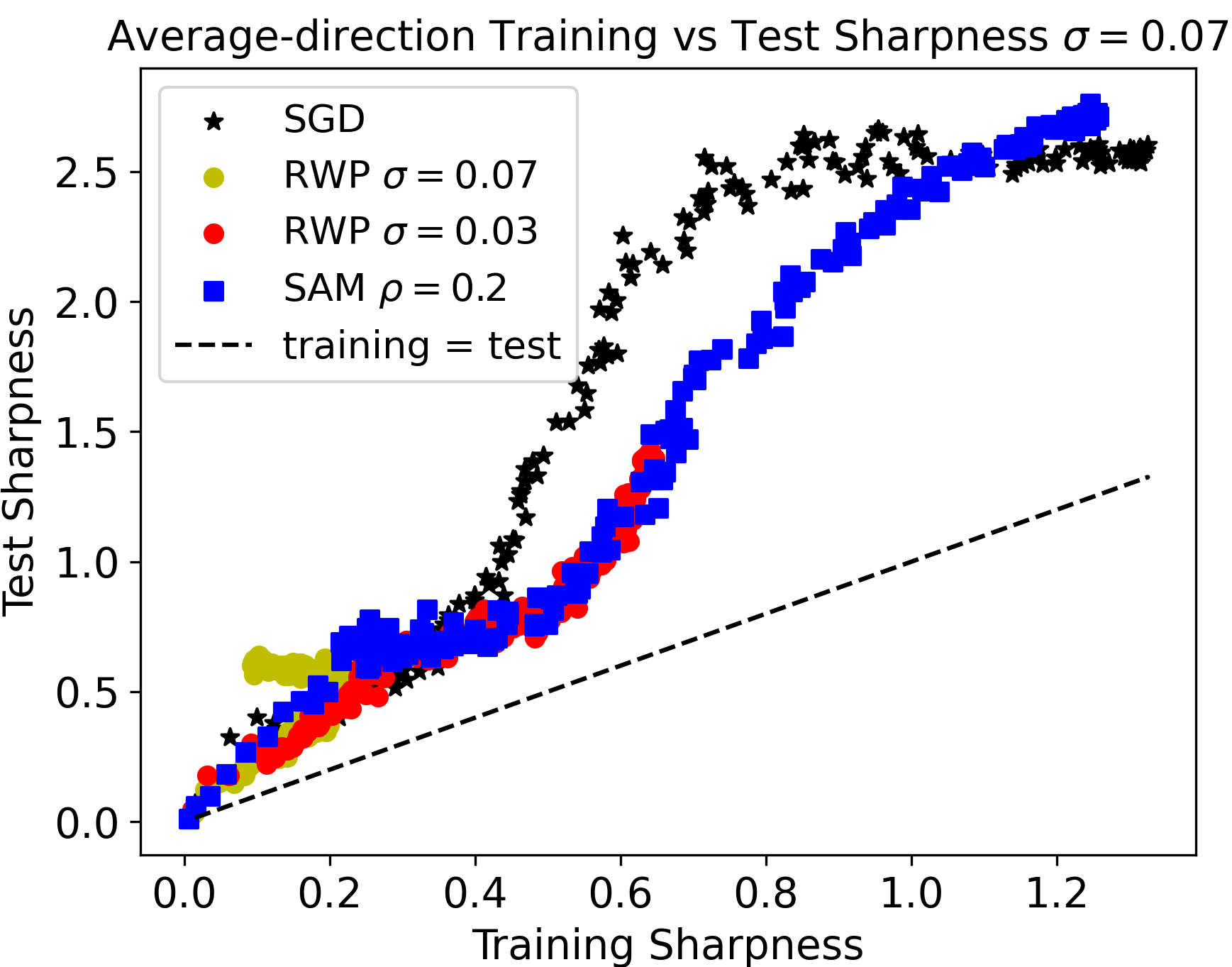}
      \caption{}
      \label{fig:sharpness_training_vs_test_07}
    \end{subfigure}
    \caption{(a) Plot of average-direction sharpness for $\sigma = 0.07$ for a ResNet-18 training on Cifar-100. (b) Plot of average-direction sharpness as a function of \textit{test} loss for $\sigma=0.07$. (c) Plot of test sharpness as a function of training sharpness. }
    \label{fig:avg_sharpness_comp}
\end{figure}

\section{Does Noise Distribution Matter?}
In all of our previous analysis, we have made the assumption of matching RWP's training noise distribution class with that of our test noise distribution $\mathbb{Q}$, in this case a normal distribution. This naturally raises two questions: first, does over-regularization's generalization benefit hold only for this specific perturbation pattern, or can is it true more generally? Second, is RWP's advantage over SAM partially attributed to knowledge of the shape of the test distribution, an advantage that SAM does not benefit from? To address these questions, we perform experiments in which we define $\mathbb{Q}\sim Laplace(0,b\max_i|w_i|)$, and compare the performance of SAM, RWP with a Gaussian perturbation distribution, and RWP with the same Laplace distribution. We present these results in Tab. \ref{table:laplace_noise}. Comparing SAM's performance with that of Gaussian RWP, we observe that the same general trends still hold: in the small noise regime, SAM is optimal, whereas in the large noise regime, RWP is optimal. Directly comparing Gaussian RWP with Laplace RWP, we see that the performance of Laplace RWP is marginally better than that of the Gaussian version, indicating that prior knowledge of the test distribution only modestly impacts training. Lastly, we see that over-regularized RWP is in fact optimal across all the noise settings, further evidence that the benefit of over-regularization holds more generally.

\begin{table}
  \caption{Comparison on Cifar-100 of SGD, SAM and RWP-trained ResNet-18 when test-time noise follows a Laplace distribution $Laplace(0,b\max_i|w_i|)$.}
  \label{table:laplace_noise}
  \resizebox{\textwidth}{!}{\begin{tabular}{c|c|c|c|c}
    \hline
     &  $b = 0.01$  &   $b = 0.02$ &  $b = 0.03$ &  $b = 0.04$\\
    \hline
    SGD & $78.49 \pm 0.18 \pm 0.07$ & $73.42 \pm 2.59 \pm 0.39$ & $59.66 \pm 1.74 \pm 1.22$ & $54.08 \pm 2.05 \pm 1.54$ \\
    \hline
    SAM $\rho = 0.2$ & $79.52 \pm 0.15 \pm 0.14$ & $77.05 \pm 0.38 \pm 0.16$ & $67.85 \pm 2.76 \pm 0.55$ & $61.11 \pm 1.63 \pm 1.40$ \\
    SAM $\rho = 0.3$ & $\mathbf{79.75} \pm 0.14 \pm 0.23$ & $\mathbf{77.66} \pm 0.78 \pm 0.59$ & $69.47 \pm 1.60 \pm 0.97$ & $62.25 \pm 2.40 \pm 0.82$ \\
    SAM $\rho = 0.5$ & $77.95 \pm 0.16 \pm 0.08$ & $ 75.41 \pm 2.02 \pm 0.78$ & $66.72 \pm 1.58 \pm 0.43$ & $58.35 \pm 5.19 \pm 2.01$ \\
    \hline
    RWP $\sigma = 0.03$ & $78.30 \pm 0.15 \pm 0.25$ & $76.32 \pm 0.42 \pm 0.27$ & $72.05 \pm 1.07 \pm 0.41$&$ 63.18 \pm 2.71 \pm 1.18$ \\
    RWP $\sigma = 0.05$ & $ 78.28 \pm 0.14 \pm 0.08$ &$76.66 \pm 0.35 \pm 0.05$ & $73.53 \pm 0.72 \pm 0.14$&$ 67.40 \pm 1.66 \pm 0.34$ \\
    RWP $\sigma = 0.07$ & $76.82 \pm 0.14 \pm 0.17$ &$75.56 \pm 0.34 \pm 0.13$ & $72.95 \pm 0.66 \pm 0.23$ & $68.00 \pm 1.53 \pm 0.64$\\
    \hline
    RWP $b = 0.01$ & $ 78.45 \pm 0.21 \pm 0.11$ &$75.83 \pm 0.60 \pm 0.35$ & $70.13 \pm 1.44 \pm 1.08$ & $57.69 \pm 3.06 \pm 2.72$\\
    RWP $b = 0.02$ & $78.67 \pm 0.15 \pm 0.24$ &$76.67 \pm 0.40 \pm 0.21$ & $72.51 \pm 0.81 \pm 0.26$ & $64.34 \pm 1.81 \pm 0.44$\\
    RWP $b = 0.03$ & $78.42 \pm 0.15 \pm 0.06$ &$ 76.79\pm 0.36 \pm 0.10$ & $ 73.31\pm 0.88 \pm 0.10$ & $66.52 \pm 2.06 \pm 0.17$\\
    RWP $b = 0.04$ & $77.79 \pm 0.15 \pm 0.29$ &$ 76.38\pm 0.30 \pm 0.36$ & $ \mathbf{73.60}\pm 0.61 \pm 0.52$ & $\mathbf{68.27} \pm 1.34 \pm 0.87$\\
    \hline
  \end{tabular}}
\end{table}

\section{Experiments on Differing Backbone Models}
\label{sec:backbones}

\subsection{Evaluating Generality of SAM/RWP Performance}
To confirm that our observations of SAM/RWP hold true generally, we perform similar experiments as in Sec. \ref{sec:selecting_training_perturbation} on a variety of ResNets and WRNs, shown in Tab. \ref{table:depth_comp} and Tab. \ref{table:width_comp}. For a fair comparison, we replace the bottleneck block in the ResNet-50 with the same basic block architecture used in the shallower ResNets. As is the case with ResNet-18, we find that for $\sigma_{test}=0.02$, SAM-trained models achieve higher accuracy, whereas for $\sigma_{test}=0.07$, RWP-trained models dominate. We additionally recreate Fig. \ref{fig:understanding_loss_landscape} for training with a ResNet-50, demonstrating the same vanishing gradient effect observed in the ResNet-18.

\begin{figure}
    \begin{subfigure}{.5\textwidth}
        \centering
        \includegraphics[width=1.0\linewidth]{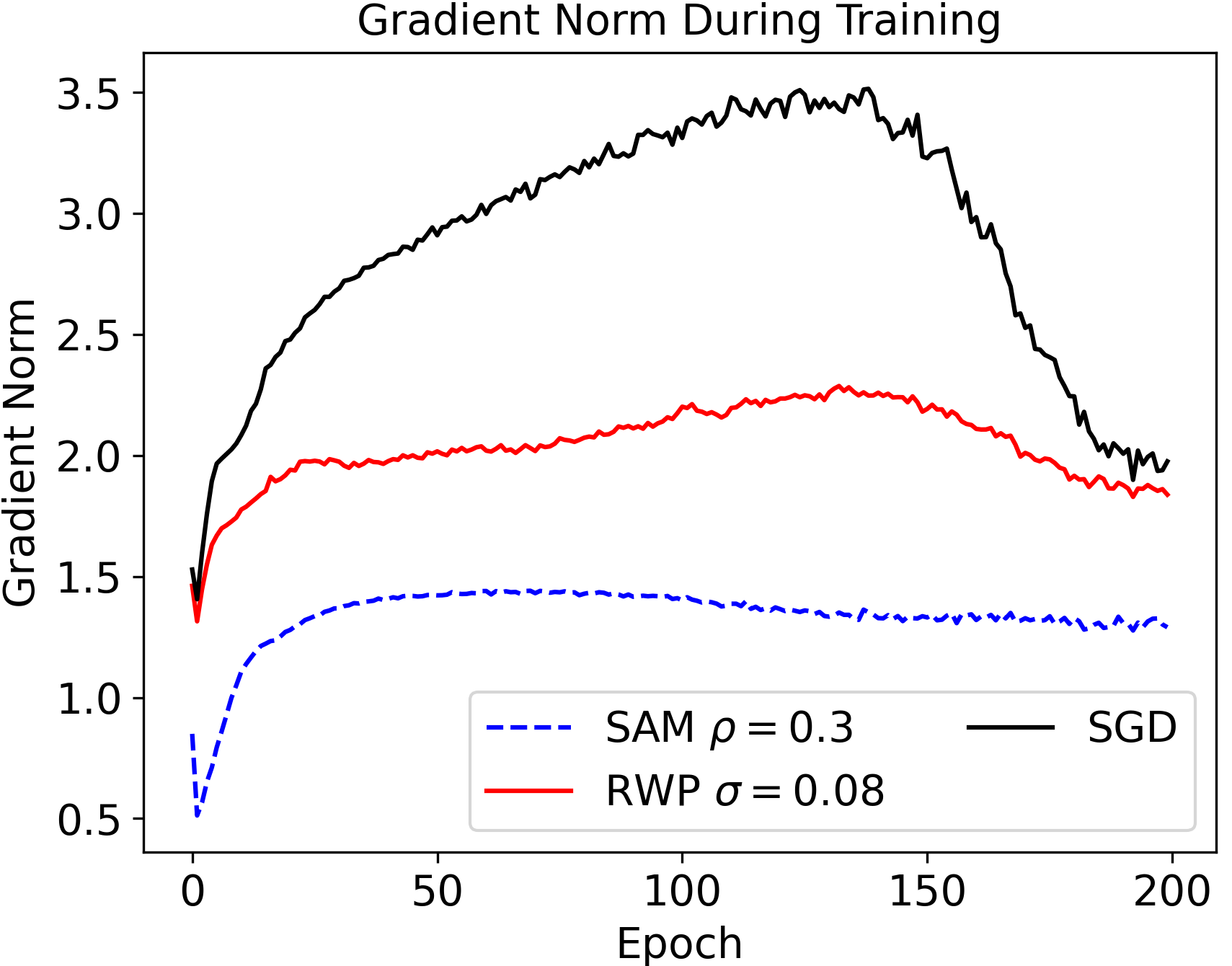}
        \caption{}
        \label{fig:grad_norm_resnet_50}
    \end{subfigure}%
    \begin{subfigure}{.5\textwidth}
      \centering
      \includegraphics[width=1.0\linewidth]{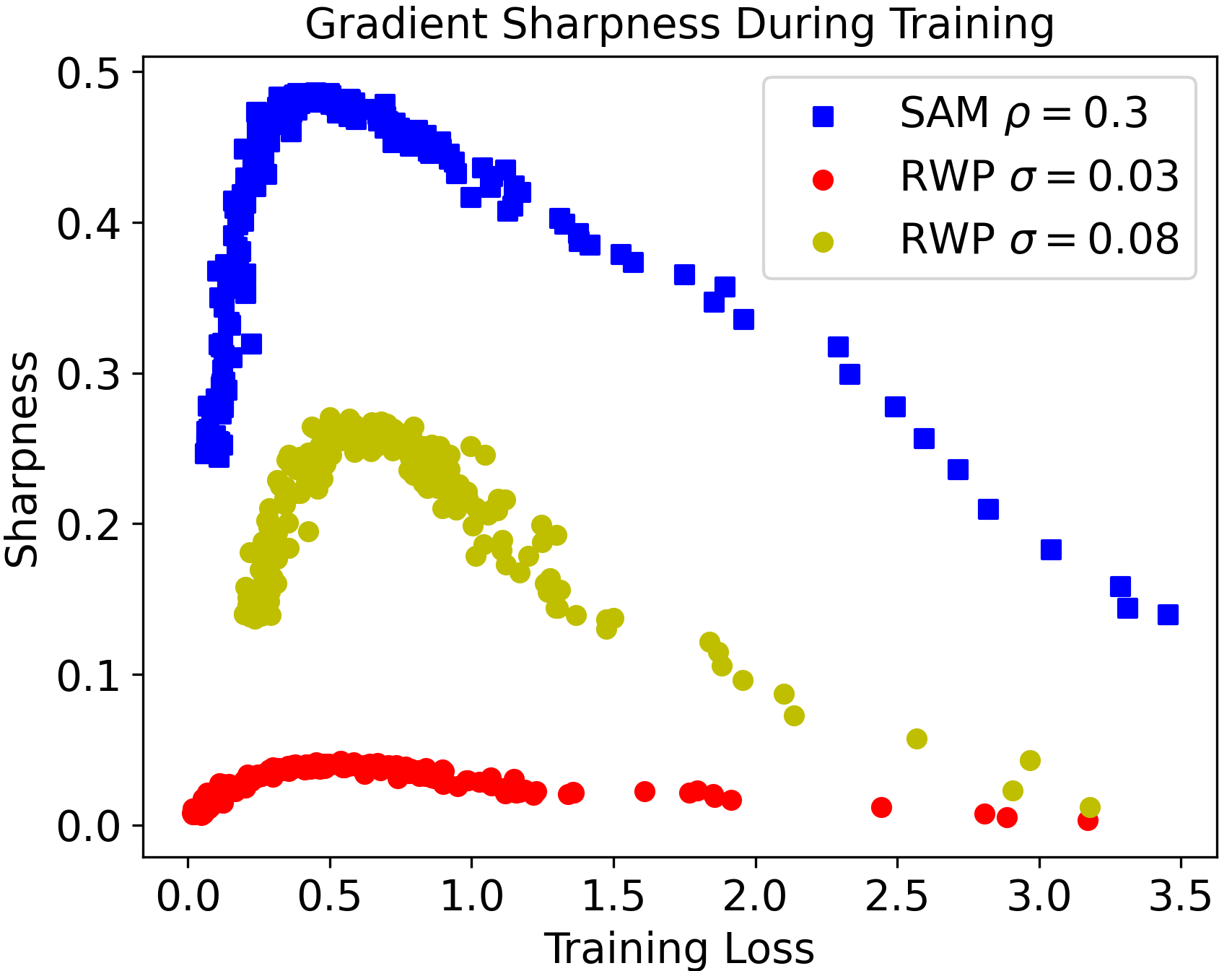}
      \caption{}
      \label{fig:grad_sharpness_resnet_50}
    \end{subfigure}
    \caption{Plots of (a) gradient norm and  (b) gradient sharpness for ResNet-50 during Cifar-100 training.}
    \label{fig:resnet_50_comp}
\end{figure}

\begin{table}
  \caption{Comparison of RWP and SAM for ResNets of varying depths. The best-performing optimizer for a given architecture is bolded, whereas the best-performing architecture for a given optimizer is underlined.}
  \label{table:depth_comp}
  \resizebox{\textwidth}{!}{\begin{tabular}{c|c|c|c|c}
    \hline
     &  \multicolumn{2}{|c|}{$\sigma_{test} = 0.02$}  &   \multicolumn{2}{|c}{$\sigma_{test} = 0.07$} \\
    \hline
    & SAM & RWP & SAM & RWP\\
    \hline
    ResNet-9 & $\mathbf{74.14} \pm 0.94\pm 1.67$ & $ 72.70 \pm 0.31 \pm 0.07$ & $ 39.58 \pm 5.03 \pm 0.97$  & $ \mathbf{51.10} \pm 3.14 \pm 1.70$\\
    ResNet-18 & $\mathbf{79.12} \pm 0.23 \pm 0.28$  & $77.65 \pm 0.26 \pm 0.05$ & $53.61 \pm 2.61 \pm 0.68$  & $\mathbf{61.36} \pm 2.85  \pm  0.90$\\
    ResNet-34 &$ \mathbf{80.82}\pm 0.24 \pm 0.18$ & $ 79.19 \pm 0.21 \pm 0.19$  & $ \underline{55.38} \pm 3.23 \pm 1.67$  & $\mathbf{64.67} \pm 2.54 \pm 0.54$\\
    ResNet-50 & $\underline{\mathbf{81.29}} \pm 0.17 \pm 0.17$ & $\underline{79.81} \pm 0.19 \pm 0.17$ & $54.22 \pm 3.39 \pm 2.85$  & $ \underline{\mathbf{64.87}} \pm 2.28 \pm 1.23$\\
    \hline
  \end{tabular}}
\end{table}

\begin{table}
  \caption{Comparison of RWP and SAM for ResNets of varying widths. The best-performing optimizer for a given architecture is bolded, whereas the best-performing architecture for a given optimizer is underlined.}
  \label{table:width_comp}
  \resizebox{\textwidth}{!}{\begin{tabular}{c|c|c|c|c}
    \hline
     &  \multicolumn{2}{|c|}{$\sigma_{test} = 0.02$}  &   \multicolumn{2}{|c}{$\sigma_{test} = 0.07$} \\
    \hline
    & SAM & RWP & SAM & RWP\\
    \hline
    ResNet-18 & $\mathbf{79.12} \pm 0.23 \pm 0.28$  & $77.65 \pm 0.26 \pm 0.05$ & $53.61 \pm 2.61 \pm 0.68$  & $\mathbf{61.36} \pm 2.85  \pm  0.90$\\
    WRN-16-5 & $\mathbf{80.75} \pm 0.31 \pm 0.11$  & $79.36 \pm 0.27 \pm 0.06$ & $51.92 \pm 4.79 \pm 0.69$  & $ \mathbf{59.11} \pm 3.48 \pm 0.18$\\
    WRN-16-10 & $\mathbf{83.13} \pm 0.23 \pm 0.32$  & $81.08 \pm 0.21 \pm 0.28$ & $\underline{60.11} \pm 3.04 \pm 2.31$  & $\underline{\mathbf{66.23}} \pm 2.26  \pm 0.75$\\
    WRN-16-15 & $ \underline{\mathbf{83.53}} \pm 0.18 \pm 0.09$ & $ \underline{81.24} \pm 0.21 \pm 0.08$ & $ 59.55 \pm 3.78 \pm 2.38$ & $\mathbf{64.63} \pm 3.96 \pm 0.83$\\
    \hline
  \end{tabular}}
\end{table}

\subsection{Dynamic Perturbation Schedules}
To verify the generality of our dynamic perturbation schedules, we also perform experiments using different backbone models, specifically ResNet-50, WRN-16-10, and PyramidNet-50 \cite{DPRN} shown in Tab. \ref{table:schedules_deep_v_wide}. Here, we note a contrast between the WRN and the ResNet-50/PyramidNet-50: while the perturbation schedule enhances noise robustness for both network architectures, the improvement is markedly larger for the latter two networks. We conclude that ResNet-50 and PyramidNet-50 naturally exhibit a sharper loss landscape that is a significant factor in the degraded performance of SAM/RWP; as a result, the ramped perturbations improve performance by a large amount. Meanwhile, the naturally smooth loss landscape of the WRN leads to a fairly stable training process, even without the perturbation schedules. As a result, the improvement from applying the schedule is much more modest, in-line with performance gains in the small-noise setting applied to the narrower networks.

\begin{table}
  \caption{Comparison of perturbed test accuracy when perturbation schedules are applied during training for differing backbone architectures.}
  \label{table:schedules_deep_v_wide}
  \centering
  \begin{tabular}{c|c|c|c}
    \hline
     &   & \multicolumn{2}{|c}{$\sigma_{test} = 0.07$} \\
    \hline
    & Schedule & SAM & RWP\\
    \hline
    ResNet-50 & Constant & $54.22 \pm 3.39 \pm 2.85$  & $ 64.87 \pm 2.28 \pm 1.23$ \\
    ResNet-50 & Quadratic &$\mathbf{59.18} \pm 3.23 \pm 2.69$ &  $\mathbf{69.35} \pm 1.29 \pm 0.35$ \\
    \hline
    WRN-16-10 & Constant & $60.11 \pm 3.04 \pm 2.31$  & $66.23 \pm 2.26  \pm 0.75$ \\
    WRN-16-10 & Quadratic &$\mathbf{61.60} \pm 3.19 \pm 0.49$  & $ \mathbf{67.19} \pm 2.28 \pm 0.90 $ \\
    \hline
    PyramidNet-50 & Constant & $47.40 \pm 3.49 \pm 1.25$  & $58.89 \pm 3.23  \pm 2.71$ \\
    PyramidNet-50 & Quadratic &$\mathbf{52.95} \pm 3.48 \pm 2.69$  & $ \mathbf{63.88} \pm 1.73 \pm 1.22 $ \\
    \hline
  \end{tabular}
\end{table}

\section{Tiny-Imagenet and ImageNet-100 Experiments}
\label{sec:tiny-imagenet}

As further evidence for the generality of our results, we perform experiments on both the intermediate-scale Tiny-ImageNet dataset and the large-scale ImageNet-100 dataset. First, we explore optimization effects, plotting the gradient norm and gradient sharpness evolution during training on both Tiny-ImageNet and ImageNet-100, shown in Fig. \ref{fig:tiny_imagenet_comp} and Fig. \ref{fig:imagenet_100_comp}, respectively. As in the case of Cifar-100, we see the same evidence for the flatness-induced vanishing gradient, which manifests in a more pronounced way for SAM training. Next, we perform experiments to understand generalization effects, evaluating the noisy generalization of SGD, SAM and RWP across a variety of noise settings on Tiny-ImageNet, shown in Tab. \ref{table:tiny_imagenet_generalization}. In doing so, we remark on several trends: first, we consistently see that an over-regularized RWP training objective improves generalization, as $\sigma_{train}=0.08$ is the best-performing RWP model across all the noise settings. Second, we again see that SAM generalizes well in the low-noise case, but cannot scale to increasing perturbation strengths. Examining the effects of applying dynamic perturbation schedules, we see that both SAM and RWP's performance are broadly improved- notably, we see that ramping perturbations allows both optimizers to converge to lower-loss training minima, confirming the optimization benefit this confers.
\begin{figure}
    \begin{subfigure}{.5\textwidth}
        \centering
        \includegraphics[width=1.0\linewidth]{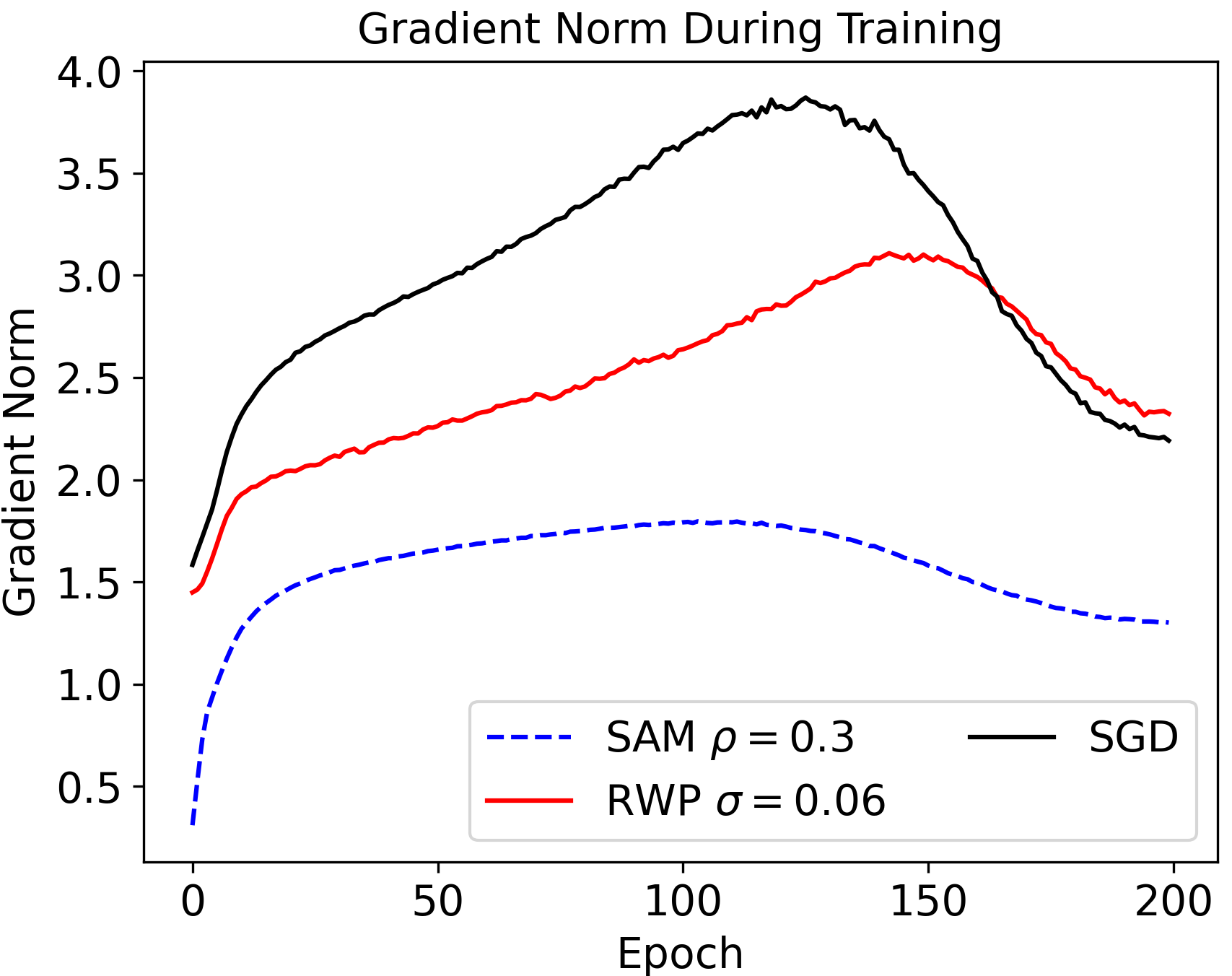}
        \caption{}
        \label{fig:grad_norm_tiny_imagenet}
    \end{subfigure}%
    \begin{subfigure}{.5\textwidth}
      \centering
      \includegraphics[width=1.0\linewidth]{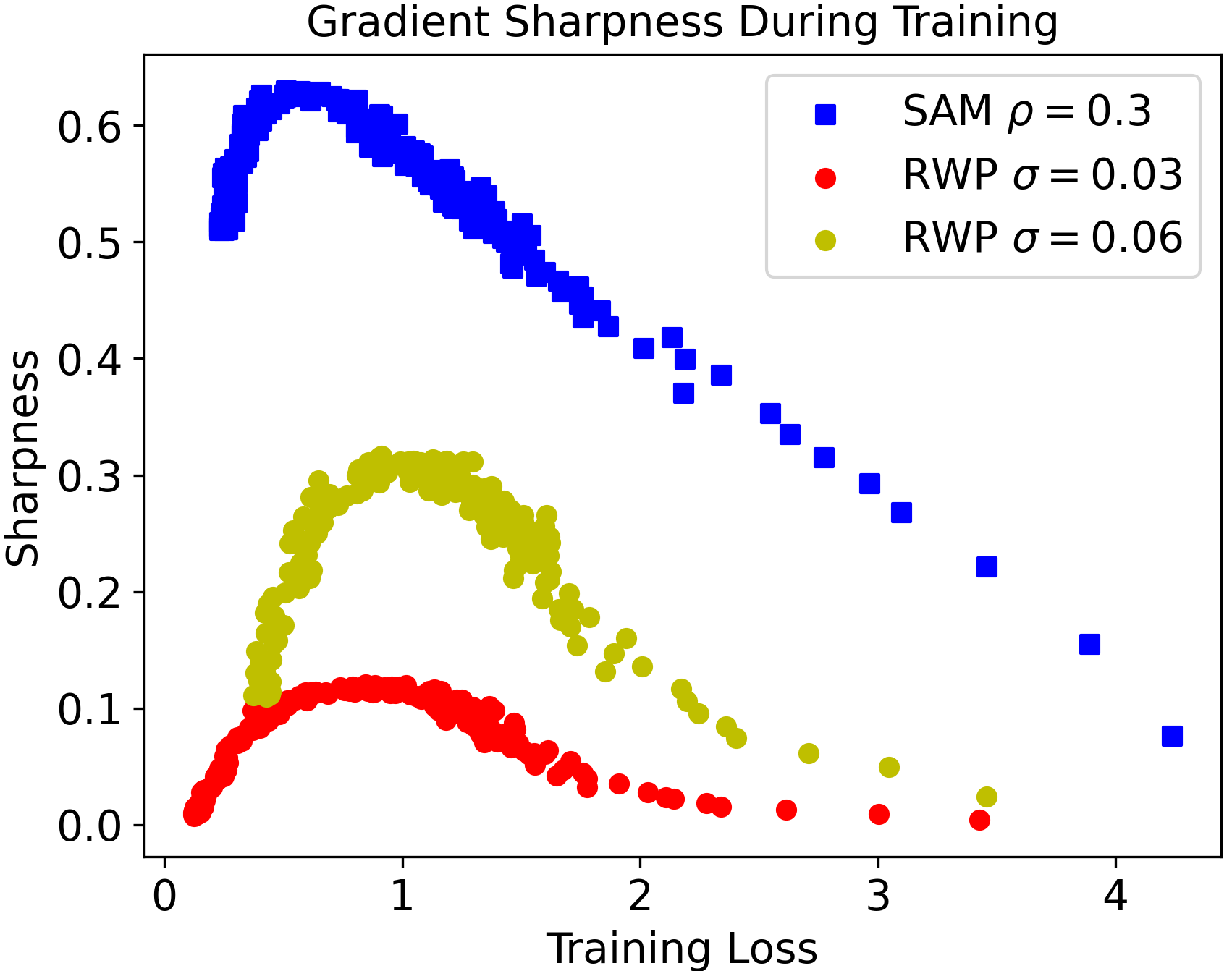}
      \caption{}
      \label{fig:grad_sharpness_tiny_imagenet}
    \end{subfigure}
    \caption{Plots of (a) gradient norm and (b) gradient sharpness for ResNet-18 during Tiny-Imagenet training.}
    \label{fig:tiny_imagenet_comp}
\end{figure}

\begin{figure}
    \begin{subfigure}{.5\textwidth}
      \centering
      \includegraphics[width=0.9\linewidth]{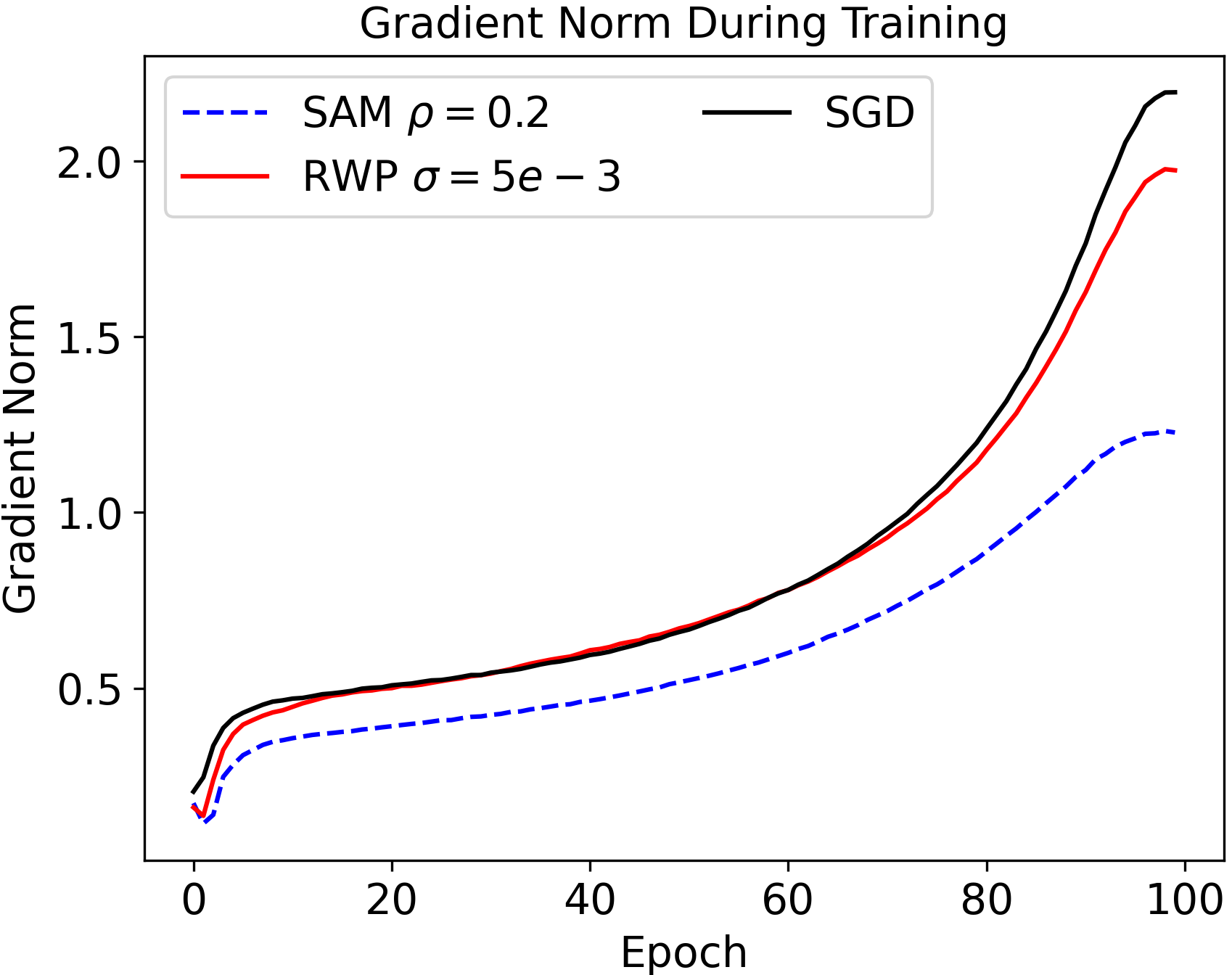}
      \caption{}
      \label{fig:imagenet_grad_norm}
    \end{subfigure}%
    \begin{subfigure}{.5\textwidth}
      \centering
      \includegraphics[width=0.9\linewidth]{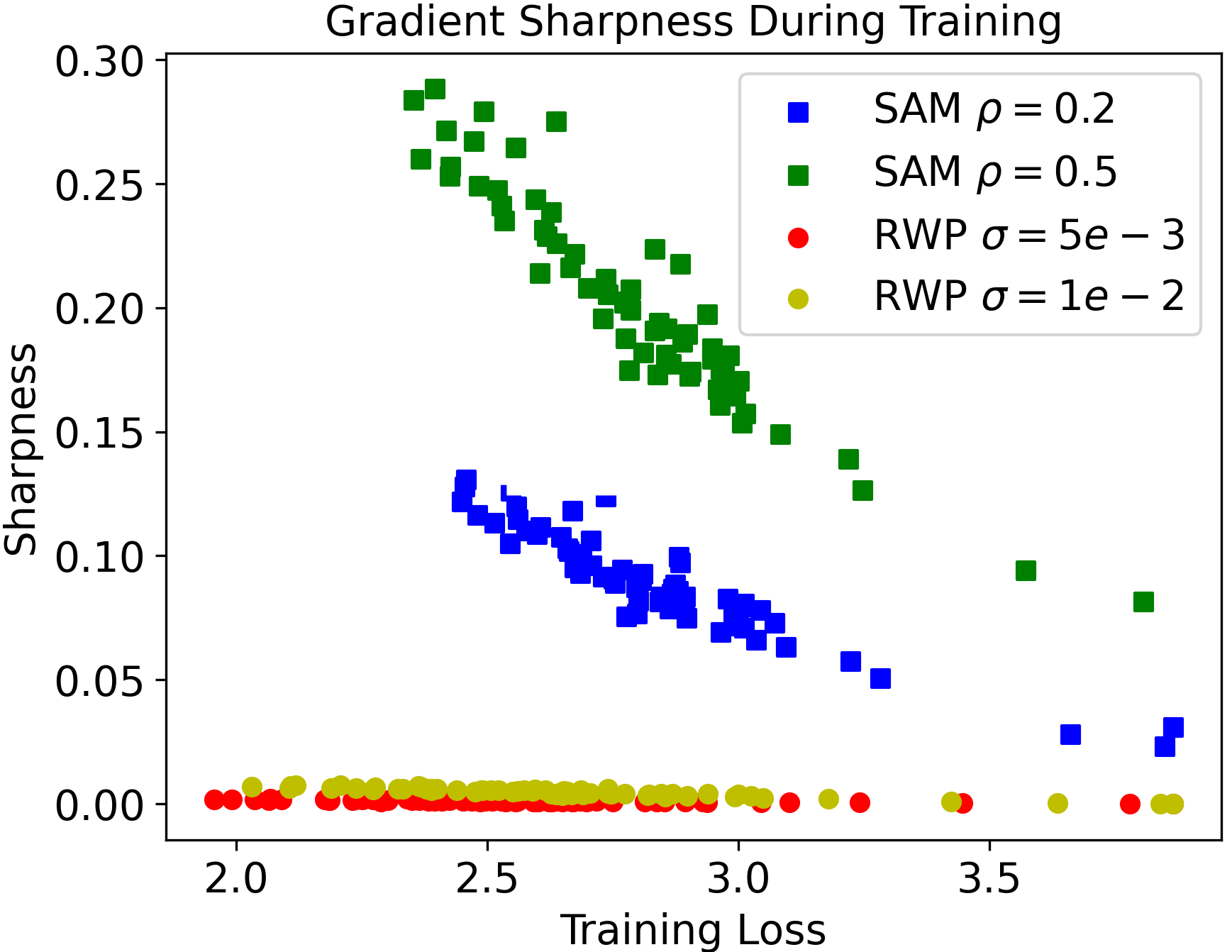}
      \caption{}
      \label{fig:imagenet_sharpness}
    \end{subfigure}
    \caption{Plots of (a) gradient norm and (b) gradient sharpness for ResNet-18 during ImageNet-100 training.}
    \label{fig:imagenet_100_comp}
\end{figure}

\begin{table}
\centering
\caption{Comparison of training and test accuracies on Tiny-ImagNet for various RWP and SAM models.}
\label{table:tiny_imagenet_generalization}
\resizebox{\textwidth}{!}{\begin{tabular}{c|cc|cc}
\hline
Setting
  & \multicolumn{2}{c|}{$\sigma=0.02$}
  & \multicolumn{2}{c}{$\sigma=0.03$} \\
 & Train Acc. & Test Acc.
 & Train Acc. & Test Acc.\\
 \hline
 $\sigma_{train}=0.02$   & $\mathbf{99.85}\pm0.03\pm0.02$ & $63.27\pm0.29\pm0.15$
   & $98.72\pm0.39\pm0.14$ & $59.70\pm0.52\pm0.19$\\
  $\sigma_{train}=0.03$  & $99.80\pm0.04\pm0.01$ & $63.52\pm0.34\pm0.28$
   & $98.61\pm0.41\pm0.30$ & $60.14\pm0.68\pm0.42$\\
  $\sigma_{train}=0.04$   & $99.71\pm0.05\pm0.02$ & $63.28\pm0.37\pm0.22$
   & $\mathbf{98.78}\pm0.45\pm0.16$ & $60.53\pm0.71\pm0.18$\\
  $\sigma_{train}=0.05$  & $99.21\pm0.07\pm0.09$ & $63.46\pm0.32\pm0.57$
  & $97.74\pm0.46\pm0.30$ & $60.54\pm0.62\pm0.43$\\
  $\sigma_{train}=0.06$  & $96.56\pm0.18\pm0.32$ & $64.09\pm0.36\pm0.13$
  & $94.23\pm0.65\pm0.52$ & $61.42\pm0.63\pm0.16$\\
  $\sigma_{train}=0.08$  & $82.11\pm0.43\pm1.76$ & $64.17\pm0.44\pm0.46$
  & $79.59\pm0.84\pm1.96$ & $62.19\pm0.70\pm0.58$\\
  \hline
  $\sigma_{train}=0.08$ ramp  & $88.88\pm0.41\pm0.55$ & $65.73\pm0.40\pm0.18$
  & $86.72\pm0.79\pm0.57$ & $\mathbf{63.82}\pm0.68\pm0.13$\\
  \hline
  $\rho=0.4$  & $89.96\pm3.51\pm4.21$ & $64.89\pm2.55\pm3.27$
  & $69.41\pm10.29\pm18.37$ & $50.63\pm7.31\pm12.01$\\
  $\rho=0.5$  & $81.18\pm4.28\pm6.01$ & $63.38\pm2.98\pm4.33$
  & $56.03\pm11.42\pm23.02$ & $44.95\pm8.19\pm16.80$\\
  \hline
  $\rho=0.4$ ramp & $93.27\pm0.40\pm0.88$ & $67.01\pm1.00\pm0.49$
  & $82.21\pm9.08\pm4.67$ & $58.21\pm5.84\pm3.10$\\
  $\rho=0.5$ ramp & $87.94\pm0.68\pm0.34$ & $\mathbf{67.22}\pm0.59\pm0.39$
  & $77.40\pm5.26\pm5.93$ & $58.88\pm3.62\pm4.39$\\
  \hline
\end{tabular}}
\resizebox{\textwidth}{!}{\begin{tabular}{c|cc|cc}
\hline

  & \multicolumn{2}{c|}{$\sigma=0.04$}
  & \multicolumn{2}{c}{$\sigma=0.05$} \\
 & Train Acc. & Test Acc.
 & Train Acc. & Test Acc.\\
 \hline
 $\sigma_{train}=0.02$   & $91.94\pm1.90\pm0.24$ & $53.54\pm1.21\pm0.31$
  & $72.87\pm4.70\pm0.96$ & $43.65\pm2.32\pm0.65$ \\
  $\sigma_{train}=0.03$  & $92.05\pm1.95\pm1.50$ & $54.44\pm1.24\pm0.76$
  & $74.45\pm4.65\pm3.58$ & $45.28\pm2.31\pm1.45$ \\
  $\sigma_{train}=0.04$   & $\mathbf{94.40}\pm1.85\pm0.50$ & $55.95\pm1.42\pm0.27$
  & $\mathbf{82.04}\pm4.37\pm1.11$ & $48.82\pm2.41\pm0.54$ \\
  $\sigma_{train}=0.05$  & $92.15\pm1.75\pm1.14$ & $55.81\pm1.23\pm0.24$
  & $78.81\pm4.08\pm2.63$ & $48.54\pm2.18\pm0.70$ \\
  $\sigma_{train}=0.06$  & $88.41\pm1.92\pm1.10$ & $57.20\pm1.08\pm0.37$
  & $76.95\pm3.77\pm2.05$ & $50.68\pm1.90\pm0.82$ \\
  $\sigma_{train}=0.08$  & $75.28\pm1.59\pm2.29$ & $59.01\pm1.16\pm0.79$
  & $68.47\pm2.83\pm2.79$ & $54.26\pm1.95\pm1.24$ \\
  \hline
  $\sigma_{train}=0.08$ ramp  & $82.71\pm1.46\pm0.65$ & $\mathbf{60.78}\pm1.13\pm0.21$
  & $75.90\pm2.60\pm0.79$ & $\mathbf{56.25}\pm1.75\pm0.15$\\
  \hline
  $\rho=0.4$  & $36.21\pm14.40\pm20.03$ & $28.22\pm10.12\pm14.47$
  & $12.82\pm10.65\pm8.20$ & $11.17\pm8.38\pm6.89$ \\
  $\rho=0.5$  & $41.69\pm4.44\pm14.84$ & $39.41\pm4.17\pm12.95$
  & $30.95\pm2.67\pm20.43$ & $29.60\pm2.47\pm19.11$\\
  \hline
  $\rho=0.4$ ramp & $60.04\pm2.30\pm1.83$ & $51.70\pm1.84\pm1.28$
  & $51.64\pm4.04\pm3.26$ & $45.48\pm1.82\pm0.93$\\
  $\rho=0.5$ ramp & $59.63\pm2.07\pm0.79$ & $52.24\pm1.57\pm0.49$
  & $52.62\pm3.20\pm0.95$ & $46.77\pm2.49\pm0.57$\\
  \hline
\end{tabular}}
\end{table}

\section{Dynamic Schedule Details}
\label{sec:schedules}


\subsection{Equations Describing Perturbation Schedules}
The exact equations governing the evolution of the perturbation strenght for a quadratic and linear schedule, respectively, are shown below:
\begin{align}
    \sigma_{t,quad}^2 = \sigma_{\max}^2\,\bigl(\tfrac{\min (t,T^*)}{T^*}\bigr)^{2}\\
    \sigma_{t,linear}^2 = \sigma_{\max}^2\,\bigl(\tfrac{\min (t,T^*)}{T^*}\bigr)
\end{align}
where $T^*$ is the total iterations in the warmup period, and $\sigma_{max}$  is the maximal achieved value of the perturbation strength.

\subsection{Finding the Optimal Perturbation Schedule}
To determine the optimal schedule hyperparameters (namely the maximal perturbation strength and number of warm-up iterations), we perform a grid search, shown in Tab. \ref{table:noise_schedule_rwp_sigma} and Tab. \ref{table:noise_schedule_sam_rho} for RWP and SAM, respectively. For both SAM and RWP, we find that the optimal perturbation strength is increased relative to the non-perturbed optimization. Additionally, we find that in the case of SAM, an aggressive ramping schedule is preferred, reaching terminal perturbation strength within the first 25\% of training iterations. This is consistent with the fact that SAM's optimal noise-robustness is achieved long before model convergence, and thereby requiring an accelerated schedule. In the case of RWP, optimal noise-robustness is always achieved at convergence- hence, the optimal ramp length is much longer (around 55\% of total training iterations).

\begin{table}
  \caption{Comparison of ResNet-18 models on $\sigma_{test}=0.07$ trained using SAM on Cifar-100 for varying perturbation strength $\rho$ when a quadratic ramp schedule is applied.}
  \centering
  \label{table:noise_schedule_sam_rho}
  \begin{tabular}{c|c|c|c}
    \hline
     &  $iter = 15000$  &   $iter = 20000$ &  $iter = 25000$ \\
    \hline
    $\rho_{max}=0.8$ & $ 57.64 \pm  3.01 \pm 1.80$  & $59.71 \pm 1.93 \pm 1.66$& $ 58.35 \pm 3.81 \pm 0.74$ \\
    $\rho_{max}=1.0$ & $ 57.54 \pm 2.34 \pm 0.52$ & $\mathbf{60.24} \pm 1.98 \pm 0.83$ & $ 59.13 \pm  2.40 \pm 0.68$  \\
    $\rho_{max}=1.2$ & $57.07 \pm 1.92 \pm 0.33$ & $58.83 \pm 1.78 \pm 1.14$ & $56.97 \pm  1.92 \pm 1.87$  \\
    \hline
  \end{tabular}
\end{table}

\begin{table}
  \caption{Comparison of ResNet-18 models on $\sigma_{test}=0.07$ trained using RWP on Cifar-100 for varying perturbation strength $\sigma_{train}$ when a quadratic ramp schedule is applied.}
  \centering
  \label{table:noise_schedule_rwp_sigma}
  \begin{tabular}{c|c|c|c}
    \hline
     &  $iter = 35000$  &   $iter = 45000$ &  $iter = 55000$ \\
    \hline
    $\sigma_{max}=0.08$ & $ 61.74 \pm 3.08 \pm 0.52$ & $63.32 \pm 2.25 \pm 0.74$ &  $62.55 \pm 2.71 \pm 1.61$ \\
    $\sigma_{max}=0.1$ & $ 63.03 \pm 2.41 \pm 0.93 $  & $ \mathbf{65.57} \pm 2.05 \pm 0.64$& $ 64.93 \pm 1.96 \pm 1.00$ \\
    $\sigma_{max}=0.12$ & $ 59.37 \pm 2.16 \pm 1.89$ & $62.50 \pm 1.87 \pm 0.96$ & $63.41 \pm 2.25 \pm 1.78$  \\
    \hline
  \end{tabular}
\end{table}

\subsection{Visualizing Ramped Perturbation Sharpness}

To understand the effect perturbation schedules have on sharpness evolution during training, we plot both gradient sharpness and average-direction sharpness of the quadratic schedule SAM and RWP, comparing against the constant-perturbation versions (Fig. \ref{fig:sharpness_schedule}). Here, we observe a coherent trend: in the early stages of training, the use of the ramp schedules reduces gradient sharpness significantly, after which the growing perturbations cause the gradient sharpness to surge. In the case of average-direction sharpness, shown in Fig. \ref{fig:avg_sharpness_schedule}, we see that while training using the schedules increases the training loss at convergence, sharpness is reduced compared to the constant perturbation baselines. This is especially noticeable in the case of SAM: we recall from Fig. \ref{fig:sharpness_07} that increasing constant-perturbation $\rho$ from 0.2 to 0.5 \textit{has no effect on reducing average-direction sharpness}. However, when using the quadratic perturbation schedule, a drastic reduction in sharpness can now be achieved through large perturbations.

\begin{figure}
    \begin{subfigure}{.5\textwidth}
        \centering
        \includegraphics[width=1.0\linewidth]{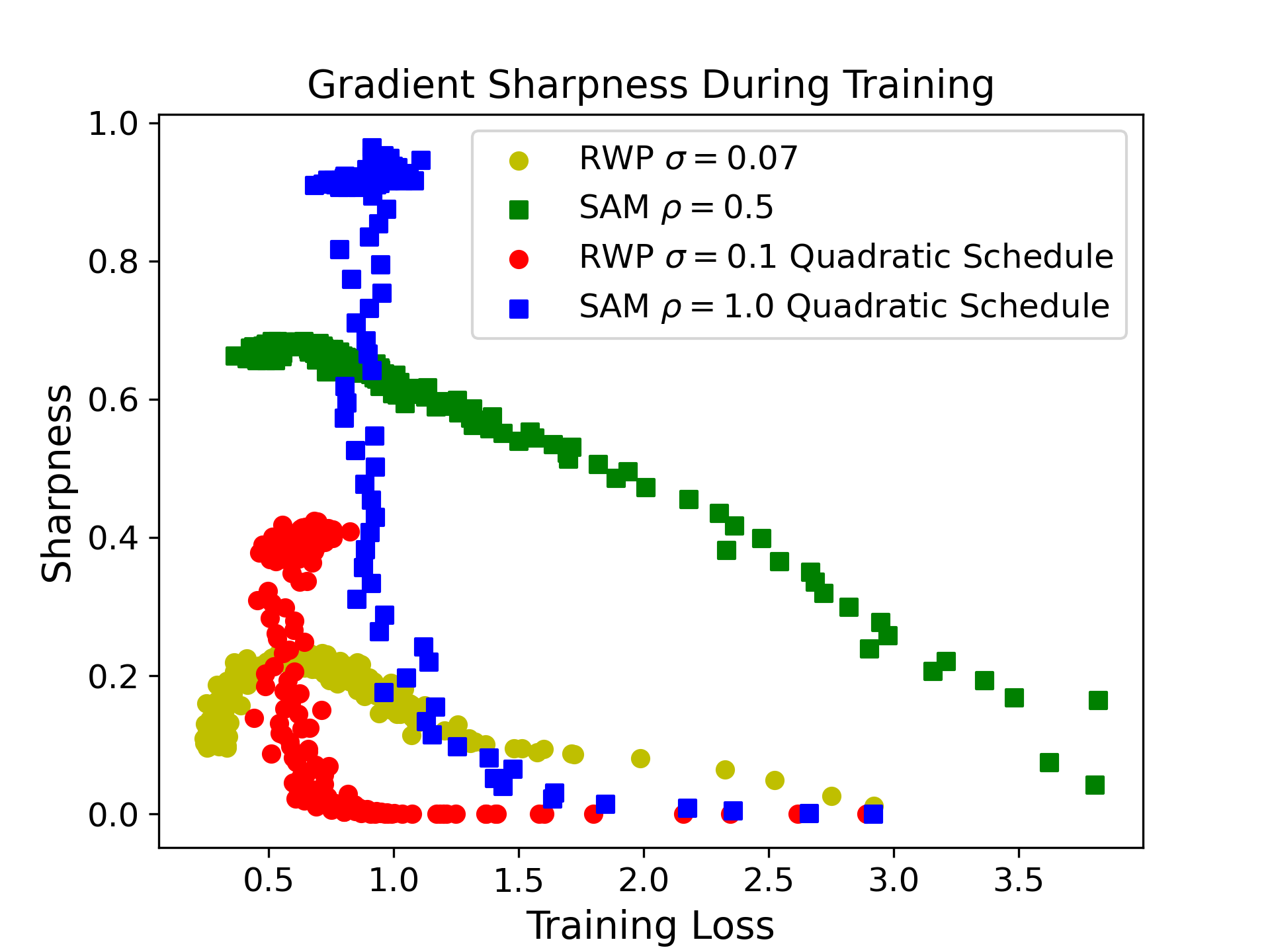}
        \caption{}
        \label{fig:grad_sharpness_schedule}
    \end{subfigure}%
    \begin{subfigure}{.5\textwidth}
      \centering
      \includegraphics[width=1.0\linewidth]{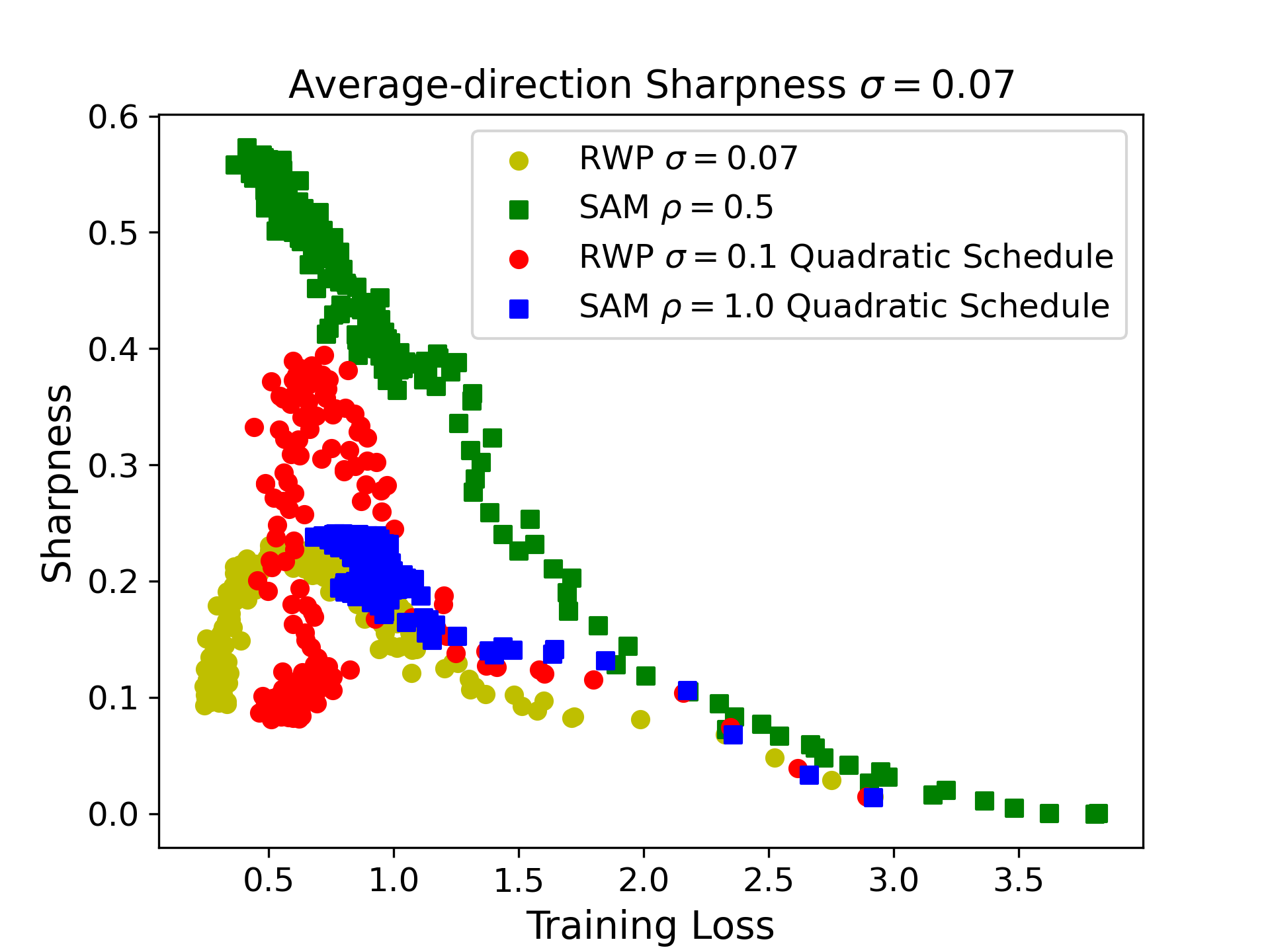}
      \caption{}
      \label{fig:avg_sharpness_schedule}
    \end{subfigure}
    \caption{Plots of (a) gradient sharpness and  (b) ascent-direction sharpness as a function of Cifar-100 training loss for a ResNet-18 trained using quadratic schedule SAM and RWP.}
    \label{fig:sharpness_schedule}
\end{figure}

\subsection{Alternative View of Landscape Broadening}

To provide another view of how the dynamic perturbation schedule affects the training dynamics, we plot the perturbed training loss as a function of epoch comparing the quadratically-ramped/constant schedules with equivalent terminal perturbation strengths for both RWP and SAM, shown in Fig. \ref{fig:perturbed_loss_ramp}. In these plots, we observe that for the dynamic perturbation schedules, even after the initial warm-up period (meaning the size of the perturbation has reached the terminal value), the perturbed training loss is \textit{lower} than that of the constant schedule. This difference is particularly stark for SAM: at the final epoch of the ramp, there is \~0.75 difference in perturbed loss between the two trajectories. This clearly illustrates the contrast in evolution of the two trajectories, as the dynamic schedules guide the optimization towards a flatter basin than would otherwise be encountered.

\begin{figure}
    \begin{subfigure}{.5\textwidth}
      \centering
      \includegraphics[width=1.0\linewidth]{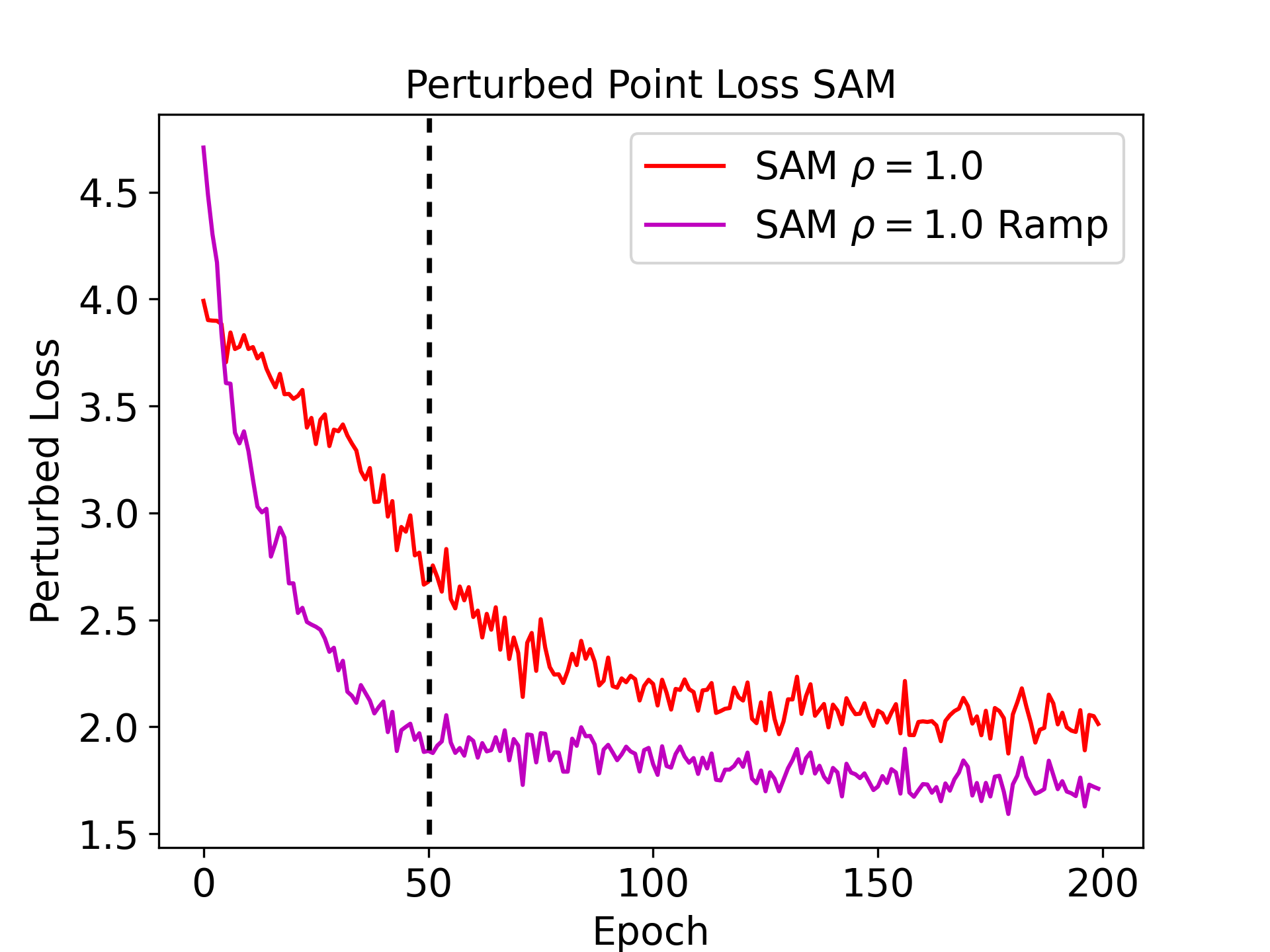}
      \caption{}
      \label{fig:perturbed_loss_ramp_sam}
    \end{subfigure}%
    \begin{subfigure}{.5\textwidth}
      \centering
      \includegraphics[width=1.0\linewidth]{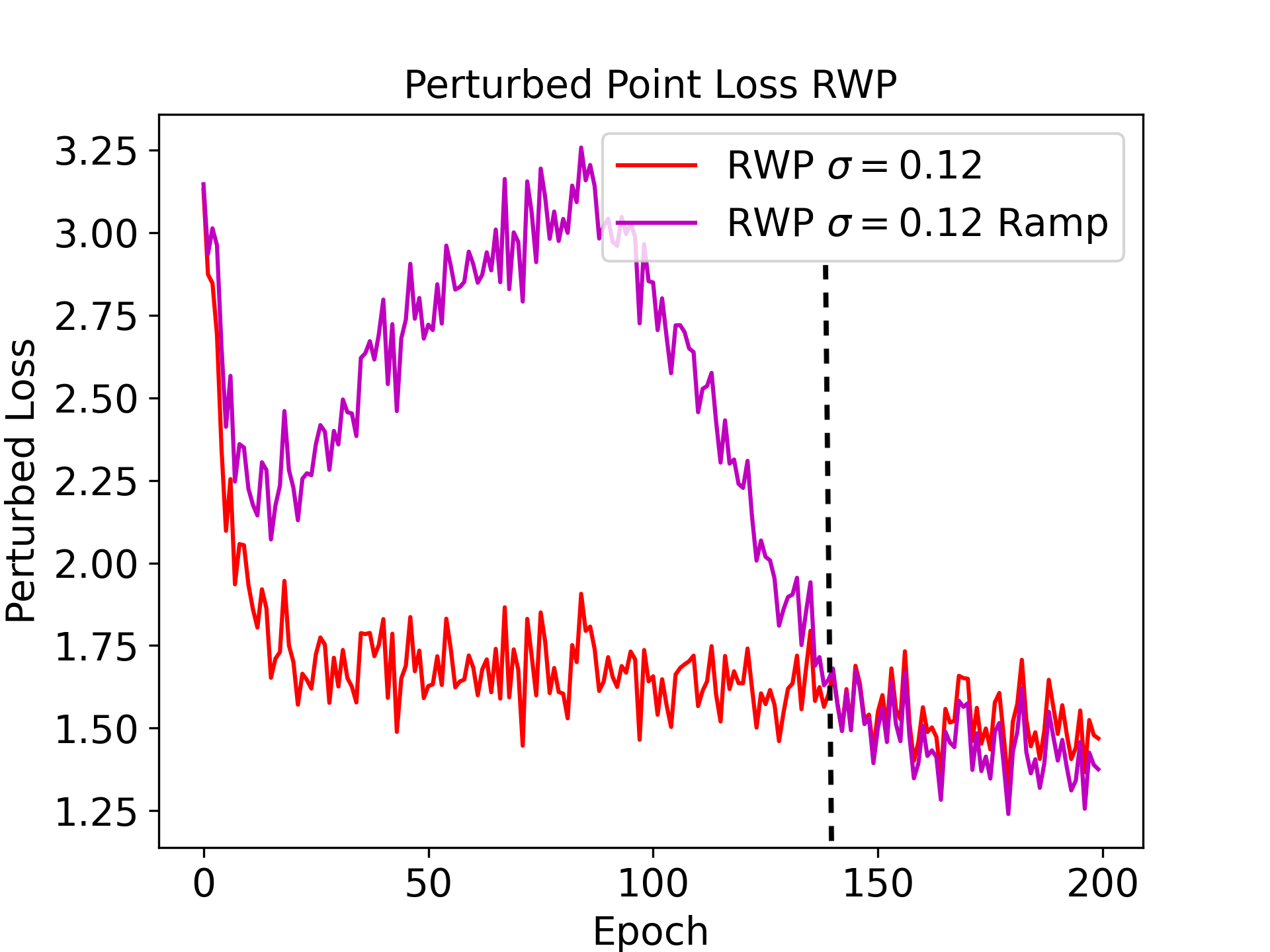}
      \caption{}
      \label{fig:perturbed_loss_ramp_rwp}
    \end{subfigure}
    \caption{Comparison of the perturbed loss $L(w_p)$ for ramped/constant schedules of (a) SAM and (b) RWP on Cifar-100. The perturbed loss is calculated at a constant perturbation distance (equal to the maximum perturbation), including for the dynamic schedules. Dotted black line denotes epoch at which ramped perturbation strength reaches strength equal to constant perturbation.}
    \label{fig:perturbed_loss_ramp}
\end{figure}

\end{document}